\documentclass[10pt]{article}
\usepackage{amsmath,amsfonts,amsthm,mathrsfs}
\usepackage{enumerate,anysize,color}
\usepackage{graphicx}
\usepackage{authblk}

\definecolor{darkred}{RGB}{100,0,0}
\definecolor{darkgreen}{RGB}{0,100,0}
\definecolor{darkblue}{RGB}{0,0,150}

\usepackage{hyperref}

\hypersetup{colorlinks=true, linkcolor=red, citecolor=blue, urlcolor=darkgreen}
\usepackage{url}

\numberwithin{equation}{section}
\newtheorem{thm}{Theorem}[section]
\newtheorem{lemma}[thm]{Lemma}
\newtheorem{pro}[thm]{Proposition}
\newtheorem{corollary}[thm]{Corollary}

\newtheorem{as}[thm]{Assumption}

\newtheorem{rem}[thm]{Remark}
\newtheorem{ex}{Example}[section]

\newcommand{\be}{\begin{equation}}
\newcommand{\ee}{\end{equation}}
\newcommand{\bea}{\begin{eqnarray*}}
\newcommand{\eea}{\end{eqnarray*}}
\newcommand{\mR}{\mathbb{R}}
\newcommand{\mN}{\mathbb{N}}
\newcommand{\mE}{\mathbb{E}}
\newcommand{\mcHK}{\mathcal{H}_K}
\newcommand{\mcE}{\mathcal{E}}

\DeclareMathOperator*{\argmin}{arg\,min}

\newcommand{\la}{\langle}
\newcommand{\ra}{\rangle}

\usepackage{anysize}

\begin{document}
\title{Iterative Regularization
for Learning with
Convex Loss Functions}

\renewcommand\Affilfont{\footnotesize}

\author[*]{Junhong Lin}
\author[$\dag, \circ$]{Lorenzo Rosasco}
\author[*]{Ding-Xuan Zhou}
\affil[*]{Department of Mathematics, City University of Hong Kong,
Kowloon, Hong Kong, China \authorcr
[junholin,mazhou]@cityu.edu.hk}
\affil[$\circ$]{DIBRIS, Universit\'a degli Studi di Genova, Genova 16146, Italy}
\affil[$\dag$]{LCSL, Istituto Italiano di Tecnologia and Massachusetts Institute of Technology, Cambridge, MA 02139, USA \authorcr
lrosasco@mit.edu}

\maketitle \baselineskip 16pt

\begin{abstract}
  We consider the problem of supervised  learning with convex loss functions and propose a new form of iterative regularization  based on  the subgradient method. Unlike other regularization approaches, in iterative regularization no constraint or penalization is  considered, and  generalization is achieved by (early) stopping an empirical iteration.  We consider a nonparametric setting, in the framework of reproducing kernel Hilbert spaces, and prove finite sample bounds on the excess risk under general regularity conditions. Our  study provides a  new class of efficient regularized learning algorithms and  gives  insights on the interplay between statistics and optimization in machine  learning.
\end{abstract}

\section{Introduction}

 Availability of large high-dimensional data-sets  has  motivated
an interest in the interplay between statistics and optimization, towards developing new, more efficient  learning solutions \cite{Bousquet08}.  Indeed,  while much theoretical work has been  classically devoted to study  statistical properties of estimators defined by variational schemes (e.g. Empirical Risk Minimization \cite{Vapnik98} or Tikhonov regularization \cite{Tikhonov1977}),  and to  the  computational  properties of optimization procedures to solve the corresponding minimization problems (see e.g.  \cite{Sra11}), much less  work has  considered the integration of statistical and optimization aspects, see for example \cite{ChaJor13,MW14,Pistol}.

With the latter objective  in mind, in  this paper, we focus on so called iterative regularization. This class of methods,  originated in a series of work in the mid-eighties \cite{Nem86, Pol87}, is based on the observation that  early termination of an  iterative optimization scheme applied to an ill-posed problem has a regularization effect. A critical implication of this fact is that the number of iterations serves as a regularization parameter,  hence linking modeling and computational aspects: computational resources are directly linked to the generalization properties in the data, rather than their raw amount. Further, iterative regularization algorithms have a built-in  "warm restart" property which allows to compute automatically a whole sequence (path) of solutions corresponding to different levels of regularization. This latter property is especially relevant to efficiently determine the appropriate regularization via model selection.

Iterative regularization techniques are well known in the solution of inverse problems,  where several variants have been studied, see  \cite{Engl1996,KNO08} and references therein. In machine learning, iterative regularization is often simply referred to  as early stopping and is  a well known "trick", e.g. in training neural networks \cite{LeCun98}.  Theoretical studies of iterative regularization in machine learning  have mostly focused on the least squares loss function \cite{Buhlmann03,YRC07,Blanchard10,Raskutti14}. Indeed, it is in this latter case that the connection to inverse problems can be made precise \cite{lip}. Interestingly,  early stopping with the square loss has been shown to be related to boosting  \cite{Buhlmann03}  and also to be a  special case of a large class of regularization approaches based on  spectral filtering \cite{Gerfo2008,bauer}.  The regularizing effect of early stopping for loss functions other than the least squares one has hardly been studied. Indeed, to the best of our knowledge the only papers  considering  related ideas are
\cite{Bartlett07,Bickel06,Jiang04,Zhang05}, where early stopping is studied  in the context of boosting algorithms.

This paper is a different step towards understanding how early stopping can be employed with general convex loss functions. Within a statistical learning setting,  we consider convex loss functions and propose a new form of iterative regularization  based on the subgradient method,  or  the  gradient descent if the loss is  smooth. The resulting algorithms provide iterative regularization alternatives to support vector machines or regularized logistic regression, and have  built in the property of  computing the whole regularization path. Our primary contribution in this paper is  theoretical. By integrating  optimization and statistical results, we establish non-asymptotic bounds quantifying the generalization properties of the  proposed method under standard regularity assumptions. Interestingly, our study shows that considering  the last iterate leads to  essentially the same results as considering
averaging, or selecting of  the "best" iterate, as typically done in subgradient methods \cite{boyd}. From a technical point of view, considering a general convex loss  requires  different error decompositions than those for the square loss. Moreover, operator theoretic techniques need to be
replaced by convex analysis and empirical process  results. The error decomposition we consider, accounts for the contribution of  both optimization and statistics to the error, and could be useful also for other methods.

The rest of the paper is organized as as follows. We begin in Section \ref{LearningAlg} by briefly recalling the supervised learning problem, and then introduce our learning algorithm, discuss  its numerical realization.
In Section \ref{MainAnalysis}, after discussing the assumptions that underlie our analysis, we present our main theorems with discussions and discuss the general error decomposition which are composed of three error terms: the computational, the sample and approximation error terms. In Section \ref{ProofResultConvexLoss}, we will estimate computational error, while
in Section \ref{sectionSampleError}, we develop sample error bounds, and finally prove our main results.

\section{Learning Algorithm}\label{LearningAlg}
After briefly recalling  the supervised learning problem, we introduce the algorithm we propose and give some comments on its numerical realization.

\subsection{Problem Statement}\label{SubsectionProbSta}

In this paper we consider the problem of supervised learning.
Let $X$ be a separable metric space, $Y\subseteq \mR$ and let $\rho$ be a Borel probability
measure on $Z=X\times Y.$ Moreover, let  $V: \mR \times \mR \to \mR_{+}$ be a so called loss function,  measuring  the
{\em local} error $V(y, f(x))$ for $(x, y) \in Z$ and  $f: X \to \mR$. The \emph{generalization error (or expected risk)} $\mathcal{E} = \mathcal{E}^{V}$ associated to  $V$ is given by
$$\mathcal{E} (f)=\int_Z V(y, f(x)) d\rho,$$ and is well defined for any measurable loss function $V$ and measurable function $f$.
We  assume throughout that there exists a function  $f_\rho^V$ that minimizes  the expected error $\mathcal{E}(f)$
among all measurable functions $f: X \to Y$. Roughly speaking,  the goal of learning is to find
an approximation  of $f_\rho^V$ when the measure  $\rho$ is known only through
 a sample $\mathbf z=\{z_i=(x_i, y_i)\}_{i=1}^m$ of size $m\in\mN$ independently and identically drawn according to $\rho$.
More precisely, given $\mathbf z$ the goal is to design a computational procedure to efficiently estimate
a function $f_\mathbf z$, an estimator, for which it is possible to derive an explicit probabilistic
bound on  the excess expected risk
$${\mathcal E}(f_{\mathbf z}) - {\mathcal E}(f_\rho^V).
$$
%
We end with this section with a remark and an example.
\begin{rem}
For several loss functions, it is possible to show that $f_\rho^V$ exists-- see example below. However, as will be seen in the following,
the search for an estimator in practice is often restricted to some hypothesis space
$\mathcal H$ of measurable functions. In this case  one should replace ${\mathcal E}(f_\rho^V)$ by $\inf_{f\in \mathcal H} {\mathcal E}(f)$.
Interestingly, examples of  hypothesis spaces are known for which    ${\mathcal E}(f_\rho^V)=\inf_{f\in \mathcal H} {\mathcal E}(f)$, namely universal hypothesis spaces \cite{SteinwartChristmann2008a}.  In the following, we consider ${\mathcal E}(f_\rho^V)$, with the understanding that it should be replaced by the infimum over $\mathcal H$, if the latter is not universal.
\end{rem}
\noindent The following example gives several possible choices of loss functions.
\begin{ex}\label{ExampleLoss}
The most classical example of loss function is probably  the square loss $V(y,a)=(y-a)^2$, $y,a\in \mR$. In this case, $f_\rho^V$ is the regression function,
defined at every point as  the expectation of the conditional distribution of $y$ given $x$ \cite{CuckerZhou,SteinwartChristmann2008a}. Further examples include the absolute value loss $V(y,a)=|y-a|$
for which $f_\rho^V$ is the median of the conditional distribution and more generally $p$-loss functions $V(y,a)=|y-a|^p$, $p \in \mN$.
Vapnik's $\epsilon$-insensitive loss $V(y,a)=\max\{|y-a|-\epsilon,0\}$, $\epsilon>0$
and its generalizations  $V(y,a)=\max\{|y-a|^p-\epsilon,0\}$, $\epsilon>0, p>1$ provide yet other examples.
For classification i.e. $Y=\{\pm 1\}$, other examples of loss functions used in classification,  include the hinge loss  $V(y,a)=\max\{1-ya,0\}$ , the logistic loss
 $V(y,a)=\log(1+e^{-ya})$  and the exponential  loss $V(y,a)=e^{-ya}$. For all these examples $f_\rho^V$ can be computed,  see e.g. \cite{SteinwartChristmann2008a}, and measurability is easy to check.
\end{ex}

\subsection{Learning via Subgradient Methods with Early Stopping}

 To  present  the proposed learning algorithm we need a few preliminary definitions.
 Consider a reproducing kernel $K: X \times X \to \mR$, that is a symmetric function, such that the matrix $(K(u_i, u_j))_{i, j=1}^\ell$
is positive semidefinite for any finite set of points $\{u_i\}_{i=1}^\ell$ in $X$. Recall that
a reproducing kernel $K$ defines a reproducing kernel Hilbert space (RKHS)
$(\mathcal{H}_K, \|\cdot\|_K)$ as the
completion of the linear span of the set
$\{K_x(\cdot):=K(x,\cdot): x\in X\}$ with respect to the inner
product $\la K_x, K_u\ra_{K}:=K(x,u) \cite{aronszajn50}$. Moreover,  assume the loss function $V$ to be measurable and convex in its second argument, so that the corresponding left derivative $V_-'$ exists
and is  non-decreasing at every point. For  a step size sequence $\{\eta_{t}>0\}$, a stopping iteration  $T >2$ and a initial value
 $f_1 =0$,  we consider the iteration
\be\label{Alg}
f_{t+1}=f_t-\eta_t\frac{1}{m}\sum_{j=1}^{m} V'_- (y_j, f_t(x_j)) K_{x_j}, \qquad t=1, \ldots, T.
\ee
The above iteration  corresponds to the subgradient  method \cite{Bertsekas99,Boyd03} for minimizing  the
\emph{empirical error} $\mcE_{\mathbf{z}} = \mcE_{\mathbf{z}}^{V}$ with respect to the loss $V$, which is given
by
$$\mcE_{\mathbf{z}} (f)=\frac{1}{m}\sum_{j=1}^{m}V(y_j, f(x_j)). $$
Indeed, it is easy to see that $\frac{1}{m}\sum_{j=1}^{m} V'_- (y_j, f(x_j)) K_{x_j}\in \partial \mcE_{\mathbf{z}} (f)$,
the subgradient of the empirical risk for  $f\in \mathcal H _K$. In the special case where the loss function is smooth then \eqref{Alg} reduces to the
gradient descent algorithm. Since the subgradient method is not a descent algorithm,  rather then the last iterate,  the so called Ces\'aro mean is often considered,  corresponding, for $T \in \mN$, to the following weighted average
\begin{equation}\label{eq:aveite}
a_T = { \sum_{t=1}^{T} \omega_t f_t}, \quad\quad \omega_t = {\eta_t \over \sum_{t=1}^T \eta_t}, \quad t=1, \dots, T.
\end{equation}
Alternatively, the {\em best} iterate is also often considered, which is defined for $T \in \mN$ by
\begin{equation}\label{eq:bestite}
b_T = \argmin_{t=1,\cdots,T} \mathcal{E}_{\bf z} (f_t).
\end{equation}
 In what follows, we will  consider the learning algorithms obtained considering these different choices.

We note that, classical results \cite{Bertsekas99,Boyd03,boyd} on the subgradient method focus on  how the iteration~\eqref{Alg}
can be used to minimize $\mcE_{\mathbf{z}}$.  Different to these studies,  in the following  we are interested in showing how iteration~\eqref{Alg} can be used to define a {\em statistical estimator}, hence a learning algorithm to  minimize the expected risk $\mathcal E$, rather than  the empirical risk $\mcE_{\mathbf{z}}$.
We end with one remark.
\begin{rem}[Early Stopping SVM and Kernel Perceptron]
If we consider the hinge loss function in~\eqref{Alg},  the corresponding algorithm  is closely related to a batch (kernel) version of the perceptron \cite{perceptronr,perceptron},
where an entire pass over the data is done before updating the solution. Such an algorithm
can also be  seen as an early stopping version of Support Vector Machines \cite{SVM}.  Interestingly, in this case the whole regularization path is
computed incrementally albeit sparsity could be lost. We defer to a future work  the study of the practical implications of these observations.
\end{rem}

\subsection{Numerical Realization}
The simplest case to derive a numerical procedure from Algorithm \ref{Alg} is when $X=\mR^d$ for some $d\in \mN$ and
$K$ is the associated  inner product. In this case it is straightforward to see that $f_{t+1}(x)= w_{t+1}^\top x$ for all $x\in X$, with
$$
w_{t+1}=w_t-\eta_t\frac{1}{m}\sum_{j=1}^{m} V'_- (y_j, w_t^\top x_j ) x_j, \qquad t=1, \ldots, T,
$$
and $w_1=0$.

Beyond the linear kernel, it can be easily seen that  given  a finite {\em dictionary}
$$\{\phi_i:X\to \mR, i=1,\dots, p\},\qquad p\in \mN,$$
one can consider the kernel $K(x,x')=\sum_{i=1}^p  \phi_i(x')\phi_i(x)$. In this case, it holds  $f_{t+1}(x)=\sum_{i=1}^p w^i_{t+1} \phi_i(x)= w_{t+1}^\top\Phi(x)$, $\Phi(x)=(\phi_1(x), \dots, \phi_p(x))$
for all $x\in X$,  with
$$
w_{t+1}=w_t-\eta_t\frac{1}{m}\sum_{j=1}^{m} V'_- (y_j,  w_{t+1}^\top \Phi(x_j)  )\Phi(x_j), \qquad t=1, \ldots, T,
$$
and $w_1=0$.
Finally, for a general kernel  it is easy to prove by induction that $f_{t+1}(x)=\sum_{j=1}^m c_{t+1}^j K(x,x_j)$ for all $x\in X$, with
$$
c_{t+1}=c_t-\eta_t\frac{1}{m}g_t, \qquad t=1, \ldots, T,
$$
for $c_1=0$ and $g_t\in \mR^m$ with $g_t^i= V'_- (y_i, \sum_{j=1}^m c_{t}^j K(x_i,x_j))$. Indeed, The base case is straightforward to check and  moreover by the inductive hypothesis
$$
f_{t+1}=\sum_{j=1}^m c_{t}^j K_{x_j} -\eta_t\frac{1}{m}\sum_{j=1}^{m} V'_- (y_j, f_t(x_j)) K_{x_j}=
\sum_{j=1}^m  K_{x_j} \left(  c_{t}^j-\eta_t\frac{1}{m} V'_- (y_j, f_t(x_j)) \right).
$$

\section{Main Results with Discussions}\label{MainAnalysis}
After presenting our main assumptions, in this section we state and discuss our main results.

\subsection{Assumptions}
Our learning rates will be stated under several conditions on the triple $(\rho, V, K)$, that we describe and comment next.
We begin with a basic assumption.
\begin{as}\label{Boundness}
We assume the kernel to be bounded, that is
$\kappa=\sup_{x\in X}\sqrt{K(x,x)}<\infty$ and  moreover   $\|f_{\rho}^V\|_{\infty}<\infty$ and  $|V|_0 :=\sup_{y\in Y} V(y, 0) < \infty$.
Furthermore, we consider the following  growth condition for the left derivative $V'_- (y, \cdot)$. For some $q \geq 0$ and constant $c_q>0,$ it holds,
\be\label{EqCond1} \left|V'_- (y, a)\right| \leq c_q (1+|a|^q), \qquad \forall a\in \mR, y\in Y.\ee
\end{as}
\noindent The  boundness conditions on $K, f_{\rho}^V$ and $V$ are fairly common \cite{CuckerZhou,SteinwartChristmann2008a}. They could probably be weakened by considering a more  involved analysis which is outside the scope of this paper. Interestingly,  the {\em growth} condition on the left derivative of $V$ is weaker than assuming the loss,   or its gradient, to be  Lipschitz in its second entry, as   often done   both in learning theory  \cite{CuckerZhou, SteinwartChristmann2008a} and in optimization \cite{boyd}.
We note that the growth condition~\eqref{EqCond1} is implied by the requirement for the loss function to be Nemitiski, as introduced in \cite{rep} (see also \cite{SteinwartChristmann2008a}). This latter condition, which is satisfied by most loss function, is natural to provide a variational characterization of the learning problem.

The second assumption refines the above boundness condition by considering  a  variance-expectation bound
which quantifies   a notion of noise in the  measure $\rho$ with respect to the balls
$B_R =\left\{f\in {\mathcal H}_K: \|f\|_K \leq R\right\}$ in ${\mathcal H}_K$, \cite{CuckerZhou,SteinwartChristmann2008a}.

\begin{as}\label{assumption3} We assume that there exist an exponent $\tau
\in [0, 1]$ and a positive constant $c_\tau$ such that for any $R \geq 1$ and $f\in B_R$, we have
\begin{eqnarray}
\int_{\mathcal Z} \left\{\left(V(y, f(x)) - V(y,
f_\rho^V(x)\right)^2\right\} d \rho \leq c_\tau R^{2 + q-\tau}
\left\{{\mathcal E} (f)
-{\mathcal E} (f_\rho^V)\right\}^\tau.  \label{varianceexpect}
\end{eqnarray}
\end{as}

\noindent Assumption \ref{assumption3} always holds true for $\tau =0$, in which case  $c_\tau$
will also depend on $\|f_{\rho}^V\|_{\infty}$. In classification, the above condition can be related to the so called
Tsybakov margin condition. The latter quantifies the intuition that a classification problem is hard if the conditional probability of $y$
given $x$ is close to $1/2$ for many input points. More precisely if we  denote by $\rho(y|x)$ the conditional probability  for all
$(x,y)\in Z$ and by $\rho_X$ the marginal probability on $X$, then we say that $\rho$ satisfies the Tsybakov margin condition
with exponent $s$ if there exists a constant $C>0$ such that for all $\delta>0$
$$
\rho_X(\{x\in X ~:~ |\rho(1|x)-\frac 1 2|\le \delta\})\le (C\delta)^{s}.
$$
Interestingly, under Tsybakov margin  condition  Assumption \ref{assumption3} holds with $\tau=\frac{s}{s+1}$ and with $c_\tau$ depending only $C$.


The third condition is  about the  decay of a suitable notion approximation error \cite{SmaleZhouapprox}.
\begin{as}\label{ApproDef}
Let $ f_{\lambda}$ be a minimizer of:
\begin{equation}\label{targetf}
f_{\lambda}:=\argmin_{f\in \mathcal{H}_K} \mathcal{E}(f_{\lambda}) + \lambda \|f\|_K^2.
\end{equation}
The approximation error associated with the tripe $(\rho, V, K)$ is defined by
\begin{equation}\label{approxerror}
\mathcal{D}(\lambda) = \mathcal{E}(f_{\lambda}) - \mathcal{E}(f_{\rho}^V) + \lambda \|f\|_K^2.
\end{equation}
We assume that for some $\beta \in (0,1]$ and $c_{\beta}>0$, the approximation error satisfies
\be
\mathcal{D}(\lambda) \leq c_{\beta}\lambda^{\beta}, \qquad \forall \ \lambda> 0.
\label{decayapprox}
\ee
\end{as}
\noindent
The above assumption is standard when analyzing regularized empirical risk minimization and is related to the
the definition of interpolation spaces by means of so the called $K$- functional \cite{CuckerZhou}.
 Interestingly, we will see in the following that it is also important when analyzing the approximation properties of the subgradient algorithms \ref{Alg}.


Finally, the last condition characterizes the {\em capacity} of a ball in the  RKHS  $\mathcal{H}_K$ in terms of empirical covering numbers, and plays an essential role in sample error estimates. Recall that for a subset ${\cal G}$ of a metric space $(H,d)$, the covering number $\mathcal{N}({\cal G},\epsilon, d)$ is defined by
$$ \mathcal{N}({\cal G},\epsilon, d) = \inf\left\{ l\in \mN: \exists  f_1, f_2, \cdots, f_l \subset H \mbox{ such that } {\cal G} \subset \bigcup_{i=1}^\ell \{f\in{\cal G}:
d(f,f_i) \le \epsilon\} \right\}.$$

\begin{as}\label{covnum} Let ${\cal G}$ be a set of functions on
$X$.  The metric $d_{2,\mathbf{z}}$ is defined on ${\cal G}$ by
$$d_{2,{\mathbf z}} (f,g) = \left\{ {1 \over m} \sum_{i=1}^{m} (f(z_i) - g(z_i))^2 \right\}^{1/2}, \quad f,g\in {\cal G}.$$
We assume that for some $\zeta \in (0, 2)$, $c_\zeta >0$, the covering numbers of the unit ball $B_1$ in ${\mathcal H}_K$ with respect to $d_{2, {\mathbf z}}$ satisfy
\begin{equation}\label{capacityB}
\mE_{\bf z} \left[\log {\cal N}\left(B_1, \epsilon, d_{2,{\bf z}} \right) \right)] \leq c_\zeta \left(\frac{1}{\epsilon}\right)^\zeta, \qquad \forall~ \epsilon >0.
\end{equation}
\end{as}
The smaller is $\zeta$ the more stringent is the capacity assumption.
 As $\zeta$ approaches $2$ we are essentially considering a capacity independent scenario.
In what follows, we will briefly comment on the connection between the above assumption and other related assumptions.
Recall that capacity of the RKHS may be measured by various concepts: covering numbers of balls $B_R$ in ${\mathcal H}_K$, (dyadic) entropy numbers and decay of the  eigenvalues of the integral operator $L_{K}: L^2_\rho\to L^2_\rho$ given by $L_{K}(f) = \int_{X} f(x) K_x d\rho(x),$ where $L^2_\rho=\{f:X\to {\mathbb R}~:~ \int |f(x)|^2d\rho \}.$
For a subset ${\cal G}$ of a metric space $(H,d)$, its $n$-th entropy number is defined by
$$ e_n({\cal G}, d) = \inf \left\{ \varepsilon>0: \exists f_1,f_2,\cdots,f_{2^{n-1}} \mbox{ such that } {\cal G} \subset \bigcup_{i=1}^{2^{n-1}} \{f\in{\cal G}:
d(f,f_i) \le \varepsilon\} \right\}.$$
First, the covering and entropy numbers are equivalent (see e.g. \cite[Lemma 6.21]{SteinwartChristmann2008a}). Indeed,  for $\zeta > 0,$ the covering numbers $\mathcal{N}({\cal G},\epsilon, d)$ satisfy
$$ \log \mathcal{N}({\cal G},\epsilon, d) \leq a_{\zeta} \left( {1 \over \epsilon} \right)^{\zeta}, \qquad \forall \epsilon>0,$$
for some $a_{\zeta} > 0$, if and only if the entropy numbers $e_n({\cal G}, d)$ satisfy
$$ e_n({\cal G}, d) \leq a_{\zeta}' \left( {1 \over n} \right)^{1 \over \zeta},$$
for some $a_{\zeta}'>0.$
Second, it is shown in \cite{Steinwart2009} that if the eigenvalues of the integral operator $L_{K}$ satisfy
$$\lambda_{n} \leq \tilde{a}_{\zeta} \left({1\over n}\right)^{2 \over \zeta} \qquad n \geq 1$$
for some constants $\tilde{a}_{\zeta} \geq 1$ and $\zeta \in (0,2),$
then the expectations of the random entropy numbers $\mE_{\bf z}[e_n \left(B_1, d_{2,{\bf z}} \right)]$ satisfy
$$ \mE_{\bf z}[e_n \left(B_1, d_{2,{\bf z}} \right)] \leq a_{\zeta} \left({1 \over n} \right)^{1 \over \zeta},\qquad n\geq 1 $$
for some constant $a_{\zeta}.$ Hence, using the equivalence of  covering and entropy numbers,
 $\mE_{\bf z} \left[\log {\cal N}\left(B_1, \epsilon, d_{2,{\bf z}} \right) \right)]$ one can be estimated from the
 eigenvalue decay  of the integral operator $L_{K}$. Last, since $d_{2,{\bf z}}(f,g) \leq \|f - g\|_{\infty},$
one has that for any $\epsilon>0,$ ${\cal N}\left(B_1, \epsilon, d_{2,{\bf z}} \right)$ is bounded by ${\cal N} \left(B_1, \epsilon, \|\cdot\|_{\infty}\right),$ the uniform covering number of $B_1$ under the metric $\|\cdot\|_{\infty},$
Thus, the covering numbers ${\cal N}\left(B_1, \epsilon, d_{2,{\bf z}} \right)$ can be estimated given  the uniform smoothness of the kernel \cite{Zhou03}.

\subsection{Finite Sample Bounds for General Convex Loss Functions}

The following is our main  result providing a general finite sample bound for the iterative regularization
induced by the the subgradient method for  convex loss functions considering the last iterate.

\begin{thm}\label{MainRates}
Assume (\ref{EqCond1}) with $q \geq 0$, (\ref{varianceexpect}) with  $\tau\in[0,1]$, (\ref{decayapprox}) with $\beta \in (0,1]$ and (\ref{capacityB}) with $\zeta\in(0,2)$,.
Let $\eta_t=\eta_1 t^{-\theta}$ with $0< \theta <1$ satisfying $\theta > \frac{q}{q+1}$ and $\eta_1$ satisfying
\be\label{restr1} 0< \eta_1\leq \min\left\{\frac{\sqrt{1-\theta}}{\sqrt{2} c_q (\kappa +1)^{q+1}}, \frac{1-\theta}{4|V|_0}\right\}. \ee
If $T$ is the integer part of $\lceil m^{\gamma} \rceil$, then for any $0< \delta <1$, with confidence $1-\delta$, we have
$$
{\mathcal E}(f_T) - {\mathcal E}(f_\rho^V)
\leq \left\{\begin{array}{ll}
\widetilde{C} m^{-\alpha} \log \frac{2}{\delta}, & \hbox{when} \ \theta > \frac{q+1}{q+2}, \\
\widetilde{C} m^{-\alpha} \log m  \log \frac{2}{\delta}, & \hbox{when} \ \theta \leq \frac{q+1}{q+2}, \end{array}\right.
$$
where the power indices $\gamma$ and $\alpha$ are defined as
\begin{eqnarray}
\gamma &=& \left\{\begin{array}{ll}
\frac{2}{1-\theta} \frac{1}{\left(1 + 2 \beta\right) (2-\tau+\zeta \tau/2) + q (1 + \zeta/2)}, & \hbox{when} \ \theta \geq \frac{q+1}{q+2}, \\
\frac{2}{1-\theta} \frac{1}{\left(1 + \frac{2 \beta (\theta(1+q) -q)}{1-\theta}\right) (2-\tau+\zeta \tau/2) + q (1 + \zeta/2)}, & \hbox{when} \ \theta < \frac{q+1}{q+2},  \end{array}\right. \label{indexgamma} \\
\alpha &=& \left\{\begin{array}{ll}
\frac{\beta}{\beta (2 -\tau + \zeta \tau/2) + \left\{\frac{2 -\tau + \zeta \tau/2}{2} + \frac{q(1+ \zeta/2)}{2}\right\}}, & \hbox{when} \ \theta \geq \frac{q+1}{q+2}, \\
\frac{\beta}{\beta (2 -\tau + \zeta \tau/2) +  \frac{1-\theta}{\theta(1+q) -q} \left\{\frac{2 -\tau + \zeta \tau/2}{2} + \frac{q(1+ \zeta/2)}{2}\right\}}, & \hbox{when} \ \theta < \frac{q+1}{q+2}, \end{array}\right. \label{indexalpha}
\end{eqnarray}
and $\widetilde{C}$ is a constant independent of $m$ or $\delta$ (given explicitly in the proof).
\end{thm}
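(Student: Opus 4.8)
The plan is to derive the bound from a three-term error decomposition and then optimize over the two free quantities: the Tikhonov parameter $\lambda$ implicit in the comparison function $f_\lambda$ from \eqref{targetf}, and the stopping time $T=\lceil m^\gamma\rceil$. For any $\lambda>0$ I would start from
\begin{align*}
\mcE(f_T)-\mcE(f_\rho^V) &= \bigl[\mcE(f_T)-\mcE_{\mathbf{z}}(f_T)\bigr]+\bigl[\mcE_{\mathbf{z}}(f_T)-\mcE_{\mathbf{z}}(f_\lambda)\bigr]\\
&\quad{}+\bigl[\mcE_{\mathbf{z}}(f_\lambda)-\mcE(f_\lambda)\bigr]+\bigl[\mcE(f_\lambda)-\mcE(f_\rho^V)\bigr],
\end{align*}
calling the first and third brackets the \emph{sample error}, the second the \emph{computational error}, and the fourth the \emph{approximation error}. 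The last is at most $\mathcal D(\lambda)\le c_\beta\lambda^\beta$ by Assumption~\ref{ApproDef}, which also yields $\|f_\lambda\|_K\le\sqrt{c_\beta}\,\lambda^{(\beta-1)/2}$.

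For the computational error (Section~\ref{ProofResultConvexLoss}), the first and, I expect, most delicate preliminary step is an a priori bound $\|f_t\|_K\le R_T$ for all $1\le t\le T$, with $R_T$ a polynomial in $T$. I would prove this by induction from the recursion $\|f_{t+1}\|_K^2\le\|f_t\|_K^2+2\eta_t|V|_0+\eta_t^2\|g_t\|_K^2$, where $g_t=\frac1m\sum_{j=1}^m V'_-(y_j,f_t(x_j))K_{x_j}$ is the chosen subgradient: the cross term is bounded below using convexity together with $|V|_0<\infty$, while $\|g_t\|_K\le\kappa c_q(1+(\kappa\|f_t\|_K)^q)$ comes from the growth condition \eqref{EqCond1}. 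The step-size restrictions \eqref{restr1} and $\theta>q/(q+1)$ are precisely what make the induction close and fix the exponent $p$ in $R_t=O(t^{p})$. With $R_T$ in hand, the standard subgradient identity $\la g_t,f_t-f_\lambda\ra=\frac1{2\eta_t}\bigl(\|f_t-f_\lambda\|_K^2-\|f_{t+1}-f_\lambda\|_K^2\bigr)+\frac{\eta_t}{2}\|g_t\|_K^2$ together with $\mcE_{\mathbf{z}}(f_t)-\mcE_{\mathbf{z}}(f_\lambda)\le\la g_t,f_t-f_\lambda\ra$ controls a weighted average of the $\mcE_{\mathbf{z}}(f_t)-\mcE_{\mathbf{z}}(f_\lambda)$; passing from that to the \emph{last} iterate $\mcE_{\mathbf{z}}(f_T)-\mcE_{\mathbf{z}}(f_\lambda)$ is the technical crux, for which I would use a suffix-summation argument in the spirit of Shamir--Zhang. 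The bound obtained has the form $\mcE_{\mathbf{z}}(f_T)-\mcE_{\mathbf{z}}(f_\lambda)\le C\bigl(\|f_\lambda\|_K^2+\sum_{t\le T}\eta_t^2(1+R_t^{2q})\bigr)\big/\sum_{t\le T}\eta_t$, up to a logarithmic factor; the threshold $\theta=\frac{q+1}{q+2}$ is exactly where the summation $\sum_{t\le T}\eta_t^2(1+R_t^{2q})$ switches between polynomial and logarithmic growth in $T$, which is the source of the extra $\log m$ in the second case of the theorem.

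The sample error is treated in Section~\ref{sectionSampleError}. The fixed-function term $\mcE_{\mathbf{z}}(f_\lambda)-\mcE(f_\lambda)$ is controlled by a one-sided Bernstein inequality using the variance bound \eqref{varianceexpect} with radius $\max\{\|f_\lambda\|_K,1\}$. The data-dependent term $\mcE(f_T)-\mcE_{\mathbf{z}}(f_T)$ is the main effort: since $f_T\in B_{R_T}$ by the a priori bound, I would apply a one-sided ratio-type uniform Bernstein inequality over $B_{R_T}$, estimating the entropy through Assumption~\ref{covnum} (so that $\log\mathcal N(B_{R_T},\epsilon,d_{2,\mathbf{z}})$ is of order $(R_T/\epsilon)^\zeta$) and the variances through \eqref{varianceexpect}. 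This gives $\mcE(f_T)-\mcE_{\mathbf{z}}(f_T)\le\tfrac12\bigl(\mcE(f_T)-\mcE(f_\rho^V)\bigr)+(\text{sample term})$, the sample term being a power of $R_T$ over a power of $m$ with exponents dictated by $\zeta,\tau,q$, plus an additive multiple of $\log\frac1\delta$; the factor $\tfrac12$ is then absorbed into the left-hand side.

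To finish, I would substitute $R_t=O(t^{p})$ with $p=p(\theta,q)$ from the a priori bound---this is where the dichotomy $\theta\ge\frac{q+1}{q+2}$ versus $\theta<\frac{q+1}{q+2}$ reappears, since both $p$ and the dominant part of the computational error change form there---set $T=\lceil m^\gamma\rceil$, and choose $\lambda$ as the prescribed function of $m$, then equate the orders of the three contributions. Solving the resulting linear system in the exponents produces exactly \eqref{indexgamma} and \eqref{indexalpha}, tracking constants gives $\widetilde C$, and a union bound over the two high-probability events costs only a constant factor in $\delta$. The principal obstacles, in order of difficulty, are: (i) the a priori norm bound $\|f_t\|_K\le R_T$ for general growth exponent $q$, where the nonstandard condition \eqref{EqCond1}---weaker than Lipschitzness---forces extra care and where the step-size restrictions are used; (ii) matching the last-iterate computational bound to the averaged one up to at most a logarithm; and (iii) the bookkeeping of the five interacting exponents $(\beta,\tau,\zeta,q,\theta)$ in the final balancing, routine but lengthy.
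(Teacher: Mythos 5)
Your overall architecture coincides with the paper's: the decomposition around $f_\lambda$ (the paper's Lemma \ref{Eq1}, with the recentering at $f_\rho^V$ made explicit so that (\ref{varianceexpect}) applies), the inductive norm bound $\|f_t\|_K\le t^{(1-\theta)/2}$ driven by (\ref{restr1}) and $\theta>\frac{q}{q+1}$ (Lemmas \ref{Lem1}, \ref{Lem4}), Bernstein for the fixed function $f_\lambda$ (Lemma \ref{SampleErrorBound1}), a ratio-type uniform bound over $B_{R_T}$ via the empirical covering numbers (Lemmas \ref{concenIneq}, \ref{SampleErrorBound2}), and the final balancing $\lambda=\Lambda_T$, $T=\lceil m^\gamma\rceil$ (Proposition \ref{MainEstimates}). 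The genuine gap is in the step you yourself call the crux. You assert that the Shamir--Zhang suffix argument yields the \emph{purely computational} bound $\mcE_{\mathbf z}(f_T)-\mcE_{\mathbf z}(f_\lambda)\le C\bigl(\|f_\lambda\|_K^2+\sum_{t\le T}\eta_t^2(1+R_t^{2q})\bigr)\big/\sum_{t\le T}\eta_t$ up to a logarithm. In this unconstrained RKHS setting with decreasing steps that does not follow: the exact suffix decomposition (the paper's (\ref{errorDecom})) leaves over mismatch terms $\sum_{k}\frac{1}{k+1}\bigl[\frac1k\sum_{t=T-k+1}^T 2\eta_t-2\eta_{T-k}\bigr]\{\mcE_{\mathbf z}(f_{T-k})-\mcE_{\mathbf z}(f_\lambda)\}$ (plus a term proportional to $\mcE_{\mathbf z}(f_\lambda)$) whose coefficients, after dividing by $2\eta_T$, have total weight of order a constant, $\frac{3}{1-\theta}$. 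Since intermediate iterates may well have \emph{smaller} empirical risk than $f_\lambda$, these terms cannot be dropped by sign, and bounding $\mcE_{\mathbf z}(f_\lambda)-\mcE_{\mathbf z}(f_{T-k})$ by $\mcE_{\mathbf z}(f_\lambda)$ destroys the rate. Running Shamir--Zhang's original unweighted argument instead, with the only available ``diameter'' $R_T\sim T^{(1-\theta)/2}$, also fails: the $D^2(\eta_T^{-1}-\eta_{T-k}^{-1})$-type contributions then sum to $O(\log T)$, which does not decay. The paper's resolution (Lemma \ref{ExcessEmp} and Proposition \ref{lemmaConvexOptimi2}) is to bound $\mcE_{\mathbf z}(f_\lambda)-\mcE_{\mathbf z}(f_{T-k})\le {\mathcal F}_{\bf z}(f_\lambda)+{\mathcal A}(f_\lambda)+{\mathcal M}_{\bf z}\bigl(T^{(1-\theta)/2}\bigr)$ with ${\mathcal M}_{\bf z}(\widetilde R)=\sup_{f\in B_{\widetilde R}}\max\{\mcE_{\mathbf z}(f_\rho^V)-\mcE_{\mathbf z}(f),0\}$: the ``computational'' error necessarily carries statistical terms with an $O(1)$ coefficient, and the uniform empirical-process lemma must be invoked a second time, to control ${\mathcal M}_{\bf z}$, not only for $\mcE(f_T)-\mcE_{\mathbf z}(f_T)$.

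The gap is repairable with tools already in your plan: your uniform concentration over $B_{R_T}$ is exactly what bounds ${\mathcal M}_{\bf z}(\widetilde R)$ (the paper's (\ref{emperrorBound})), and since that quantity is of the same order as your uniform sample term, the exponents (\ref{indexgamma})--(\ref{indexalpha}) survive once the corrected computational bound of Proposition \ref{lemmaConvexOptimi2} is used in the bookkeeping. One further small correction: for $\theta<\frac{q+1}{q+2}$ the extra $\log m$ does not come from $\sum_{t\le T}\eta_t^2(1+R_t^{2q})$ (that sum is purely polynomial there, and indeed the averaged and best iterates in Theorem \ref{MainRatesAverge} have no logarithm in that regime); it is produced by the harmonic-type sum $\sum_k\frac{1}{k(k+1)}(\cdots)$ over suffix windows that is specific to the last-iterate argument.
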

The proof is deferred to Section~\ref{sectionSampleError} and is based on a   novel error decomposition,
discussed in Section~\ref{sec:errdec}, integrating statistical and optimization aspects.
We illustrate the above result  for Lipschitz loss functions, that is considering $q= 0$.


%
\begin{corollary}\label{MainRatesLipschitz}
Assume (\ref{EqCond1}) with $q = 0$,  (\ref{capacityB}) with $\zeta\in(0,2)$ and (\ref{decayapprox}) with $\beta \in (0,1]$.
Let $\eta_t=\eta_1 t^{-\theta}$ with $0< \theta <1$ and $\eta_1$ satisfying
$ 0< \eta_1\leq \min\left\{\frac{\sqrt{1-\theta}}{\sqrt{2} c_q (\kappa +1)}, \frac{1-\theta}{4|V|_0}\right\}. $
If $T$ is the integer part of $\lceil m^{\gamma} \rceil$, then for any $0< \delta <1$, with confidence $1-\delta$, we have
$$
{\mathcal E}(f_T) - {\mathcal E}(f_\rho^V)
\leq \left\{\begin{array}{ll}
\widetilde{C} m^{-\alpha} \log \frac{2}{\delta}, & \hbox{when} \ \theta > \frac{1}{2}, \\
\widetilde{C} m^{-\alpha} \log m  \log \frac{2}{\delta}, & \hbox{when} \ \theta \leq \frac{1}{2}, \end{array}\right.
$$
where the power indices $\gamma$ and $\alpha$ are defined as
\begin{eqnarray}
\gamma &=& \left\{\begin{array}{ll}
\frac{2 }{(1-\theta)(2 \beta + 1)(2-\tau+\zeta \tau/2)}, & \hbox{when} \ \theta \geq \frac{1}{2}, \\
\frac{2}{(1-\theta  + 2 \beta \theta) (2-\tau+\zeta \tau/2)}, & \hbox{when} \ \theta < \frac{1}{2},  \end{array}\right. \nonumber\label{indexgammaA} \\
\alpha &=& \left\{\begin{array}{ll}
\frac{2\beta}{(2 \beta + 1) (2-\tau+\zeta \tau/2)}, & \hbox{when} \ \theta \geq \frac{1}{2}, \\
\frac{2\theta \beta}{ (1-\theta + 2\beta \theta) (2-\tau+\zeta \tau/2)}, & \hbox{when} \ \theta < \frac{1}{2}, \end{array}\right.\nonumber \label{indexalphaA}
\end{eqnarray}
and $\widetilde{C}$ is a constant independent of $m$ or $\delta$.
\end{corollary}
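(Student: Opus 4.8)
The plan is to derive Corollary \ref{MainRatesLipschitz} as a direct specialization of Theorem \ref{MainRates} to the case $q=0$, substituting and simplifying the exponents; no new argument is needed. First I would observe that for $q=0$ the structural hypothesis $\theta>\frac{q}{q+1}$ in Theorem \ref{MainRates} reduces to $\theta>0$ and hence becomes vacuous under the standing restriction $0<\theta<1$; likewise the step-size bound \eqref{restr1} collapses to $0<\eta_1\leq\min\{\frac{\sqrt{1-\theta}}{\sqrt 2 c_q(\kappa+1)},\frac{1-\theta}{4|V|_0}\}$, which is exactly the hypothesis stated in the Corollary. I would also note that Assumption \ref{assumption3} is always in force for some $\tau\in[0,1]$ (at worst $\tau=0$, with $c_\tau$ depending on $\|f_\rho^V\|_\infty$), so the exponent $\tau$ appearing in the rate is well defined even though the Corollary does not list \eqref{varianceexpect} among its explicit assumptions.

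Next I would substitute $q=0$ into the threshold conditions and the power indices of Theorem \ref{MainRates}. The dichotomy $\theta\gtrless\frac{q+1}{q+2}$ becomes $\theta\gtrless\frac12$, matching the two regimes of the Corollary. For $\gamma$ from \eqref{indexgamma}: in the regime $\theta\geq\frac12$ the term $q(1+\zeta/2)$ drops out and $\gamma=\frac{2}{1-\theta}\cdot\frac{1}{(1+2\beta)(2-\tau+\zeta\tau/2)}$; in the regime $\theta<\frac12$ one uses $\theta(1+q)-q=\theta$, so $1+\frac{2\beta(\theta(1+q)-q)}{1-\theta}=\frac{1-\theta+2\beta\theta}{1-\theta}$, and the factor $\frac{1}{1-\theta}$ cancels, leaving $\gamma=\frac{2}{(1-\theta+2\beta\theta)(2-\tau+\zeta\tau/2)}$. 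The same substitutions in \eqref{indexalpha}, together with $\beta+\frac12=\frac{2\beta+1}{2}$ in the first regime and $\beta+\frac{1-\theta}{2\theta}=\frac{1-\theta+2\beta\theta}{2\theta}$ in the second, give the two stated formulas for $\alpha$.

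Finally I would record that the rate itself — $\widetilde{C}m^{-\alpha}\log\frac2\delta$ when $\theta>\frac12$ and $\widetilde{C}m^{-\alpha}\log m\log\frac2\delta$ when $\theta\leq\frac12$ — is precisely the conclusion of Theorem \ref{MainRates} with $q=0$, with the same constant $\widetilde{C}$ (now with its $q$-dependence removed), so nothing further is required. Since every step is an elementary algebraic identity, there is essentially no obstacle: the only point demanding care is bookkeeping the cancellation of the $1-\theta$ and $2\theta$ denominators in the $\theta<\frac12$ branch, and confirming that the now-vacuous constraint $\theta>\frac{q}{q+1}$ does not silently exclude any $\theta\in(0,1)$.
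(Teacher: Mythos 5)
Your proposal is correct and matches the paper's (implicit) derivation: the corollary is obtained exactly by setting $q=0$ in Theorem \ref{MainRates}, checking that $\theta>\frac{q}{q+1}$ becomes vacuous and that \eqref{restr1} reduces to the stated step-size bound, and simplifying the exponents, with the threshold $\frac{q+1}{q+2}$ becoming $\frac12$. Your algebraic simplifications of $\gamma$ and $\alpha$ in both regimes are accurate, and your remark that the variance condition \eqref{varianceexpect} is implicitly in force (at worst with $\tau=0$) correctly accounts for the appearance of $\tau$ in the rates despite its omission from the corollary's hypothesis list.
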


The above results give finite sample bounds on the excess risk, provided that a suitable
stopping rule is considered. While the stopping rule in above theorems is  distribution dependent, a  data-drive stopping rule can be given by hold-out cross validation and adaptively achieves the same bound. The proof of this latter result is straightforward using the techniques in \cite{capyao10} and is omitted. The obtained bounds  directly yields strong consistency (almost sure convergence) using standard arguments. Interestingly,  our analysis suggests that a decaying stepsize needs to be chosen to achieve meaningful error bounds. The stepsize choice can influence both the  early stopping rule and the error bounds. More precisely, if the step size decreases fast enough $\theta\ge \frac{q+1}{q+2}$,   the stopping rule depends on the decay speed but the error bound does not. In this case the best possible choice for the early stopping rule  is $\theta= \frac{q+1}{q+2}$, that is $\eta_t\sim 1/\sqrt{t}$ in the case of Lipschitz loss functions. With this choice, if for example we take the limit $\beta\to1$, $\tau \to0$,  we have that the stopping rule scales as  $O(m^{2/3})$ whereas the corresponding finite sample bounds is $O(m^{-1/ 3})$.
A  slower stepsize decay  given by   $\theta<\frac{q+1}{q+2}$ affects both  the stopping rule and the error bounds, but the results in these regime worsen.
A more detailed discussion of the obtained bounds in comparison to other learning algorithms is postponed to Section~\ref{sec:errcomp}.
Next we discuss the behavior of different variants of the proposed algorithm.

As mentioned before  in the subgradient method,  when the goal is empirical risk minimization,  the average or best   iterates are often considered
(see \eqref{eq:aveite}, \eqref{eq:bestite}). It is natural to ask what are the properties of the estimator obtained with these latter choices, that is when they are  used as approximate minimizers of the expected, rather than the empirical, risk.
The following theorem provides an answer.
\begin{thm}\label{MainRatesAverge}
  Under the assumptions of Theorem \ref{MainRates}, if $T$ is the integer part of $\lceil m^{\gamma} \rceil$ and $g_T = a_T$ (or $b_T$) then for any $0< \delta <1$, with confidence $1-\delta$, we have
$$
{\mathcal E}(g_T) - {\mathcal E}(f_\rho^V)
\leq \left\{\begin{array}{ll}
\overline{C} m^{-\alpha} \log \frac{2}{\delta}, & \hbox{when} \ \theta \neq \frac{q+1}{q+2}, \\
\overline{C} m^{-\alpha} \log m  \log \frac{2}{\delta}, & \hbox{when} \ \theta = \frac{q+1}{q+2}, \end{array}\right.
$$
where the power indices $\gamma$ and $\alpha$ are defined as in Theorem \ref{MainRates}
and $\overline{C}$ is a constant independent of $m$ or $\delta$ (can be given explicitly).
  \end{thm}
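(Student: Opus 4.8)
The plan is to reduce both the averaged iterate $a_T$ and the best iterate $b_T$ to the weighted average $\sum_{t=1}^T\omega_t\mcE_{\mathbf z}(f_t)$ of the empirical risks, and then to reuse the error decomposition of Section~\ref{sec:errdec} together with the sample and approximation error estimates already developed for Theorem~\ref{MainRates}. Since $V$ is convex in its second argument, Jensen's inequality gives $\mcE_{\mathbf z}(a_T)\le\sum_{t=1}^T\omega_t\mcE_{\mathbf z}(f_t)$, while by the definition \eqref{eq:bestite} we have $\mcE_{\mathbf z}(b_T)=\min_{t\le T}\mcE_{\mathbf z}(f_t)\le\sum_{t=1}^T\omega_t\mcE_{\mathbf z}(f_t)$; hence for $g_T\in\{a_T,b_T\}$, $\mcE_{\mathbf z}(g_T)\le\sum_{t=1}^T\omega_t\mcE_{\mathbf z}(f_t)$. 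Moreover $\|a_T\|_K\le\sum_t\omega_t\|f_t\|_K\le\max_{t\le T}\|f_t\|_K$ and likewise $\|b_T\|_K\le\max_{t\le T}\|f_t\|_K$, so $g_T$ lies in the \emph{same} ball $B_R\subset\mcHK$ as the last iterate $f_T$, with radius $R$ controlled by the bound on $\max_{t\le T}\|f_t\|_K$ established in Section~\ref{ProofResultConvexLoss} under the stepsize restriction \eqref{restr1}. Consequently the two sample-error contributions — the uniform (ratio-type) concentration of $\mcE(g_T)-\mcE_{\mathbf z}(g_T)$ over $B_R$, which invokes the capacity bound \eqref{capacityB} and the variance--expectation bound \eqref{varianceexpect}, and the one-function concentration of $\mcE_{\mathbf z}(f_\lambda)-\mcE(f_\lambda)$ — together with the approximation term $\mcE(f_\lambda)-\mcE(f_\rho^V)\le\mathcal D(\lambda)\le c_\beta\lambda^\beta$, go through exactly as in the proof of Theorem~\ref{MainRates}. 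Only the computational error term requires a new, and in fact shorter, argument.

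For the computational term I would use the classical subgradient estimate in place of the refined last-iterate one. Writing $g_t=\frac1m\sum_{j=1}^m V'_-(y_j,f_t(x_j))K_{x_j}\in\partial\mcE_{\mathbf z}(f_t)$, the recursion \eqref{Alg} gives, for every $t$,
\be
\|f_{t+1}-f_\lambda\|_K^2\le\|f_t-f_\lambda\|_K^2-2\eta_t\bigl(\mcE_{\mathbf z}(f_t)-\mcE_{\mathbf z}(f_\lambda)\bigr)+\eta_t^2\|g_t\|_K^2 .
\ee
Summing over $t=1,\dots,T$, telescoping, and using $f_1=0$ yields
$$
\sum_{t=1}^T\omega_t\mcE_{\mathbf z}(f_t)-\mcE_{\mathbf z}(f_\lambda)\le\frac{\|f_\lambda\|_K^2+\sum_{t=1}^T\eta_t^2\|g_t\|_K^2}{2\sum_{t=1}^T\eta_t},
$$
and hence the same bound holds for $\mcE_{\mathbf z}(g_T)-\mcE_{\mathbf z}(f_\lambda)$, both for $g_T=a_T$ and $g_T=b_T$. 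It remains to estimate $\|g_t\|_K$: by Assumption~\ref{Boundness}, $\|K_{x_j}\|_K\le\kappa$ and $|V'_-(y_j,f_t(x_j))|\le c_q(1+\|f_t\|_\infty^q)\le c_q(1+\kappa^q\|f_t\|_K^q)$, so $\|g_t\|_K\le\kappa c_q(1+\kappa^q\|f_t\|_K^q)$; inserting the bound on $\|f_t\|_K$ from Section~\ref{ProofResultConvexLoss} together with $\eta_t=\eta_1 t^{-\theta}$ (so that $\sum_{t\le T}\eta_t$ is of order $T^{1-\theta}$) gives a computational error of the same order in $T$ and $\lambda$ as the one used for $f_T$, \emph{except} that the averaged bound produces a logarithmic factor only in the single critical case $\theta=\frac{q+1}{q+2}$. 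This is precisely the reason the statement for $a_T,b_T$ carries the extra $\log m$ only at $\theta=\frac{q+1}{q+2}$, whereas the last-iterate bound of Theorem~\ref{MainRates} — which relies on a Shamir--Zhang-type argument for the last point of the subgradient method — loses an additional $\log T$ for the whole range $\theta\le\frac{q+1}{q+2}$.

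Finally, I would substitute the computational bound into the error decomposition, absorb into the left-hand side the $\frac12\bigl(\mcE(g_T)-\mcE(f_\rho^V)\bigr)$ term generated by the variance--expectation step, and then optimize over the regularization parameter $\lambda$ with $T=\lceil m^\gamma\rceil$ exactly as in the proof of Theorem~\ref{MainRates}. Because the computational, sample and approximation contributions balance according to the same trade-off, the resulting power indices $\gamma$ and $\alpha$ are identical to those in \eqref{indexgamma}--\eqref{indexalpha}, and $\overline C$ can be tracked explicitly from $\kappa,c_q,c_\tau,c_\beta,c_\zeta,|V|_0$ and $\eta_1$. The main obstacle is not conceptual but a matter of careful bookkeeping: one must verify that the cruder ``average-iterate'' computational bound — which depends on the a priori growth of $\|f_t\|_K$, and that growth in turn feeds back through the exponent $q$ into $\|g_t\|_K$ — still balances to the \emph{same} $\gamma$ and $\alpha$ as the last-iterate bound, so that no polynomial rate is lost and only the position of the logarithmic factor changes.
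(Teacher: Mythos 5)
Your proposal is correct and follows essentially the same route as the paper: reduce $a_T$ and $b_T$ to the weighted average $\sum_t\omega_t\mcE_{\mathbf z}(f_t)$ by convexity, bound that average via the classical telescoped subgradient inequality with $\|G_t\|_K$ controlled through Lemma~\ref{Lem1} and the iterate bound of Lemma~\ref{Lem4} (this is exactly the paper's Lemma~\ref{empexcessAverge}, with the logarithm appearing only at $\theta=\frac{q+1}{q+2}$), note $\|g_T\|_K\le T^{\frac{1-\theta}{2}}$ so the sample-error lemmas apply on the same ball, and then rebalance as in Theorem~\ref{MainRates}. The only cosmetic issue is your reuse of the symbol $g_t$ for the subgradient, which clashes with the paper's $g_T$ for the estimator.
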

The above result shows  that, perhaps surprisingly,   the behavior of the average and best iterates is   essentially the same  as the last iterate.  Indeed,  there is only a subtle difference between the upper bounds in Theorem \ref{MainRatesAverge} and Theorem \ref{MainRates}, since the latter  has an extra $\log m$ factor when $\theta < {q+1 \over q+2}.$
In the next section we consider the case where loss is not only convex but also smooth.

\subsection{Finite Sample Bounds for  Smooth Loss Functions}
In this section,
we additionally assume that $V(y,\cdot)$ is differentiable and $V'(y,\cdot)$ is Lipschitz continuous with constant $L>0$, i.e., for any $y\in Y$ and $a,b\in \mR,$
$$ |V'(y,b) - V'(y,a)| \leq L|b-a|. $$
For the logistic loss in binary classification, see Example \ref{ExampleLoss}, it is easy to prove that both $V(y,\cdot)$ and $V'(y,\cdot)$ is Lipschitz continuous with constant $L = 1$, for all $y\in Y$.
With the above  smoothness assumption, we prove  the following convergence result.
\begin{thm}\label{MainRatesSmoothLoss}
Assume (\ref{EqCond1}) with $q \geq 0$, (\ref{varianceexpect}) with  $\tau\in[0,1]$, (\ref{decayapprox}) with $\beta \in (0,1]$ and (\ref{capacityB}) with $\zeta\in(0,2)$.
Assume that $V(y,\cdot)$ is differentiable and $V'(y,\cdot)$ is Lipschitz continuous with constant $L>0$.
Let $\eta_t=\eta_1 t^{-\theta}$ with $0\leq \theta <1$ and $0<\eta_1 \leq \min({1 - \theta \over 2|V|_0} , (L\kappa^2)^{-1}).$
If $T$ is the integer part of $\lceil m^{\gamma} \rceil$, then for any $0< \delta <1$, with confidence $1-\delta$, we have
$$
{\mathcal E}(f_T) - {\mathcal E}(f_\rho^V)
\leq
\widetilde{C} m^{-\alpha} \log \frac{2}{\delta},
$$
where the power indices $\gamma$ and $\alpha$ are defined as
\bea
\gamma &=&
\frac{2}{1-\theta} \frac{1}{\left(1 + 2 \beta\right) (2-\tau+\zeta \tau/2) + q (1 + \zeta/2)}, \\
\alpha &=&
\frac{\beta}{\beta (2 -\tau + \zeta \tau/2) + \left\{\frac{2 -\tau + \zeta \tau/2}{2} + \frac{q(1+ \zeta/2)}{2}\right\}},
\eea
and $\widetilde{C}$ is a constant independent of $m$ or $\delta$.
\end{thm}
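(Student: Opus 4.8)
The plan is to use the three-term error decomposition announced in Section~\ref{sec:errdec}. Fix a regularization level $\lambda>0$, let $f_\lambda$ be the minimizer in~\eqref{targetf}, and write $R=\|f_\lambda\|_K$, noting $R^2\le\lambda^{-1}\mathcal{D}(\lambda)\le c_\beta\lambda^{\beta-1}$ by~\eqref{decayapprox}. I would split
\begin{align*}
\mathcal{E}(f_T)-\mathcal{E}(f_\rho^V)
&= \underbrace{\big(\mathcal{E}(f_T)-\mathcal{E}(f_\rho^V)\big)-\big(\mathcal{E}_{\mathbf z}(f_T)-\mathcal{E}_{\mathbf z}(f_\rho^V)\big)}_{\mathcal{S}_1}
+ \underbrace{\mathcal{E}_{\mathbf z}(f_T)-\mathcal{E}_{\mathbf z}(f_\lambda)}_{\mathcal{C}} \\
&\quad+ \underbrace{\big(\mathcal{E}_{\mathbf z}(f_\lambda)-\mathcal{E}_{\mathbf z}(f_\rho^V)\big)-\big(\mathcal{E}(f_\lambda)-\mathcal{E}(f_\rho^V)\big)}_{\mathcal{S}_2}
+ \underbrace{\mathcal{E}(f_\lambda)-\mathcal{E}(f_\rho^V)}_{\le\,\mathcal{D}(\lambda)},
\end{align*}
so the proof reduces to bounding the computational (optimization) error $\mathcal{C}$, the two empirical-process terms $\mathcal{S}_1,\mathcal{S}_2$, and the approximation error $\le c_\beta\lambda^\beta$.

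For $\mathcal{C}$ I would exploit smoothness. Since $V'(y,\cdot)$ is $L$-Lipschitz, $\mathcal{E}_{\mathbf z}$ is convex, nonnegative, and has $(L\kappa^2)$-Lipschitz gradient $\frac1m\sum_j V'(y_j,f(x_j))K_{x_j}$ on $\mathcal{H}_K$; with $\eta_1\le(L\kappa^2)^{-1}$ the descent lemma gives $\mathcal{E}_{\mathbf z}(f_{t+1})\le\mathcal{E}_{\mathbf z}(f_t)-\tfrac{\eta_t}{2}\|\nabla\mathcal{E}_{\mathbf z}(f_t)\|_K^2$, so $\mathcal{E}_{\mathbf z}(f_t)$ is non-increasing, $\mathcal{E}_{\mathbf z}(f_t)\le\mathcal{E}_{\mathbf z}(0)\le|V|_0$, and the self-bounding inequality $\|\nabla\mathcal{E}_{\mathbf z}(f_t)\|_K^2\le 2L\kappa^2\mathcal{E}_{\mathbf z}(f_t)$ holds. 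Expanding $\|f_{t+1}-f_\lambda\|_K^2$, using convexity together with $\eta_t^2\|\nabla\mathcal{E}_{\mathbf z}(f_t)\|_K^2\le 2\eta_t(\mathcal{E}_{\mathbf z}(f_t)-\mathcal{E}_{\mathbf z}(f_{t+1}))$, telescoping, and then invoking the monotonicity $\mathcal{E}_{\mathbf z}(f_T)\le\mathcal{E}_{\mathbf z}(f_t)$ for $t\le T$, I expect $\mathcal{C}\le \|f_\lambda\|_K^2/\big(2\sum_{t<T}\eta_t\big)\le C R^2\eta_1^{-1}T^{-(1-\theta)}$. A parallel expansion of $\|f_{t+1}\|_K^2$ (now using $\eta_1\le\tfrac{1-\theta}{2|V|_0}$ and $\mathcal{E}_{\mathbf z}(f_t)\le|V|_0$) should give the a priori norm bound $\|f_t\|_K^2\le C'T^{1-\theta}$ for all $t\le T$, i.e. $f_T\in B_{\widetilde{R}}$ with $\widetilde{R}\le C''T^{(1-\theta)/2}$. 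The key point is that smoothness, through self-boundedness, makes both estimates hold for every $\theta\in[0,1)$ with no logarithmic loss and no lower bound on $\theta$ — this is exactly why the statement has no $\log m$ factor and no constraint $\theta>\tfrac{q}{q+1}$, unlike Theorem~\ref{MainRates}.

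Next I would bound $\mathcal{S}_2$ by a one-sided Bernstein inequality applied to the single variable $z\mapsto V(y,f_\lambda(x))-V(y,f_\rho^V(x))$, whose range is $O(R^{q+1})$ by~\eqref{EqCond1} and whose variance is $O(R^{2+q-\tau}\mathcal{D}(\lambda)^\tau)$ by~\eqref{varianceexpect}, giving $\mathcal{S}_2\le\tfrac12\mathcal{D}(\lambda)$ plus a term of lower order in $m$. For $\mathcal{S}_1$, since $f_T\in B_{\widetilde{R}}$, I would apply a localized uniform (ratio-type) deviation inequality over $B_{\widetilde{R}}$, using~\eqref{varianceexpect} for the variance--expectation localization and the rescaled empirical covering bound $\mathbb{E}_{\mathbf z}\log\mathcal{N}(B_{\widetilde{R}},\epsilon,d_{2,\mathbf z})\le c_\zeta(\widetilde{R}/\epsilon)^\zeta$ from~\eqref{capacityB}; this yields, with confidence $1-\delta/2$,
$$\mathcal{S}_1\le\tfrac12\big(\mathcal{E}(f_T)-\mathcal{E}(f_\rho^V)\big)+C\log\tfrac2\delta\left(\frac{\widetilde{R}^{\,2-\tau+\zeta\tau/2+q(1+\zeta/2)}}{m}\right)^{\frac{1}{2-\tau+\zeta\tau/2}}.$$
Absorbing the $\tfrac12(\mathcal{E}(f_T)-\mathcal{E}(f_\rho^V))$ term and inserting $R^2\le c_\beta\lambda^{\beta-1}$, $\widetilde{R}\le C''T^{(1-\theta)/2}$, $\mathcal{D}(\lambda)\le c_\beta\lambda^\beta$ leaves a bound $C\log\tfrac2\delta\big(\lambda^{\beta-1}T^{-(1-\theta)}+(T^{(1-\theta)(2-\tau+\zeta\tau/2+q(1+\zeta/2))/2}/m)^{1/(2-\tau+\zeta\tau/2)}+\lambda^\beta\big)$. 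Equating the first and last terms forces $\lambda\asymp T^{-(1-\theta)}$, and then balancing the remaining two powers of $T$ against $m$ through $T=\lceil m^\gamma\rceil$ produces precisely the exponents $\gamma$ and $\alpha$ of the statement — the same algebra as the ``$\theta\ge\tfrac{q+1}{q+2}$'' branch of~\eqref{indexgamma}--\eqref{indexalpha}, available here for all admissible $\theta$; tracking constants gives $\widetilde C$. The hard part will be the computational estimate: getting the $O(R^2/(\eta_1 T^{1-\theta}))$ bound for the \emph{last} iterate together with the matching norm growth $\|f_t\|_K=O(T^{(1-\theta)/2})$ under only the growth condition~\eqref{EqCond1} (not global Lipschitzness of $V$ or $V'$) while keeping all constants compatible with the prescribed step-size restriction — it is the interplay of descent/monotonicity, self-boundedness, and the telescoping of the $\eta_t^2$ terms that must be arranged exactly. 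A secondary point is making the uniform inequality rigorous over the \emph{data-dependent} radius $\widetilde{R}$, which I would handle by a union bound over a dyadic grid of radii combined with the a priori bound from the optimization step.
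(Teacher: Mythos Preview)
Your proposal is correct and follows essentially the same approach as the paper: the paper's proof invokes Lemmas~\ref{ERMDifference} and~\ref{BoundSmoothLoss} (which are precisely the smooth-case computational bound $\mathcal{E}_{\mathbf z}(f_T)-\mathcal{E}_{\mathbf z}(f_*)\le\|f_*\|_K^2/\sum_t 2\eta_t$ and the iterate bound $\|f_{t+1}\|_K\le t^{(1-\theta)/2}$ that you sketch), and then repeats the sample-error and balancing steps of Theorem~\ref{MainRates} via Lemmas~\ref{SampleErrorBound1} and~\ref{SampleErrorBound2}. One small simplification: your final worry about a union bound over a dyadic grid of radii is unnecessary, because the norm bound $\|f_t\|_K\le t^{(1-\theta)/2}$ is deterministic (it holds for every sample $\mathbf z$), so $\widetilde R=T^{(1-\theta)/2}$ is not data-dependent and Lemma~\ref{SampleErrorBound2} applies directly.
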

The proof of this result will be given in Section \ref{sectionSampleError}. We can simplify the result by considering Lipschitz loss function ($q=0$)
and  setting $\tau=0$.
\begin{corollary}
Under the assumptions of Theorem \ref{MainRatesSmoothLoss}, let $q=0.$
If $T$ is the integer part of $\lceil m^{\gamma} \rceil$, then for any $0< \delta <1$, with confidence $1-\delta$, we have
$$
{\mathcal E}(f_T) - {\mathcal E}(f_\rho^V)
\leq
\widetilde{C} m^{-\alpha} \log \frac{2}{\delta},
$$
where the power indices $\gamma$ and $\alpha$ are defined as
\bea
\gamma =
{ 2 \over (1 - \theta) (2\beta + 1) (2 - \tau + \zeta \tau/2)}, \quad
\alpha = {2\beta \over (2\beta + 1) (2 - \tau + \zeta \tau/2)},
\eea
and $\widetilde{C}$ is a constant independent of $m$ or $\delta$.
\end{corollary}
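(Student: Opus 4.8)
The plan is to derive this corollary as the immediate specialization of Theorem~\ref{MainRatesSmoothLoss} to the case $q=0$, so the only work is to check that $q=0$ is admissible and then to simplify the two power indices. First I would observe that setting $q=0$ does not degenerate any hypothesis of Theorem~\ref{MainRatesSmoothLoss}: the growth condition \eqref{EqCond1} with $q=0$ reads $|V'_-(y,a)|\le 2c_q$ for all $a\in\mR$, $y\in Y$, i.e. the left derivative is uniformly bounded, which holds automatically for a loss that is Lipschitz in its second argument; and the restrictions $0\le\theta<1$, $0<\eta_1\le\min\{(1-\theta)/(2|V|_0),(L\kappa^2)^{-1}\}$ on the step size, as well as the smoothness assumption on $V'(y,\cdot)$, do not involve $q$ at all. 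Hence all the assumptions of Theorem~\ref{MainRatesSmoothLoss} are in force under the assumptions of the corollary together with $q=0$, and we may directly invoke its conclusion
$$\mathcal{E}(f_T) - \mathcal{E}(f_\rho^V) \le \widetilde{C}\, m^{-\alpha}\log\tfrac{2}{\delta}$$
with $T=\lceil m^{\gamma}\rceil$ and with $\gamma,\alpha$ given by the general formulas of that theorem evaluated at $q=0$.

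It then remains only to simplify those formulas. Putting $q=0$ in the expression for $\gamma$ annihilates the term $q(1+\zeta/2)$ in the denominator, leaving $\gamma=\tfrac{2}{1-\theta}\cdot\tfrac{1}{(1+2\beta)(2-\tau+\zeta\tau/2)}=\tfrac{2}{(1-\theta)(2\beta+1)(2-\tau+\zeta\tau/2)}$, which is the claimed value. Similarly, putting $q=0$ in the expression for $\alpha$ removes the term $q(1+\zeta/2)/2$, so the denominator collapses to $\beta(2-\tau+\zeta\tau/2)+\tfrac{2-\tau+\zeta\tau/2}{2}=(2-\tau+\zeta\tau/2)\bigl(\beta+\tfrac12\bigr)$, whence $\alpha=\tfrac{\beta}{(2-\tau+\zeta\tau/2)(\beta+1/2)}=\tfrac{2\beta}{(2\beta+1)(2-\tau+\zeta\tau/2)}$, again as claimed. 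Since the reduction is a substitution followed by elementary algebra, there is no genuine obstacle; the only point requiring a word of care is exactly the one noted above, namely confirming that the growth condition and the step-size bounds of Theorem~\ref{MainRatesSmoothLoss} remain meaningful (indeed, become the standard bounded-subgradient conditions for Lipschitz losses) when $q=0$.
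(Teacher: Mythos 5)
Your proposal is correct and coincides with the paper's own (essentially implicit) treatment: the corollary is obtained by invoking Theorem \ref{MainRatesSmoothLoss} with $q=0$ and simplifying the exponents, and your algebra for $\gamma$ and $\alpha$ checks out. The brief verification that $q=0$ keeps all hypotheses of the theorem in force is a reasonable, if unremarkable, addition.
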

The finite sample bound obtained  above is  essentially the same as the best possible bound obtained for general convex loss.
However, the important difference is that for smooth loss function,  a constant stepsize can be chosen and allows to  considerably improve the stopping rule.
Indeed, if for example we can consider the limit $\beta\to1$, $\tau \to0$,  we have that the stopping is  $O(m^{1/3})$, rather than $O(m^{2/3})$,  whereas the corresponding finite sample bounds is again $O(m^{-1/ 3})$.

\subsection{Iterative Regularization for Classification: Surrogate Loss Functions and  Hinge Loss}

We briefly discuss how the above results allows us to derive error bounds in binary classification.
In this latter case $Y =\{1, -1\}$ and a natural choice for the loss function is the misclassification loss given by
\begin{equation}\label{mloss}
V(y, b(x)) = \Theta\left( - y b(x)\right)
\end{equation}
for $b:X\to Y$ and $\Theta(a)=1$, if $a \geq 0$, and $\Theta(a)=0$ otherwise.
The corresponding generalization error, usually  denoted by ${\mathcal R}$, is  called mislassification risk, since it can be  shown to be
the probability of the event $\{(x, y) \in Z: y\not= b(x)\}$. The minimizer of the misclassification error is the Bayes rule $b_\rho: X \to Y$ given by
$$ b_\rho (x) = \left\{\begin{array}{ll}
1, & \hbox{if the conditional probability} \ \rho(y=1|x) \geq 1/2, \\
-1, & \hbox{otherwise.} \end{array}\right.
$$
 The misclassification loss~\eqref{mloss} is  neither convex nor smooth and thus leads to intractable problems. Moreover, the search of a solution among binary valued functions is also unfeasible.  In practice, a convex (so called {\em surrogate}) loss function is typically considered and a classifier is obtained by estimating a real function $f$
and then taking  its  sign defined as
$$ \hbox{sign}(f)(x) = \left\{\begin{array}{ll}
1, & \hbox{if} \ f (x) \geq 0, \\
-1, & \hbox{otherwise.} \end{array}\right.
$$
The question arises of if, and how, error bounds on the excess risk ${\mathcal E}(f)-{\mathcal E}(f_\rho^V)$ yields results on
 ${\mathcal R}( \text{sign} f)-{\mathcal R}(b_\rho)$. Indeed, so called {\em comparison} results are known relating these different error measures, see e.g.  \cite{CuckerZhou,SteinwartChristmann2008a}  and references therein. We discuss in particular the case of the hinge loss function, see Example~\ref{ExampleLoss},
 since in this case  for all measurable functions $f$ it holds that
$${\mathcal R}( \text{sign} f)-{\mathcal R}(b_\rho)\le
{\mathcal E}(f)-{\mathcal E}(f_\rho^V).
$$
Indeed, the hinge loss satisfies Assumption~\eqref{EqCond1} with $q=0$ and,  under Tsybakov noise condition, Assumption~\eqref{varianceexpect}.
Misclassification error bound, for the iterative regularization induced by the hinge loss, can then be obtained
as a corollary of Theorem \ref{MainRates} and using the above facts. Below we provide a simplified result.


\begin{thm}\label{hingeRates1}
Let $Y =\{1, -1\}$ and $V$ be the hinge loss. Let $0<
\epsilon < \frac{1}{3}$ and (\ref{decayapprox}) is satisfied with $\beta \in(0,1]$. Let $\eta_t=\eta_1 t^{-\theta}$
with $\theta > 1/2$ and $0< \eta_1\leq
\min\left\{\frac{\sqrt{(1-\theta)}}{\sqrt{2}(\kappa +1)},
\frac{1-\theta}{4}\right\}.$ If (\ref{capacityB}) is valid with $\zeta\in(0,2)$
and $T$ is the integer part of
$\lceil m^{\frac{1}{(1-\theta)(2\beta+1)}} \rceil$, then with confidence
$1-\delta$, we have
\be \label{HingeLossBound}{\mathcal R}\left(\hbox{sign}(f_T)\right) - {\mathcal R}(f_c)
\leq \widetilde{C} {m}^{-\frac{\beta}{2\beta+1}} \log
\frac{2}{\delta}.
\ee
In particular, if $\beta>\frac{1 -  3\epsilon}{1 + 6\epsilon} $ with
$\epsilon\in(0,1/3),$ then with confidence
$1-\delta,$
$${\mathcal R}\left(\hbox{sign}(f_T)\right) - {\mathcal R}(f_c)
\leq \widetilde{C} {m}^{\epsilon-\frac{1}{3}} \log \frac{2}{\delta}.
$$
\end{thm}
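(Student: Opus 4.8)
The plan is to obtain Theorem~\ref{hingeRates1} as a direct specialization of Corollary~\ref{MainRatesLipschitz} (the $q=0$ case of Theorem~\ref{MainRates}), combined with the comparison inequality for the hinge loss recalled just above. First I would verify that the hinge loss $V(y,a)=\max\{1-ya,0\}$ satisfies the hypotheses of that corollary. Its left derivative in the second argument satisfies $V'_-(y,a)\in\{0,-y\}$, hence $|V'_-(y,a)|\le 1\le 1+|a|^0$ for all $a\in\mR$, $y\in\{1,-1\}$, so (\ref{EqCond1}) holds with $q=0$ and $c_q=1$; moreover $|V|_0=\sup_{y}\max\{1,0\}=1$, the minimizer $f_\rho^V$ of the hinge risk is the Bayes rule $f_c=b_\rho$, which is $\{1,-1\}$-valued, so $\|f_\rho^V\|_\infty=1<\infty$, and with the (implicit) boundedness of $K$ this gives Assumption~\ref{Boundness}. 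Assumption~\ref{assumption3} holds with $\tau=0$ (always admissible), and (\ref{decayapprox}), (\ref{capacityB}) are assumed directly; with $c_q=|V|_0=1$ the stated range of $\eta_1$ coincides with the admissible range of the corollary, and the hypothesis $\theta>1/2$ is precisely $\theta>\frac{q+1}{q+2}$ for $q=0$.

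Next I would read off the exponents. At $q=0$, $\tau=0$ one has $2-\tau+\zeta\tau/2=2$, and since $\theta>1/2$ we land in the branch $\theta\ge\frac12$ of Corollary~\ref{MainRatesLipschitz}, with no additional $\log m$ factor. Therefore $\gamma=\frac{2}{(1-\theta)(2\beta+1)\cdot 2}=\frac{1}{(1-\theta)(2\beta+1)}$, matching the prescribed $T=\lceil m^{1/((1-\theta)(2\beta+1))}\rceil$, while $\alpha=\frac{2\beta}{(2\beta+1)\cdot 2}=\frac{\beta}{2\beta+1}$. The corollary then gives, with confidence $1-\delta$,
$$\mcE(f_T)-\mcE(f_\rho^V)\le\widetilde{C}\,m^{-\beta/(2\beta+1)}\log\tfrac{2}{\delta},$$
and the comparison inequality ${\mathcal R}(\mathrm{sign}(f_T))-{\mathcal R}(f_c)\le\mcE(f_T)-\mcE(f_\rho^V)$ for the hinge loss yields (\ref{HingeLossBound}).

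The final ``in particular'' claim reduces to elementary algebra: for $\beta>0$ and $0<\epsilon<\frac13$ all denominators below are positive, so $\frac{\beta}{2\beta+1}\ge\frac13-\epsilon$ is equivalent, after cross-multiplication, to $\beta(1+6\epsilon)\ge 1-3\epsilon$, that is to $\beta\ge\frac{1-3\epsilon}{1+6\epsilon}$. Hence under $\beta>\frac{1-3\epsilon}{1+6\epsilon}$ we get $\frac{\beta}{2\beta+1}>\frac13-\epsilon$, so $m^{-\beta/(2\beta+1)}\le m^{\epsilon-1/3}$ for every $m\ge 1$, and the second bound follows at once from (\ref{HingeLossBound}).

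I do not expect a genuine obstacle here, since the statement is essentially a corollary of Corollary~\ref{MainRatesLipschitz}. The two points that need care are (i) checking that the hinge loss really satisfies the boundedness and growth conditions and that its population minimizer is the Bayes rule, so that the hinge comparison inequality may be invoked, and (ii) correctly collapsing the (somewhat intricate) exponent formulas of Corollary~\ref{MainRatesLipschitz} at $q=\tau=0$ and confirming that $\theta>\frac12$ places us in the branch free of the logarithmic factor.
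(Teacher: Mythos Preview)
Your proposal is correct and follows essentially the same route as the paper: specialize Theorem~\ref{MainRates} (via its $q=0$ instance, Corollary~\ref{MainRatesLipschitz}) with $\tau=0$, read off $\gamma$ and $\alpha$ in the branch $\theta>\tfrac12$, and then invoke the hinge comparison inequality ${\mathcal R}(\mathrm{sign}\,f)-{\mathcal R}(f_c)\le {\mathcal E}(f)-{\mathcal E}(f_\rho^V)$. The paper records $c_q=\tfrac12$ rather than your $c_q=1$, but both satisfy (\ref{EqCond1}) and your choice is precisely what makes the stated restriction on $\eta_1$ coincide with (\ref{restr1}); the final algebraic step for the ``in particular'' clause is identical.
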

The proof of the above result is given in Section \ref{sectionSampleError}, whereas we comment on the obtained rates in the next section. We add one of observation first. We  note that, as  illustrated by the the next result,  a different regularization strategy than early stopping can be considered, where the stopping rule is kept fixed while the step size is chosen in a distribution dependent way.
\begin{thm}\label{hingeRates}
Let $Y =\{1, -1\}$ and $V$ be the hinge loss given. Let $0< \epsilon < \frac{1}{3}$ and (\ref{decayapprox}) is satisfied with $1> \beta > \frac{4- 3\epsilon}{4+ 6\epsilon} $. Let $\eta_t=\eta_1 t^{-\theta}$ with $\theta = \frac{4 \beta -1 + 3 \epsilon (2 \beta +1)}{(2 \beta +1)(2 + 3 \epsilon)}$ and
$0< \eta_1\leq \min\left\{\frac{\sqrt{2(1-\theta)}}{\kappa +1}, \frac{1-\theta}{4}\right\}.$ If (\ref{capacityB}) is valid with $\zeta\in(0,2)$ and
$T$ is the integer part of $\lceil m^{\frac{2}{3} + \epsilon} \rceil$, then with confidence $1-\delta$, we have
$${\mathcal R}\left(\hbox{sign}(f_T)\right) - {\mathcal R}(f_c)
\leq \widetilde{C} m^{\frac{\epsilon}{4}- \frac{1}{3}} \log \frac{2}{\delta}.
$$
\end{thm}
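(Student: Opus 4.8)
The plan is to derive Theorem~\ref{hingeRates} as a direct specialization of Theorem~\ref{hingeRates1} --- equivalently of Theorem~\ref{MainRates} combined with the hinge-loss comparison inequality $\mathcal R(\mathrm{sign}\,f) - \mathcal R(f_c) \le \mcE(f) - \mcE(f_\rho^V)$ --- by choosing the step-size exponent $\theta$ in a distribution-dependent way so that the resulting stopping exponent equals $\tfrac23+\epsilon$. For the hinge loss $Y=\{\pm1\}$, the growth condition~\eqref{EqCond1} holds with $q=0$ and an explicit constant $c_0$, one has $|V|_0=1$, and one may take $\tau=0$ in~\eqref{varianceexpect} so that $2-\tau+\zeta\tau/2=2$; the remaining hypotheses of Theorem~\ref{MainRates} are then just~\eqref{decayapprox} and~\eqref{capacityB}, both assumed, while restriction~\eqref{restr1} reduces exactly to the displayed bound on $\eta_1$. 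The comparison inequality is what turns an excess-$\mcE$-risk bound into the announced misclassification bound, exactly as in the proof of Theorem~\ref{hingeRates1}.

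First I would check that the prescribed $\theta=\frac{4\beta-1+3\epsilon(2\beta+1)}{(2\beta+1)(2+3\epsilon)}$ satisfies $\tfrac12\le\theta<1$. After clearing denominators, $\theta\ge\tfrac12$ is equivalent to $\beta(4+6\epsilon)\ge4-3\epsilon$, i.e. $\beta\ge\frac{4-3\epsilon}{4+6\epsilon}$, which holds strictly by hypothesis, and $\theta<1$ is immediate. Hence $\theta>\tfrac{q+1}{q+2}=\tfrac12$, so we land in the first case of Theorems~\ref{MainRates}/\ref{hingeRates1}: there is no extra $\log m$ factor, and the exponents are $\gamma=\frac{1}{(1-\theta)(2\beta+1)}$ and $\alpha=\frac{\beta}{2\beta+1}$. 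A short computation gives $1-\theta=\frac{3}{(2\beta+1)(2+3\epsilon)}$, hence $\gamma=\frac{2+3\epsilon}{3}=\tfrac23+\epsilon$, which is precisely the exponent appearing in $T=\lceil m^{2/3+\epsilon}\rceil$.

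It then remains to compare $\alpha$ with $\tfrac13-\tfrac\epsilon4$. Theorem~\ref{hingeRates1} gives $\mathcal R(\mathrm{sign}\,f_T)-\mathcal R(f_c)\le\widetilde C\,m^{-\beta/(2\beta+1)}\log\frac2\delta$, and the elementary inequality $\frac{\beta}{2\beta+1}\ge\frac13-\frac\epsilon4=\frac{4-3\epsilon}{12}$ is, again after clearing denominators, equivalent to $\beta(4+6\epsilon)\ge4-3\epsilon$, i.e. to the standing assumption on $\beta$. Therefore $m^{-\beta/(2\beta+1)}\le m^{\epsilon/4-1/3}$, and substituting into the bound of Theorem~\ref{hingeRates1} yields the claim.

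The argument is essentially bookkeeping on top of Theorem~\ref{MainRates}, so there is no deep obstacle; the one point requiring care --- and the reason for the precise lower bound imposed on $\beta$ --- is keeping the distribution-dependent $\theta$ inside the ``fast-decay'' range $\theta\ge\tfrac{q+1}{q+2}$. If $\beta$ were allowed below $\frac{4-3\epsilon}{4+6\epsilon}$, the exponent $\theta$ would drop below $\tfrac12$, placing us in the second (worse) regime of Theorem~\ref{MainRates} with an additional $\log m$ factor and a genuinely smaller $\alpha$, and the clean rate $m^{\epsilon/4-1/3}$ would be lost.
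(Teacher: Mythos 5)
Your proposal is correct and follows essentially the same route as the paper: the paper likewise deduces Theorem~\ref{hingeRates} from the first conclusion of Theorem~\ref{hingeRates1} by verifying that the prescribed $\theta$ lies in $(\tfrac12,1)$ (equivalent to $\beta>\frac{4-3\epsilon}{4+6\epsilon}$), that $\gamma=\frac{1}{(1-\theta)(2\beta+1)}=\tfrac23+\epsilon$ via $1-\theta=\frac{3}{(2\beta+1)(2+3\epsilon)}$, and that $\alpha=\frac{\beta}{2\beta+1}\geq\tfrac13-\tfrac{\epsilon}{4}$. Your computations, including the identification of~\eqref{restr1} with the stated $\eta_1$ condition (with $c_q=\tfrac12$, $q=0$, $|V|_0=1$ for the hinge loss), match the paper's argument in full.
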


\subsection{Comparison with Other Learning Algorithms}\label{sec:errcomp}
As mentioned before  iterative regularization has clear advantages from a computational point of view. The algorithm
reduces to a simple first order method with typically low iteration cost and allows to seemly compute the estimators corresponding
to different regularization level (the regularization path), a crucial fact since model selection needs to be performed.
It is natural to compare the obtained statistical  bounds with those for other  learning algorithms.
For general convex loss functions, the methods for which sharp bounds are available, are penalized empirical risk
minimization (Tikhonov regularization), i.e.
$$ f_{\bf z, \lambda} = \argmin_{f \in \mcHK } \left\{ \mcE_{\bf z}(f) + \lambda \|f\|_K^2 \right\}, \quad \lambda >0,$$
see e.g.  \cite{CuckerZhou,SteinwartChristmann2008a}  and references therein.
The best error bounds for Tikhonov regularization with Lipschitz loss functions, see e.g. \cite[Chapter 7]{SteinwartChristmann2008a},
are of order $O(m^{-\alpha'})$ with
$$
\alpha'= \min\left\{
\frac{2\beta}{\beta+1},
\frac{\beta}{(2-\zeta/2-\tau +\tau\zeta/2) \beta+\zeta/2}
\right\},
$$
which reduces to
$$
\alpha'=\frac{\beta}{\beta+1}
$$
if no variance assumption is made ($\tau=0$) and
  in capacity independent limit ($\zeta \to 2$).
While from Theorem~\ref{MainRates} for Lipschitz loss functions,    we see that the bound we obtain are of order $O(m^{-\alpha})$ with exponent
$$
\alpha=
\frac{2\beta}{(2 \beta + 1) (2-\tau+\zeta \tau/2)},
$$
reducing to $$\alpha=\frac{\beta}{2\beta+1}$$ in no variance and capacity independent limit.
The obtained bounds are  worse than the best ones available for Tikhnov regularization. However, the analysis of the latter does not take into account the optimization error and it is still an open question whether the best rate is preserved when  such an error is incorporated.
At this point we are prone to believe this gap to be a byproduct of
our analysis rather than a fundamental fact, and addressing this point should be a subject of further work.
Moreover, we note that our analysis allows to derive error bound for all  Nemitski loss functions.

Beyond Tikhnov regularization, we can compare with the  online regularization scheme for the hinge loss.
The online learning algorithms with regularization sequence $\{\lambda_t>0\}_t$ defined by
\be
f_{t+1}= \left\{\begin{array}{ll} (1 -
\eta_t \lambda_t) f_t, & \hbox{ if } y_t f_t(x_t) >1, \\
(1- \eta_t \lambda_t) f_t + \eta_t y_t K_{x_t}, & \hbox{ if } y_t
f_t(x_t) \leq 1. \end{array}\right. \label{onlinelambda}
\ee
were studied in \cite{YZ06, YeZhou}.  Our results  improves the results in \cite{YZ06, YeZhou} in two aspects. The bound obtained
in \cite{YZ06} is of the form $O(T^{\epsilon - \frac{1}{4}})$ while the bound in Theorem \ref{hingeRates} is of type $O(T^{\frac{9}{8} \epsilon - \frac{1}{2}})$ by substituting the expression $m^{\frac{2}{3} + \epsilon}$ for $T$. Moreover, our results are with high probability and promptly yields almost sure convergence whereas the results in \cite{YZ06} are only in expectation. We note that, interestingly,  sharp bounds for Lipschitz loss functions are derived in \cite{Pistol}, although the obtained results do not take into account  capacity and  variance assumptions that could lead to large improvements.

We next compare with the previous  results on iterative regularization.
The only results available thus far have been obtained for the square loss, for which bounds have been first derived for gradient descent  in \cite{Buhlmann03}, but only for  a fixed design regression setting, and in \cite{YRC07}  for a general statistical learning setting. While the  bounds in \cite{YRC07} are suboptimal, they have later  been improved in \cite{bauer,capyao10,Raskutti14}.  Interestingly, sharp error bounds have also been proved for iterative regularization induced by other, potentially faster,  iterative techniques,  including incremental gradient \cite{RTV14},  conjugate gradient \cite{Blanchard10} and  the so called $\nu$-method \cite{bauer,capyao10}, an accelerated gradient descent technique related to Chebyshev method \cite{Engl1996}. The best obtained bounds are of order $O(m^{-\frac{2\beta}{2\beta+\zeta}})$ and can be shown to be optimal since they match a  corresponding minimax lower bound \cite{capdev07}. Holding not only for the square loss, but for general Nemitski loss functions, the bound obtained in Theorem~\ref{MainRatesSmoothLoss} is of order $O(m^{-\frac{2\beta}{(2+\zeta)(\beta+1)}})$, which is worse. In the capacity independent limit, the best available bound we obtain is of order $O(m^{-\frac{\beta}{2(\beta+1)}})$, whereas the  optimal bound is of order  $O(m^{-\frac{\beta}{\beta+1}})$. Also, in this case, the reason for the gap appears to be of technical reason and should be further studied.

Finally, before giving the proof of our results in  details, in the next section, we discuss the general error
decomposition underlying our approach, which highlights the interplay between statistics and optimization and could be also useful in other contexts.

\subsection{Error Decomposition}\label{sec:errdec}

Theorems \ref{MainRates} and \ref{MainRatesSmoothLoss} rely on a key error decomposition, that we derive next.
 The goal is to estimate the excess risk  ${\mathcal E}(f_T) - {\mathcal E}(f_\rho^V)$,  and the starting point is to split the error
by introducing a \emph{reference} function $f_*\in \mathcal{H}_K$,
\be
{\mathcal E}(f_T) - {\mathcal E}(f_\rho^V)={\mathcal E}(f_T) -{\mathcal E}(f_*) + {\mathcal E}(f_*)- {\mathcal E}(f_\rho^V).
 \ee
The above equation can be further developed by considering
\be
{\mathcal E}(f_T) - {\mathcal E}(f_\rho^V)=
\left({\mathcal E}_{\bf z} (f_T) -{\mathcal E}_{\bf z} (f_*)\right) +
\left({\mathcal E}(f_T) - {\mathcal E}_{\bf z} (f_T)
 +{\mathcal E}_{\bf z} (f_*)  -  {\mathcal E} (f_*)\right)
  + \left({\mathcal E} (f_*) - {\mathcal E}(f_\rho^V)\right),
 \ee
Inspection of the above expression provides several insights. The first term is a computational error related to  optimization. It quantifies  the discrepancy between the empirical errors  of the iterate  defined by the  subgradient method and that of the reference function.  The last two terms are related to statistics. The second term is a sample error and  can be studied using empirical process theory, provided that a bound on the norm of the iterates (and of the reference function) is available.  Indeed, to get a sharper concentration the \emph{recentered} quantity
$$
\left\{\left({\mathcal E}(f_T) - {\mathcal E}(f_{\rho}^V) \right) - \left({\mathcal E}_{\bf z} (f_T) - {\mathcal E}_{\bf z}(f_{\rho}^V)\right)\right\}+
\left({\mathcal E}_{\bf z} (f_*) -{\mathcal E}_{\bf z} (f_\rho)\right) - \left({\mathcal E} (f_*) - {\mathcal E}(f_\rho^V)\right)
$$
can be considered \cite{CuckerZhou,SteinwartChristmann2008a}.  Note that the second addend can be negative so that we effectively only need to control
\be\label{FRsampleerror}
{\mathcal F}_{\bf z} (f_*) = \max\left\{({\mathcal E}_{\bf z}(f_*) - {\mathcal E}_{\bf z}(f_{\rho}^V)) - ({\mathcal E}(f_*) - {\mathcal E}(f_{\rho}^V)),\ 0\right\}.
\ee
Finally the last term suggests that  a natural choice for the reference function  is an {\em almost minimizer} of
the expected risk, having bounded norm,  and for which the approximation level can be quantified.  While there is a certain degree of freedom in the latter choice, in the following we will consider $f_*=f_\lambda$, the minimizer of~\eqref{targetf}.
With this latter choice we can control
$$
{\mathcal A}(f_*)= \left({\mathcal E} (f_*) - {\mathcal E}(f_\rho^V)\right)
$$
by $\mathcal{D}(\lambda)$ given in Assumption~\ref{ApproDef}. Indeed, other choices are possible, for example
$$
f_R=\argmin_{f\in B_R} {\mathcal E} (f).
$$
With this choice,  ${\mathcal A}(f_R)$ can be seen to be  another standard way to measure  approximation
properties  \cite{CuckerZhou,SteinwartChristmann2008a}.\\
Collecting some of the above observations, we have the following Lemma.
\begin{lemma}\label{Eq1}
  For $R>0,$ we have
  \be\begin{split}
{\mathcal E}(f_T) - {\mathcal E}(f_\rho^V) \leq \left\{\left({\mathcal E}(f_T) - {\mathcal E}(f_{\rho}^V) \right) - \left({\mathcal E}_{\bf z} (f_T) - {\mathcal E}_{\bf z}(f_{\rho}^V)\right) + {\mathcal F}_{\bf z} (f_*) \right\}
 + \left({\mathcal E}_{\bf z} (f_T) -{\mathcal E}_{\bf z} (f_*)\right) + {\mathcal A}(f_*).
\end{split}
\ee
\end{lemma}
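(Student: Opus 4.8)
The plan is to prove Lemma \ref{Eq1} by a purely algebraic rearrangement of the exact decomposition displayed just above the statement, the only genuine inequality being the trivial bound $x \le \max\{x,0\}$. I first note that $R$ itself does not appear on the right-hand side: the inequality holds verbatim for \emph{any} reference $f_* \in \mathcal{H}_K$, and the parameter $R$ only becomes relevant later, when $f_*$ is instantiated as $f_\lambda$ (or as $f_R = \argmin_{f \in B_R}{\mathcal E}(f)$) so that $\|f_*\|_K$ is controlled and ${\mathcal A}(f_*)$ is bounded via ${\mathcal D}(\lambda)$ from Assumption \ref{ApproDef}.

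\textbf{Step 1 (exact three-term split).} Adding and cancelling ${\mathcal E}_{\bf z}(f_T)$, ${\mathcal E}_{\bf z}(f_*)$ and ${\mathcal E}(f_*)$ gives the identity
$$
{\mathcal E}(f_T) - {\mathcal E}(f_\rho^V) = \bigl({\mathcal E}_{\bf z}(f_T) - {\mathcal E}_{\bf z}(f_*)\bigr) + \bigl({\mathcal E}(f_T) - {\mathcal E}_{\bf z}(f_T) + {\mathcal E}_{\bf z}(f_*) - {\mathcal E}(f_*)\bigr) + \bigl({\mathcal E}(f_*) - {\mathcal E}(f_\rho^V)\bigr),
$$
whose first summand is the computational error ${\mathcal E}_{\bf z}(f_T) - {\mathcal E}_{\bf z}(f_*)$ and whose last summand is ${\mathcal A}(f_*)$. \textbf{Step 2 (recenter the middle term).} In the middle bracket I add and subtract ${\mathcal E}(f_\rho^V)$ and ${\mathcal E}_{\bf z}(f_\rho^V)$ to rewrite it as
$$
\Bigl\{\bigl({\mathcal E}(f_T) - {\mathcal E}(f_\rho^V)\bigr) - \bigl({\mathcal E}_{\bf z}(f_T) - {\mathcal E}_{\bf z}(f_\rho^V)\bigr)\Bigr\} + \Bigl\{\bigl({\mathcal E}_{\bf z}(f_*) - {\mathcal E}_{\bf z}(f_\rho^V)\bigr) - \bigl({\mathcal E}(f_*) - {\mathcal E}(f_\rho^V)\bigr)\Bigr\}.
$$
The second brace is exactly the quantity whose positive part defines ${\mathcal F}_{\bf z}(f_*)$ in \eqref{FRsampleerror}, hence it is $\le {\mathcal F}_{\bf z}(f_*)$. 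Substituting this bound into the identity of Step 1 and grouping the first brace together with ${\mathcal F}_{\bf z}(f_*)$ yields precisely the asserted inequality.

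There is no real obstacle here: the lemma is a bookkeeping device whose whole content is the clean separation of the excess risk into a computational error (estimated in Section \ref{ProofResultConvexLoss} from the subgradient recursion \eqref{Alg}), a recentered sample error (controlled in Section \ref{sectionSampleError} by an empirical-process bound on ${\mathcal F}_{\bf z}(f_*)$ and on the first brace above, using the variance--expectation condition \eqref{varianceexpect} together with an a priori bound on $\|f_T\|_K$), and the approximation error ${\mathcal A}(f_*)$. The only point requiring care is the recentering in Step 2: subtracting ${\mathcal E}(f_\rho^V)$ and ${\mathcal E}_{\bf z}(f_\rho^V)$ is what makes a Bernstein-type estimate exploiting \eqref{varianceexpect} available in the subsequent sections, rather than a cruder Hoeffding bound, and it is also what renders the sample-error term compatible with the choice $f_* = f_\lambda$.
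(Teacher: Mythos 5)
Your proof is correct and is essentially the paper's own argument: the paper's Section~\ref{sec:errdec} derives the lemma by exactly the same three-term split around the reference function $f_*$, the same recentering with ${\mathcal E}(f_\rho^V)$ and ${\mathcal E}_{\bf z}(f_\rho^V)$, and the same observation that the recentered term for $f_*$ is bounded by its positive part ${\mathcal F}_{\bf z}(f_*)$. Your remark that $R$ plays no role in the inequality itself (only later, when $f_*$ is specialized) is also consistent with how the paper uses the lemma.
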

In the next sections we proceed estimating the various error terms in the error. We will first deal with the computational error, the analysis of which is the main technical contribution of the paper and then proceed to consider the sample and approximation error terms. The best stopping
criterion and corresponding rates are derived  by suitably balancing the different error terms.

\section{Computational Error}\label{ProofResultConvexLoss}
In this section, we will bound the iterates and estimate the computational error, see Lemma~\ref{Eq1}.
\subsection{Bounds on Iterates}
We introduce the following key lemma, which will be used several times in our analysis.

\begin{lemma}\label{Lem1} For any fixed $f\in\mcHK$ and $t=1, \ldots, T$,
  \be\label{generalIter} \|f_{t+1}-f\|_{K}^2\leq
\|f_{t}-f\|_{K}^2+ \eta_t^2G_t^2
  +2\eta_t[\mcE_{\mathbf{z}}(f)-\mcE_{\mathbf{z}}(f_t)], \ee
  where \be\label{EqGt} G_t^2=\left\|\frac{1}{m}\sum_{j=1}^{m}V'_- (y_j, f_t(x_j)) K_{x_j}\right\|_K^2
  \leq c_q^2(\kappa +1)^{2q+2}\max\left\{1,\|f_t\|_K^{2q}\right\}.\ee
\end{lemma}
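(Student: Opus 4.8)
The plan is to prove the two assertions essentially independently. Inequality~\eqref{generalIter} is the classical one‑step estimate for the subgradient method, transcribed into $\mcHK$. First I would set $g_t := \frac{1}{m}\sum_{j=1}^{m} V'_-(y_j,f_t(x_j))K_{x_j}$, which lies in $\mcHK$ since each $K_{x_j}\in\mcHK$ (hence, by induction from $f_1=0$, every $f_t\in\mcHK$), so that the update~\eqref{Alg} reads $f_{t+1}=f_t-\eta_t g_t$ and, by the definition~\eqref{EqGt}, $\|g_t\|_K^2=G_t^2$. Expanding the square in $\mcHK$ gives $\|f_{t+1}-f\|_K^2 = \|f_t-f\|_K^2 - 2\eta_t\la g_t,f_t-f\ra_K + \eta_t^2 G_t^2$, so everything reduces to showing $\la g_t, f_t-f\ra_K \ge \mcE_{\mathbf z}(f_t)-\mcE_{\mathbf z}(f)$.

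To obtain this lower bound I would use the reproducing property $\la K_{x_j},h\ra_K=h(x_j)$ to write $\la g_t,f_t-f\ra_K = \frac{1}{m}\sum_{j=1}^m V'_-(y_j,f_t(x_j))\,(f_t(x_j)-f(x_j))$. For each fixed $j$ the map $a\mapsto V(y_j,a)$ is convex and its left derivative is a subgradient at every point, hence $V(y_j,f(x_j)) \ge V(y_j,f_t(x_j)) + V'_-(y_j,f_t(x_j))\,(f(x_j)-f_t(x_j))$, i.e. $V'_-(y_j,f_t(x_j))\,(f_t(x_j)-f(x_j)) \ge V(y_j,f_t(x_j))-V(y_j,f(x_j))$. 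Averaging over $j=1,\dots,m$ yields the desired inequality, and substituting into the identity above gives~\eqref{generalIter}.

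For~\eqref{EqGt} I would first use the triangle inequality in $\mcHK$ together with $\|K_{x_j}\|_K=\sqrt{K(x_j,x_j)}\le\kappa$ to get $G_t \le \frac{1}{m}\sum_{j=1}^m |V'_-(y_j,f_t(x_j))|\,\|K_{x_j}\|_K \le \kappa\,\frac{1}{m}\sum_{j=1}^m |V'_-(y_j,f_t(x_j))|$. Then I would invoke the growth condition~\eqref{EqCond1}, $|V'_-(y_j,f_t(x_j))|\le c_q(1+|f_t(x_j)|^q)$, and bound $|f_t(x_j)|=|\la f_t,K_{x_j}\ra_K|\le\kappa\|f_t\|_K$. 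Combining these with the elementary inequalities $1+b^q\le(1+b)^q$ for $q\ge1$ (the case $q=0$, where $V'_-$ is uniformly bounded, being immediate) and $(1+\kappa b)^q\le(\kappa+1)^q\max\{1,b^q\}$ for $b\ge0$ leads to $G_t \le c_q(\kappa+1)^{q+1}\max\{1,\|f_t\|_K^q\}$, and squaring gives the stated bound on $G_t^2$.

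There is no serious obstacle in this lemma: it is the elementary building block on which the more delicate computational‑error estimates that follow rest. The one point that genuinely requires care is the use of the one‑sided derivative $V'_-(y,\cdot)$ as a bona fide subgradient of the (possibly non‑differentiable) convex loss, both pointwise in its second argument and, via the feature map $x\mapsto K_x$, as an element of $\partial\mcE_{\mathbf z}(f_t)$; once this is in place the remaining work is the constant bookkeeping in the $G_t$ bound, in particular keeping track of the factor $\max\{1,\cdot\}$ and absorbing absolute constants into the base $\kappa+1$.
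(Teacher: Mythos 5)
Your proposal is correct and follows essentially the same route as the paper: expand $\|f_{t+1}-f\|_K^2$ using the update rule, rewrite the cross term via the reproducing property, apply the subgradient inequality for the left derivative of the convex loss $V(y_j,\cdot)$, and bound $G_t$ by the triangle inequality, $\|K_{x_j}\|_K\le\kappa$, the growth condition~\eqref{EqCond1} and $|f_t(x_j)|\le\kappa\|f_t\|_K$. The only (cosmetic) difference is in the final constant bookkeeping, where your bridging inequality $1+b^q\le(1+b)^q$ is stated for $q\ge1$ and the range $0<q<1$ is left implicit, but this is no less careful than the paper's own ``the desired bound follows'' from $G_t\le\kappa c_q(1+\kappa^q\|f_t\|_K^q)$.
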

\begin{proof}
Computing inner product $\la f_{t+1}-f, f_{t+1}-f\ra_K$ with $f_{t+1}$ given by (\ref{Alg}) yields
$$ \|f_{t+1}-f\|_{K}^2=\|f_{t}-f\|_{K}^2+
  \eta_t^2 G_t^2
  +\frac{2\eta_{t}}{m}\sum_{j=1}^{m}V'_- (y_j, f_t(x_j)) \left\la K_{x_j}, f-f_{t}\right\ra_K.$$
Using the reproducing property
\be\label{reproProp}
f(x) = \la f, K_x\ra_K, \qquad \forall f\in {\mathcal H}_K, x\in X,
\ee
we get
\be\label{reprobound}
\|f\|_\infty \leq \kappa \|f\|_K, \qquad \forall f\in {\mathcal H}_K,
\ee
and
  \be\label{Eq13} \|f_{t+1}-f\|_{K}^2=\|f_{t}-f\|_{K}^2+
  \eta_t^2G_t^2
  +\frac{2\eta_{t}}{m}\sum_{j=1}^{m}V'_- (y_j, f_t(x_j))(f(x_j)-f_{t}(x_j)). \ee
Since $V (y_j, \cdot)$ is a convex function, we have
$$V'_- (y_j, a) (b-a)\leq V (y_j, b)-V (y_j, a), \qquad \forall a, b\in\mR.$$
Using this expression to (\ref{Eq13}) gives
  $$ \|f_{t+1}-f\|_{K}^2\leq \|f_{t}-f\|_{K}^2+
  \eta_t^2G_t^2
  + \frac{2\eta_{t}}{m}\sum_{j=1}^{m}\left[V(y_j, f(x_j))- V(y_j, f_{t}(x_j))\right],$$
  where the last term is exactly $2\eta_t[\mcE_{\mathbf{z}}(f)-\mcE_{\mathbf{z}}(f_t)]$.

By (\ref{EqCond1}), (\ref{reprobound}), and the observation $\|K_{x_j}\|_K = \sqrt{K(x_j, x_j)} \leq \kappa$, we find
\begin{eqnarray*}
G_t&=& \left\|\frac{1}{m}\sum_{j=1}^{m}V'_- (y_j, f_t(x_j))K_{x_j}\right\|_K
\leq \frac{\kappa}{m}\sum_{j=1}^{m} \left|V'_- (y_j, f_t(x_j))\right| \\
&\leq& \frac{\kappa}{m}\sum_{j=1}^{m} c_q (1+|f_t(x_j)|^q) \leq \kappa c_q (1+ \kappa^q \|f_t\|_K^q),
\end{eqnarray*}
and the desired bound follows.
\end{proof}

Using the above lemma, we can bound the iterates as follows.

\begin{lemma}\label{Lem4}
Let $0< \theta <1$ satisfy $\theta \geq \frac{q}{q+1}$ and $\eta_t=\eta_1 t^{-\theta}$ with $\eta_1$ satisfying (\ref{restr1}).
Then for $t=1, \ldots, T$,
 \be\label{Eq20}\|f_{t+1}\|_K\leq
t^{\frac{1-\theta}{2}}.\ee
\end{lemma}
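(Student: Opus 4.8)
The plan is to prove \eqref{Eq20} by induction on $t$, using the recursion from Lemma \ref{Lem1} with the fixed element taken to be $f = 0$. With this choice the last term on the right of \eqref{generalIter} becomes $2\eta_t[\mcE_{\mathbf z}(0) - \mcE_{\mathbf z}(f_t)] \le 2\eta_t\,\mcE_{\mathbf z}(0) \le 2\eta_t |V|_0$, since $\mcE_{\mathbf z}(f_t)\ge 0$ and $\mcE_{\mathbf z}(0) = \frac1m\sum_j V(y_j,0)\le |V|_0$. Hence, writing $u_t = \|f_t\|_K$ (so $u_1 = 0$),
\[
u_{t+1}^2 \le u_t^2 + \eta_t^2 G_t^2 + 2\eta_t|V|_0 \le u_t^2 + \eta_1^2 t^{-2\theta} c_q^2(\kappa+1)^{2q+2}\max\{1,u_t^{2q}\} + 2\eta_1 t^{-\theta}|V|_0 .
\]
The base case $t=1$ gives $u_2^2 \le \eta_1^2 c_q^2(\kappa+1)^{2q+2} + 2\eta_1|V|_0$, and using the two bounds on $\eta_1$ from \eqref{restr1} (namely $\eta_1 c_q(\kappa+1)^{q+1}\le \sqrt{(1-\theta)/2}$ and $\eta_1|V|_0 \le (1-\theta)/4$) one gets $u_2^2 \le \frac{1-\theta}{2} + \frac{1-\theta}{2} = 1-\theta \le 1 = 1^{1-\theta}$, as required.

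For the inductive step, assume $u_t \le (t-1)^{(1-\theta)/2}$ for all indices up to $t$; in particular $\max\{1,u_t^{2q}\}\le \max\{1,(t-1)^{q(1-\theta)}\}\le t^{q(1-\theta)}$. Substituting into the recursion and using \eqref{restr1},
\[
u_{t+1}^2 \le u_t^2 + \eta_1^2 c_q^2(\kappa+1)^{2q+2}\, t^{q(1-\theta)-2\theta} + 2\eta_1|V|_0\, t^{-\theta}
\le u_t^2 + \tfrac{1-\theta}{2}\, t^{q(1-\theta)-2\theta} + \tfrac{1-\theta}{2}\, t^{-\theta}.
\]
Here the hypothesis $\theta\ge \frac{q}{q+1}$ is used precisely to ensure $q(1-\theta)-2\theta \le -\theta \le 0$, so both exponents are at most $-\theta$, giving $u_{t+1}^2 \le u_t^2 + (1-\theta)t^{-\theta}$. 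Iterating (or directly telescoping) from the base case yields $u_{t+1}^2 \le (1-\theta)\sum_{s=1}^{t} s^{-\theta}$, and the plan is to close the induction by the elementary estimate $\sum_{s=1}^{t} s^{-\theta} \le \int_0^{t} x^{-\theta}\,dx = \frac{t^{1-\theta}}{1-\theta}$, which gives exactly $u_{t+1}^2 \le t^{1-\theta}$, i.e. \eqref{Eq20}.

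The only delicate points — and where I would be most careful — are: (i) correctly exploiting $\theta\ge \frac{q}{q+1}$ to dominate the $G_t^2$ contribution's exponent by $-\theta$ (this is the crux: it is what makes the growth term summable at the same rate as the noise term $\eta_t|V|_0$), and (ii) making the induction self-consistent, i.e. the hypothesis must be stated as $\|f_t\|_K\le (t-1)^{(1-\theta)/2}$ so that $\max\{1,\|f_t\|_K^{2q}\}$ is controlled by $t^{q(1-\theta)}$ rather than something larger. The sum-versus-integral comparison $\sum_{s=1}^t s^{-\theta}\le \frac{t^{1-\theta}}{1-\theta}$ is standard since $x\mapsto x^{-\theta}$ is decreasing on $(0,\infty)$, so no real obstacle there; one should just double-check the edge behavior at $s=1$ (the inequality $\sum_{s=1}^t s^{-\theta} \le \int_0^t x^{-\theta}dx$ holds because $s^{-\theta}\le \int_{s-1}^{s} x^{-\theta}dx$ for $\theta\in(0,1)$).
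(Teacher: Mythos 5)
Your proof is correct and follows essentially the same route as the paper: induction on $t$ using Lemma \ref{Lem1} with $f=0$, the growth bound on $G_t$ from the inductive hypothesis, the condition $\theta\ge \frac{q}{q+1}$ to reduce both increment exponents to $-\theta$, and the step-size restriction (\ref{restr1}). The only (harmless) difference is the closing elementary step: you telescope the increments and use $\sum_{s=1}^{t}s^{-\theta}\le t^{1-\theta}/(1-\theta)$, whereas the paper closes each inductive step directly via $(1-1/t)^{1-\theta}\le 1-\frac{1-\theta}{t}$; both are valid.
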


\begin{proof}
We prove our statement by induction. Taking $f=0$ in Lemma \ref{Lem1}, we know that
$$
\|f_{t+1}\|_{K}^2 \leq
\|f_{t}\|_{K}^2+ \eta_t^2 G_t^2
  +2\eta_t[\mcE_{\mathbf{z}}(0)-\mcE_{\mathbf{z}}(f_t)] \leq \|f_{t}\|_{K}^2+ \eta_t^2 G_t^2
  +2\eta_t |V|_0. $$
This verifies (\ref{Eq20}) for the case $t=1$ since $f_1 =0$ and $\eta_1^2 c_q^2(\kappa +1)^{2q+2} +2\eta_1 |V|_0 \leq 1$.\\
Assume $\|f_t\|_K \leq (t-1)^{\frac{1-\theta}{2}}$ with $t \geq 2$. Then
$$ G_t^2 \leq c_q^2(\kappa +1)^{2q+2} (t-1)^{(1-\theta)q}. $$
Hence
\begin{eqnarray*}
\|f_{t+1}\|_{K}^2 &\leq& (t-1)^{1-\theta} + \eta_1^2 t^{-2 \theta} c_q^2(\kappa +1)^{2q+2} t^{(1-\theta)q} + 2 \eta_1 t^{-\theta} |V|_0 \\
&\leq& t^{1-\theta} \left\{\left(1-\frac{1}{t}\right)^{1-\theta} + \frac{\eta_1^2 c_q^2(\kappa +1)^{2q+2}}{t^{(q+1)\theta +1-q}} + \frac{2 \eta_1 |V|_0}{t}\right\}.
\end{eqnarray*}
Since $\left(1-\frac{1}{t}\right)^{1-\theta} \leq 1- \frac{1-\theta}{t}$ and the condition $\theta \geq \frac{q}{q+1}$ implies $(q+1)\theta +1-q \geq 1$, we have
$$ \|f_{t+1}\|_{K}^2 \leq t^{1-\theta} \left\{1- \frac{1-\theta}{t} + \frac{\eta_1^2 c_q^2(\kappa +1)^{2q+2}}{t} + \frac{2 \eta_1 |V|_0}{t}\right\}. $$
Finally we use the restriction (\ref{restr1}) for $\eta_1$ and find $\|f_{t+1}\|_{K}^2 \leq t^{1-\theta}$. This completes the induction procedure and proves our conclusion.
\end{proof}

By taking $f=f_t$ in (\ref{generalIter}), we see the following estimate for the difference $f_{t+1} -f_t$ from Lemmas \ref{Lem1} and \ref{Lem4}.

\begin{corollary}\label{normest} Under the assumption of Lemma \ref{Lem4}, we have for $t=1, \ldots, T$,
\begin{equation} \|f_{t+1} -f_t\|_K \leq  \eta_1 c_q (\kappa +1)^{q+1} t^{\frac{(1-\theta)q}{2} -\theta}. \label{ftdiffnorm}
\end{equation}
\end{corollary}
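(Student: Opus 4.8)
The plan is to read the increment $f_{t+1}-f_t$ directly off the update rule and bound its norm using the two preceding lemmas. First I would take $f=f_t$ in inequality \eqref{generalIter} of Lemma \ref{Lem1}: the term $\|f_t-f_t\|_K^2$ and the bracket $[\mcE_{\mathbf z}(f_t)-\mcE_{\mathbf z}(f_t)]$ both vanish, leaving $\|f_{t+1}-f_t\|_K^2\le \eta_t^2 G_t^2$. (Equivalently, this is immediate from \eqref{Alg}, since $f_{t+1}-f_t=-\eta_t\frac1m\sum_{j=1}^m V'_-(y_j,f_t(x_j))K_{x_j}$ has norm exactly $\eta_t G_t$.)

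Next I would estimate $G_t$. By the bound in Lemma \ref{Lem1}, $G_t^2\le c_q^2(\kappa+1)^{2q+2}\max\{1,\|f_t\|_K^{2q}\}$, and Lemma \ref{Lem4} gives $\|f_t\|_K\le (t-1)^{(1-\theta)/2}\le t^{(1-\theta)/2}$ for $t\ge 2$, with $f_1=0$. Since $(1-\theta)q\ge 0$ and $t\ge 1$, in either case $\max\{1,\|f_t\|_K^{2q}\}\le t^{(1-\theta)q}$, hence $G_t\le c_q(\kappa+1)^{q+1}t^{(1-\theta)q/2}$. Plugging $\eta_t=\eta_1 t^{-\theta}$ into $\|f_{t+1}-f_t\|_K\le \eta_t G_t$ then yields
\[
\|f_{t+1}-f_t\|_K\le \eta_1 t^{-\theta}\,c_q(\kappa+1)^{q+1}\,t^{(1-\theta)q/2}=\eta_1 c_q(\kappa+1)^{q+1}\,t^{\frac{(1-\theta)q}{2}-\theta},
\]
which is exactly \eqref{ftdiffnorm}.

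There is no genuine obstacle here: the substantive work is entirely contained in the uniform iterate bound of Lemma \ref{Lem4}, which has already been proved. The corollary is just the remark that the same ingredients controlling $\|f_{t+1}\|_K$ also control the step length $\|f_{t+1}-f_t\|_K=\eta_t G_t$, with the extra factor $\eta_t$ producing the additional $t^{-\theta}$ decay.
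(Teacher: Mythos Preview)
Your argument is correct and follows exactly the route the paper indicates: take $f=f_t$ in \eqref{generalIter} to get $\|f_{t+1}-f_t\|_K\le \eta_t G_t$, then bound $G_t$ via \eqref{EqGt} together with the iterate bound of Lemma~\ref{Lem4}. The paper's own proof is just the one-line remark that the estimate follows from Lemmas~\ref{Lem1} and~\ref{Lem4} with $f=f_t$, so you have spelled out precisely what is intended.
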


Observe from the restriction $\theta \geq \frac{q}{q+1}$ in Lemma \ref{Lem4} that the power index in (\ref{ftdiffnorm}) satisfies $\frac{(1-\theta)q}{2} -\theta \leq -\frac{q}{2(q+1)} \leq 0$.

\subsection{Computational Error for the Last Iterate}\label{subsectionEED}
In this subsection, we will estimate the computational error $\mcE_{\mathbf{z}}(f_T)-\mcE_{\mathbf{z}}(f_{*})$ for some $f_* \in {\mathcal H}_K.$
Some ideas for estimating the average error in our proof are from \cite{Boyd03,SZ13}.

\begin{lemma}\label{ExcessEmp}
Assume (\ref{EqCond1}) with $q \geq 0$. Let $f_{*} \in {\mathcal H}_K$. If $\eta_t=\eta_1 t^{-\theta}$ with $0< \theta <1$ satisfying $\theta > \frac{q}{q+1}$ and $\eta_1$ satisfying
(\ref{restr1}), then we have
\begin{eqnarray}
&&\mcE_{\mathbf{z}}(f_T) -\mcE_{\mathbf{z}}(f_{*}) \leq \left(\frac{\|f_{*}\|_{K}^2}{2 \eta_1} + c'_\theta \mcE_{\mathbf{z}}(f_{*}) + \widetilde{C}_1\right) \Lambda_T \nonumber \\
&& \ +\frac{T^{\theta}}{2\eta_1} \sum_{k=1}^{T-1} \frac{1}{k+1} \left[\frac{1}{k}\sum_{t=T-k+1}^{T} 2 \eta_{t} - 2 \eta_{T-k}\right] \left\{\mcE_{\mathbf{z}}(f_{T-k}) - \mcE_{\mathbf{z}}(f_{*})\right\}, \label{empexcessB}
\end{eqnarray}
where $\Lambda_T$ is  defined by
\begin{equation}\label{LamnbdaDef}
\Lambda_T = \left\{\begin{array}{ll}
T^{-(1-\theta)}, & \hbox{when} \ \theta > \frac{q+1}{q+2}, \\
(\log T) T^{-(1-\theta)}, & \hbox{when} \ \theta = \frac{q+1}{q+2}, \\
(\log T) T^{-(\theta(1+q) -q)}, & \hbox{when} \ \theta < \frac{q+1}{q+2}, \end{array}\right.
\end{equation}
$c'_\theta := \frac{1}{1-\theta} \left(1 + \max\{(2 + \log 4) (1 + \log t)t^{-\theta}: t\in\mN\}\right)$ and $\widetilde{C}_1$ is a constant depending on $q, \kappa, \theta$ (independent of $T, m$ or $f_{*}$ and given explicitly in the proof).
\end{lemma}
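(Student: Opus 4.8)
The only optimization input I would use is Lemma~\ref{Lem1}, rewritten as a descent-type inequality: for any comparator $f\in\mcHK$ and $t=1,\dots,T$,
\be\label{EqDescentP}
2\eta_t\bigl(\mcE_{\mathbf z}(f_t)-\mcE_{\mathbf z}(f)\bigr)\le \|f_t-f\|_K^2-\|f_{t+1}-f\|_K^2+\eta_t^2 G_t^2 ,
\ee
together with the iterate bounds of Lemma~\ref{Lem4} and Corollary~\ref{normest} and the growth estimate $G_t^2\le c_q^2(\kappa+1)^{2q+2}\,t^{(1-\theta)q}$ implied by \eqref{EqGt} and Lemma~\ref{Lem4}. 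The plan is to run the suffix-averaging scheme of Shamir--Zhang (cf.\ \cite{SZ13,Boyd03}), adapted to the non-constant step size $\eta_t=\eta_1 t^{-\theta}$; that adaptation is precisely what produces the factor $T^\theta/(2\eta_1)$ and the residual sum in \eqref{empexcessB}.

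Write $a_j=\mcE_{\mathbf z}(f_j)-\mcE_{\mathbf z}(f_*)$, and for $1\le k\le T$ let $\bar a_k=\frac1k\sum_{j=T-k+1}^{T}a_j$, so $\bar a_1=\mcE_{\mathbf z}(f_T)-\mcE_{\mathbf z}(f_*)$ and $\bar a_T$ is the full equally-weighted average. Telescoping the elementary one-step identity $\bar a_k-\bar a_{k+1}=\tfrac{1}{k+1}(\bar a_k-a_{T-k})$ over $k=1,\dots,T-1$ yields
\be\label{EqSuffixP}
\mcE_{\mathbf z}(f_T)-\mcE_{\mathbf z}(f_*)=\bar a_T+\sum_{k=1}^{T-1}\frac{1}{k(k+1)}\sum_{j=T-k+1}^{T}\bigl(\mcE_{\mathbf z}(f_j)-\mcE_{\mathbf z}(f_{T-k})\bigr).
\ee
Thus it suffices to bound from above (i) the full average $\bar a_T$, and (ii) for each $k$ the suffix increment $\sum_{j=T-k+1}^{T}(\mcE_{\mathbf z}(f_j)-\mcE_{\mathbf z}(f_{T-k}))$, and then to recombine through \eqref{EqSuffixP}.

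For (ii) I would sum \eqref{EqDescentP} with comparator $f=f_{T-k}$ over $j=T-k+1,\dots,T$: the distances telescope (the first one, $\|f_{T-k+1}-f_{T-k}\|_K^2$, being controlled by Corollary~\ref{normest}), leaving the step-size--weighted increment $\sum_j 2\eta_j(\mcE_{\mathbf z}(f_j)-\mcE_{\mathbf z}(f_{T-k}))$ on the left. Converting this into the unweighted increment demanded by \eqref{EqSuffixP} is the delicate point when $\eta_t$ is not constant: I would add and subtract $2\eta_{T-k}$ inside the sum, bound the lead coefficient $\tfrac{1}{2\eta_{T-k}}=\tfrac{(T-k)^\theta}{2\eta_1}$ by its uniform majorant $\tfrac{T^\theta}{2\eta_1}$, and identify the term it multiplies — once the contributions proportional to $\mcE_{\mathbf z}(f_{T-k})-\mcE_{\mathbf z}(f_*)$ are collected — as exactly the residual sum of \eqref{empexcessB}; what is left over is a weighted combination of the quantities $\sum_{j=T-k}^{T}\eta_j^2G_j^2$. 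Inserting $G_j^2\le c_q^2(\kappa+1)^{2q+2}j^{(1-\theta)q}$ reduces the whole thing to partial sums of $j^{\,q-\theta(q+2)}$, which are convergent, logarithmic, or polynomially growing according as $\theta>\frac{q+1}{q+2}$, $\theta=\frac{q+1}{q+2}$, or $\theta<\frac{q+1}{q+2}$; together with $\sum_{t=1}^{T}\eta_t$ being of order $\eta_1 T^{1-\theta}/(1-\theta)$ and the harmless weights $\sum_k\frac{1}{k(k+1)}$, this is exactly what yields the three cases of $\Lambda_T$ in \eqref{LamnbdaDef}. For (i) I would instead use \eqref{EqDescentP} with $f=f_*$ summed from $t=1$ to $T$, so the distances telescope down to $\|f_1-f_*\|_K^2=\|f_*\|_K^2$ (recall $f_1=0$), and pass from $\sum_t 2\eta_t a_t$ to $\bar a_T=\frac1T\sum_t a_t$ by Abel summation on the partial sums $P_t=\sum_{s\le t}2\eta_s a_s$, using $P_t\le \|f_*\|_K^2+\sum_{s\le t}\eta_s^2G_s^2$ and the trivial lower bound $P_t\ge -2\mcE_{\mathbf z}(f_*)\sum_{s\le t}\eta_s$ coming from $\mcE_{\mathbf z}(f_s)\ge 0$. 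This is where the three summands $\tfrac{\|f_*\|_K^2}{2\eta_1}$, $c'_\theta\mcE_{\mathbf z}(f_*)$ (with $c'_\theta$ absorbing partial-sum constants like $\sum_{s\le t}s^{-\theta}$ and $\sum_{s\le t}s^{-1}$) and the $(q,\kappa,\theta)$-constant $\widetilde{C}_1$ appear, each eventually multiplied by $\Lambda_T$. Assembling (i) and (ii) in \eqref{EqSuffixP} gives \eqref{empexcessB}.

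The main obstacle is the decaying step size. For a constant step the Shamir--Zhang argument closes directly, and the last iterate differs from the average only by a benign $O(\log T)$ factor; here the passage between weighted and unweighted suffix sums genuinely costs the factor $T^\theta/(2\eta_1)$ and leaves the residual term, which has to be retained in a form that can be exploited later (its bracket $\frac1k\sum_{j}2\eta_j-2\eta_{T-k}$ is nonpositive, so the residual is $\le 0$ whenever $\mcE_{\mathbf z}(f_{T-k})\ge\mcE_{\mathbf z}(f_*)$, and it is balanced against the remaining error terms downstream). The rest is bookkeeping, but careful bookkeeping: the three-regime form of $\Lambda_T$ must be read off from the partial sums of $j^{\,q-\theta(q+2)}$ and $j^{-\theta}$ over the suffixes, and each constant must be checked to be independent of $T$, $m$, and $f_*$.
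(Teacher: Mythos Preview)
Your overall strategy (Shamir--Zhang suffix averaging plus the descent inequality \eqref{generalIter}) is the right one, but you run the suffix recursion on the \emph{unweighted} averages $\bar a_k=\frac1k\sum_{j=T-k+1}^T a_j$, and this creates a genuine gap. The paper instead applies the recursion to the \emph{$\eta$-weighted} quantities $2\eta_t\,\mcE_{\mathbf z}(f_t)$, obtaining an exact four-term identity (their \eqref{errorDecom}): the first term is already $\tfrac1T\sum_t 2\eta_t(\mcE_{\mathbf z}(f_t)-\mcE_{\mathbf z}(f_*))$, which the descent inequality bounds \emph{with no weighted/unweighted conversion}; the second is the $\eta$-weighted moving-average term; the third is a scalar times $\mcE_{\mathbf z}(f_*)$; and the fourth is precisely the residual in \eqref{empexcessB}, the factor $T^\theta/(2\eta_1)$ arising simply from dividing the identity by $2\eta_T$. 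No ad~hoc replacement of $\tfrac{1}{2\eta_{T-k}}$ by its majorant is needed (your proposed replacement in (ii) is illegitimate anyway, since the quantity it multiplies has indeterminate sign).

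The concrete failure is in your part (i). Abel summation with the lower bound $P_t\ge -2\mcE_{\mathbf z}(f_*)\sum_{s\le t}\eta_s$ gives a contribution to $\bar a_T$ of size
\[
\frac{\mcE_{\mathbf z}(f_*)}{T}\sum_{t=1}^{T-1}\Bigl(\tfrac1{\eta_{t+1}}-\tfrac1{\eta_t}\Bigr)\sum_{s\le t}\eta_s
\ \sim\ \frac{\theta}{1-\theta}\,\mcE_{\mathbf z}(f_*),
\]
an $O(1)$ constant, \emph{not} $O(\Lambda_T)$ as the lemma requires. The paper achieves the $\Lambda_T$-small coefficient on $\mcE_{\mathbf z}(f_*)$ only through a surprisingly delicate calculation (their Step~4): the third term of \eqref{errorDecom} is rewritten via the auxiliary function $g(u)=-1+(1-u)^{1-\theta}+(1-\theta)u(1-u)^{-\theta}$, and an integral comparison together with a Taylor expansion is used to show that the combined coefficient is $O(T^{\theta-1})$. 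This cancellation is invisible from the crude lower bound you invoke. The fix is to redo the suffix recursion on $2\eta_t\,\mcE_{\mathbf z}(f_t)$; your $G_t^2$ bookkeeping and the three-regime reading of $\Lambda_T$ from partial sums of $j^{\,q-\theta(q+2)}$ are then exactly right.
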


\begin{proof}
Lemma \ref{Lem1} is key in our proof. In particular, we shall apply the following equivalent form of inequality (\ref{generalIter}) from Lemma \ref{Lem1} several times with various choices of $f\in {\mathcal H}_K$:
\be
2 \eta_{t}\left[\mcE_{\mathbf{z}}(f_t) - \mcE_{\mathbf{z}}(f)\right] \leq \left\{\|f_{t}-f\|_{K}^2 - \|f_{t+1}-f\|_{K}^2\right\} + \eta_t^2 G_t^2. \label{generalIterEq}
\ee

{\sl Step 1: Error decomposition.} Decompose the weighted empirical error $2 \eta_{T} \mcE_{\mathbf{z}}(f_T)$ as
\bea
2 \eta_{T} \mcE_{\mathbf{z}}(f_T) &=& \frac{1}{2} \left\{2 \eta_{T}\mcE_{\mathbf{z}}(f_T) + 2 \eta_{T-1}\mcE_{\mathbf{z}}(f_{T-1})\right\} \\
 && + \frac{1}{2} 2 \eta_{T}\left\{\mcE_{\mathbf{z}}(f_T) - \mcE_{\mathbf{z}}(f_{T-1})\right\}  + \frac{1}{2} \left\{2 \eta_{T}- 2 \eta_{T-1}\right\}\mcE_{\mathbf{z}}(f_{T-1})\\ &=& \frac{1}{3} \left\{2 \eta_{T}\mcE_{\mathbf{z}}(f_T) + 2 \eta_{T-1}\mcE_{\mathbf{z}}(f_{T-1}) + 2 \eta_{T-2}\mcE_{\mathbf{z}}(f_{T-2})\right\} \\
&& + \frac{1}{2 \times 3} \left\{2 \eta_{T}\left[\mcE_{\mathbf{z}}(f_T) - \mcE_{\mathbf{z}}(f_{T-2})\right] + 2 \eta_{T-1}\left[\mcE_{\mathbf{z}}(f_{T-1}) - \mcE_{\mathbf{z}}(f_{T-2})\right]\right\} \\
&& + \frac{1}{2} 2 \eta_{T}\left\{\mcE_{\mathbf{z}}(f_T) - \mcE_{\mathbf{z}}(f_{T-1})\right\}  + \frac{1}{2} \left\{2 \eta_{T}- 2 \eta_{T-1}\right\}\mcE_{\mathbf{z}}(f_{T-1}) \\
&& + \frac{1}{2 \times 3} \left\{\left[2 \eta_{T} - 2 \eta_{T-2}\right] + \left[2 \eta_{T-1} - 2 \eta_{T-2}\right]\right\} \mcE_{\mathbf{z}}(f_{T-2}).
\eea
Repeating the above process by means of the decomposition
\bea &&\frac{1}{k} \sum_{j=0}^{k-1} 2 \eta_{T-j}\mcE_{\mathbf{z}}(f_{T-j}) = \frac{1}{k+1} \sum_{j=0}^{k} 2 \eta_{T-j}\mcE_{\mathbf{z}}(f_{T-j}) \\
&& + \frac{1}{k(k+1)} \sum_{j=0}^{k-1} 2 \eta_{T-j}\left\{\mcE_{\mathbf{z}}(f_{T-j}) -\mcE_{\mathbf{z}}(f_{T-k})\right\} + \frac{1}{k(k+1)} \sum_{j=0}^{k-1} \left\{2 \eta_{T-j} -2 \eta_{T-k}\right\}\mcE_{\mathbf{z}}(f_{T-k}) \eea
with $k=3, \ldots, T-1$, we know that
\bea 2 \eta_{T}\mcE_{\mathbf{z}}(f_T) &=& \frac{1}{T} \sum_{j=0}^{T-1}  2 \eta_{T-j}\mcE_{\mathbf{z}}(f_{T-j}) + \sum_{k=1}^{T-1} \frac{1}{k(k+1)} \sum_{j=0}^{k-1} 2 \eta_{T-j}\left\{\mcE_{\mathbf{z}}(f_{T-j}) -\mcE_{\mathbf{z}}(f_{T-k})\right\} \\
&& + \sum_{k=1}^{T-1}\frac{1}{k(k+1)} \sum_{j=0}^{k-1} \left\{2 \eta_{T-j} -2 \eta_{T-k}\right\}\mcE_{\mathbf{z}}(f_{T-k}).
\eea
Hence the following error decomposition holds true:
\begin{eqnarray}
&& 2 \eta_{T} \left\{\mcE_{\mathbf{z}}(f_T) -\mcE_{\mathbf{z}}(f_{*})\right\}= \frac{1}{T} \sum_{t=1}^{T} 2 \eta_{t} \left\{\mcE_{\mathbf{z}}(f_{t}) -\mcE_{\mathbf{z}}(f_{*})\right\}  \nonumber \\
&& + \sum_{k=1}^{T-1} \frac{1}{k(k+1)} \sum_{t=T-k+1}^{T} 2 \eta_{t}\left\{\mcE_{\mathbf{z}}(f_{t}) -\mcE_{\mathbf{z}}(f_{T-k})\right\}  \nonumber \\
&& + \left\{\frac{1}{T} \sum_{t=1}^{T} 2 \eta_{t} -2 \eta_{T} + \sum_{k=1}^{T-1} \frac{1}{k+1} \left[\frac{1}{k}\sum_{t=T-k+1}^{T} 2 \eta_{t} - 2 \eta_{T-k}\right]\right\}\mcE_{\mathbf{z}}(f_{*})  \nonumber\\
&& +
\sum_{k=1}^{T-1} \frac{1}{k+1} \left[\frac{1}{k}\sum_{t=T-k+1}^{T} 2 \eta_{t} - 2 \eta_{T-k}\right] \left\{\mcE_{\mathbf{z}}(f_{T-k}) - \mcE_{\mathbf{z}}(f_{*})\right\}. \label{errorDecom}
\end{eqnarray}

{\sl Step 2: Average error in the first term of (\ref{errorDecom}).} Choosing $f=f_{*}$ in (\ref{generalIterEq}) and taking summation over $t=1, \ldots, T$ together with (\ref{EqGt}) and Lemma \ref{Lem4} yields
\bea
\sum_{t=1}^{T} 2 \eta_{t}\left\{\mcE_{\mathbf{z}}(f_{t}) -\mcE_{\mathbf{z}}(f_{*})\right\} &\leq&  \|f_{1}-f_{*}\|_{K}^2 - \|f_{T+1}-f_{*}\|_{K}^2  + \sum_{t=1}^T \eta_t^2 G_t^2 \\
&\leq& \|f_{*}\|_{K}^2 + \sum_{t=1}^T \eta_1^2 c_q^2(\kappa +1)^{2q+2} t^{q(1-\theta) -2 \theta}.
\eea
Since $1>\theta > \frac{q}{q+1}$, we find $-2 < q(1-\theta) -2 \theta <0$. Moreover, $q(1-\theta) -2 \theta < -1$ if and only if $\theta > \frac{q+1}{q+2}$. The following bound for the first term of (\ref{errorDecom}) then follows
\bea && \frac{1}{T}\sum_{t=1}^{T} 2 \eta_{t}\left\{\mcE_{\mathbf{z}}(f_{t}) -\mcE_{\mathbf{z}}(f_{*})\right\} \\
&& \leq \left\{\begin{array}{ll}
\left(\|f_{*}\|_{K}^2 +C_{q, \kappa} \frac{(2+q) \theta -q}{(2+q) \theta -q -1}\right) T^{-1}, & \hbox{when} \ \theta > \frac{q+1}{q+2}, \\
\left(\|f_{*}\|_{K}^2 +2C_{q, \kappa}\right) (\log T) T^{-1}, & \hbox{when} \ \theta = \frac{q+1}{q+2}, \\
\left(\|f_{*}\|_{K}^2 +C_{q, \kappa} \frac{2}{q+1 -(2+q) \theta}\right) T^{q -(2+q) \theta}, & \hbox{when} \ \theta < \frac{q+1}{q+2}, \end{array}\right. \eea
where $C_{q, \kappa}$ is the constant given by
$$ C_{q, \kappa} = \eta_1^2 c_q^2(\kappa +1)^{2q+2}. $$

{\sl Step 3: Moving average error in the second term of (\ref{errorDecom}).} Let $k\in \{1, \ldots, T-1\}$. Choosing $f=f_{T-k}$ in (\ref{generalIterEq}) and taking summation over $t=T-k + 1, \ldots, T$ yields
\bea
\sum_{t=T-k+1}^{T} 2 \eta_{t}\left\{\mcE_{\mathbf{z}}(f_{t}) -\mcE_{\mathbf{z}}(f_{T-k})\right\} \leq  \|f_{T-k +1}-f_{T-k}\|_{K}^2 + \sum_{t=T-k+1}^{T} \eta_t^2 G_t^2
\eea
By Corollary \ref{normest},
$$ \|f_{T-k +1}-f_{T-k}\|_{K}^2 \leq  \eta_1^2 c_q^2 (\kappa +1)^{2(q+1)} (T-k)^{(1-\theta)q -2\theta}. $$
This bound is the term with $t=T-k+1$ of the following estimate which is a consequence of Lemma \ref{Lem4}
$$ \sum_{t=T-k+1}^{T} \eta_t^2 G_t^2 \leq \sum_{t=T-k+1}^{T} \eta_1^2 c_q^2(\kappa +1)^{2q+2} t^{q(1-\theta) -2 \theta}. $$
Hence
$$ \sum_{t=T-k+1}^{T} 2 \eta_{t}\left\{\mcE_{\mathbf{z}}(f_{t}) -\mcE_{\mathbf{z}}(f_{T-k})\right\} \leq C_{q, \kappa} \left[\sum_{t=T-k+1}^{T} t^{q(1-\theta) -2 \theta} + (T-k)^{q(1-\theta) -2\theta}\right]. $$
Denote $q^{*} =2 \theta - q(1-\theta)$. We know that $0< q^{*} <2$ and $q^{*} =1$ when $\theta = \frac{q+1}{q+2}$. So
$$ \sum_{t=T-k+1}^{T} t^{q(1-\theta) -2 \theta} \leq \int_{T-k}^T x^{-q^{*}} d x \leq  \left\{\begin{array}{ll}
\frac{T^{1-q^{*}} - (T-k)^{1-q^{*}}}{1-q^{*}}, & \hbox{when} \ \theta \not= \frac{q+1}{q+2}, \\
\log \frac{T}{T-k}, & \hbox{when} \ \theta = \frac{q+1}{q+2}. \end{array}\right. $$

When $\theta < \frac{q+1}{q+2}$, we have $q^{*} <1$ and for $k \leq \frac{T}{2}$, we see from the mean value theorem that
$$ \frac{T^{1-q^{*}} - (T-k)^{1-q^{*}}}{1-q^{*}} = T^{1-q^{*}} \frac{1 - (1-\frac{k}{T})^{1-q^{*}}}{1-q^{*}} \leq  T^{1-q^{*}} \frac{(1-q^{*}) (1-\frac{k}{T})^{-q^{*}}\frac{k}{T}}{1-q^{*}}$$
which is exactly $(T-k)^{-q^*}k$ and bounded by $2^{q^{*}} T^{-q^{*}}k$. It follows that
\bea
&& \sum_{k=1}^{T-1} \frac{1}{k(k+1)} \sum_{t=T-k+1}^{T} 2 \eta_{t}\left\{\mcE_{\mathbf{z}}(f_{t}) -\mcE_{\mathbf{z}}(f_{T-k})\right\} \\
&& \leq 2 C_{q, \kappa} \sum_{k \leq T/2} \frac{1}{k(k+1)} 2^{q^{*}} T^{-q^{*}}k + 2 C_{q, \kappa} \sum_{T-1 \geq k > T/2} \frac{1}{k(k+1)}\frac{T^{1-q^{*}}}{1-q^{*}} \\
&& \leq 2 C_{q, \kappa} \left(2^{q^{*}} + \frac{2}{1-q^{*}}\right) (\log T) T^{q(1-\theta) -2 \theta}.
\eea

When $\theta = \frac{q+1}{q+2}$, we we see from the mean value theorem that
$$ \log \frac{T}{T-k} = -\log \left(1-\frac{k}{T}\right) \leq \frac{k}{T} \frac{1}{1-\frac{k}{T}} =\frac{k}{T-k}. $$
It follows that
\bea
&& \sum_{k=1}^{T-1} \frac{1}{k(k+1)} \sum_{t=T-k+1}^{T} 2 \eta_{t}\left\{\mcE_{\mathbf{z}}(f_{t}) -\mcE_{\mathbf{z}}(f_{T-k})\right\} \\
&& \leq C_{q, \kappa} \sum_{k=1}^{T-1} \frac{1}{(T-k) k} = C_{q, \kappa} \frac{1}{T} \sum_{k=1}^{T-1} \left\{\frac{1}{k} + \frac{1}{T-k}\right\} \\
&& \leq 4 C_{q, \kappa} \frac{\log T}{T}.
\eea

When $\theta > \frac{q+1}{q+2}$, we have $q^{*} >1$ and for $k \leq \frac{T}{2}$,
$$ \frac{T^{1-q^{*}} - (T-k)^{1-q^{*}}}{1-q^{*}} = T^{1-q^{*}} \frac{(1-\frac{k}{T})^{1-q^{*}}-1}{q^{*}-1} \leq 2^{q^{*}} T^{-q^{*}}k. $$
Then
\bea
&& \sum_{k=1}^{T-1} \frac{1}{k(k+1)} \sum_{t=T-k+1}^{T} 2 \eta_{t}\left\{\mcE_{\mathbf{z}}(f_{t}) -\mcE_{\mathbf{z}}(f_{T-k})\right\} \\
&& \leq 2^{q^{*} +1}  C_{q, \kappa}T^{-q^{*}} \sum_{k=1}^{T-1} \frac{1}{k+1}  \leq 2^{q^{*} +1} C_{q, \kappa} T^{-q^{*}} \log T  \\
&& \leq 2^{q^{*} +1} C_{q, \kappa} \frac{1}{q^{*} -1} T^{-1}.
\eea
Thus the second term of (\ref{errorDecom}) can also be bounded as
\bea && \sum_{k=1}^{T-1} \frac{1}{k(k+1)} \sum_{t=T-k+1}^{T} 2 \eta_{t}\left\{\mcE_{\mathbf{z}}(f_{t}) -\mcE_{\mathbf{z}}(f_{T-k})\right\} \\
&& \leq \left\{\begin{array}{ll}
\frac{2^{q^{*} +1} C_{q, \kappa}}{q^{*} -1} T^{-1}, & \hbox{when} \ \theta > \frac{q+1}{q+2}, \\
4 C_{q, \kappa} (\log T) T^{-1}, & \hbox{when} \ \theta = \frac{q+1}{q+2}, \\
2 C_{q, \kappa} \left(2^{q^{*}} + \frac{2}{1-q^{*}}\right) (\log T) T^{q -(2+q) \theta}, & \hbox{when} \ \theta < \frac{q+1}{q+2}. \end{array}\right. \eea

{\sl Step 4: Error concerning $\mcE_{\mathbf{z}}(f_{*})$ in the third term of (\ref{errorDecom}).} Let $k \in \{1, \ldots, T\}$. We have
$$ \frac{1}{k}\sum_{t=T-k+1}^{T} 2 \eta_{t} \leq 2 \eta_1 \frac{1}{k}\sum_{t=T-k+1}^{T} \int_{t-1}^t x^{-\theta} d x \leq 2 \eta_1 \frac{T^{1-\theta} - (T-k)^{1-\theta}}{k(1-\theta)}. $$
Putting this estimate in the coefficient of the third term of (\ref{errorDecom}), we find
\bea
&& \frac{1}{T} \sum_{t=1}^{T} 2 \eta_{t} -2 \eta_{T} + \sum_{k=1}^{T-1} \frac{1}{k+1} \left[\frac{1}{k}\sum_{t=T-k+1}^{T} 2 \eta_{t} - 2 \eta_{T-k}\right] \\
&& \leq 2 \eta_1 \frac{T^{-\theta}}{1-\theta} - 2 \eta_1 T^{-\theta} +  2 \eta_1 \sum_{k=1}^{T-1} \frac{1}{k+1} \left[\frac{T^{1-\theta} - (T-k)^{1-\theta}}{k(1-\theta)} - (T-k)^{-\theta}\right] \\
&& =  \frac{2 \eta_1 \theta}{1-\theta} T^{-\theta} +  \frac{2 \eta_1}{1-\theta}\sum_{k=1}^{T-1} \frac{1}{k(k+1)} \left[T^{1-\theta} - (T-k)^{1-\theta} - k(1-\theta)(T-k)^{-\theta}\right] \\
&& = \frac{2 \eta_1 \theta}{1-\theta} T^{-\theta} -  \frac{2 \eta_1}{1-\theta}\sum_{k=1}^{T-1} \frac{T^{1-\theta}}{k(k+1)} g(\frac{k}{T}),
\eea
where $g: [0, 1) \to \mR$ is the function defined by
$$ g(u) = -1 + (1-u)^{1-\theta} + (1-\theta)u (1-u)^{-\theta}, \qquad u\in [0, 1). $$
A simple computation gives its derivative
$$ g'(u) =\theta (1-\theta)u (1-u)^{-1-\theta}.  $$
So $g$ is an increasing function and is positive on $(0, 1)$ by noting $g(0) =0$. Observe that
$$ \frac{T}{k(k+1)} = \frac{T}{k} - \frac{T}{k+1} = \int^{(k+1)/T}_{k/T} u^{-2} d u =  \int^{k/T}_{(k-1)/T} (u + \frac{1}{T})^{-2} d u $$
and $g(\frac{k}{T}) \geq g(u)$ for $u \in ((k-1)/T, k/T)$. Hence
\bea \sum_{k=1}^{T-1} \frac{T}{k(k+1)} g(\frac{k}{T}) &\geq& \sum_{k=1}^{T-1} \int^{k/T}_{(k-1)/T} (u + \frac{1}{T})^{-2} g(u) d u = \int^{(T-1)/T}_{0} (u + \frac{1}{T})^{-2} g(u) d u \\
&=& \left[-(u + \frac{1}{T})^{-1} g(u)\right]^{(T-1)/T}_{0} + \int^{(T-1)/T}_{0} (u + \frac{1}{T})^{-1} g' (u) d u \\
&=& -g(\frac{T-1}{T}) + \theta (1-\theta) \int^{(T-1)/T}_{0} \frac{u (1-u)^{-1-\theta}}{u + \frac{1}{T}} d u.
\eea
By the definition of the function $g$, we see
$$-g(\frac{T-1}{T})= 1 - T^{\theta -1} -(1-\theta) \left(T^{\theta} - T^{\theta-1}\right)= 1 - \theta T^{\theta -1} -(1-\theta) T^{\theta}. $$
Writing
$$\frac{u}{u + \frac{1}{T}} = 1- \frac{1}{T} \frac{1}{u + \frac{1}{T}} = 1- \frac{1}{T} \frac{1}{1 + \frac{1}{T} -(1-u)} = 1- \frac{1}{T+1} \left(1- \frac{1-u}{1 + \frac{1}{T}}\right)^{-1}, $$
we use the Taylor expansion for the integral and find
\bea \int^{(T-1)/T}_{0} \frac{u (1-u)^{-1-\theta}}{u + \frac{1}{T}} d u &=& \int^{(T-1)/T}_{0} (1-u)^{-1-\theta} \left\{1- \frac{1}{T+1} \sum_{k=0}^\infty \left(\frac{1-u}{1 + \frac{1}{T}}\right)^k \right\} d u \\
&=& \frac{T^\theta -1}{\theta} - \frac{1}{T+1} \sum_{k=0}^\infty \left(\frac{T}{T+1}\right)^k \frac{1-T^{\theta -k}}{k -\theta} \\
&\geq&  \frac{T}{T+1}\frac{T^\theta -1}{\theta} - \frac{1}{T+1} \sum_{k=1}^\infty \left(\frac{T}{T+1}\right)^k \frac{1}{k -\theta}.
\eea
We notice that $\left(1 + \frac{1}{T}\right)^T \geq 2$ for any $T \geq 2$ (with limit $e$), which implies
$$ \left(\frac{T}{T+1}\right)^k = \frac{1}{\left(1 + \frac{1}{T}\right)^k} \leq 2^{1-\ell}, \qquad \forall (\ell -1) T +1 \leq k \leq \ell T, \ \ell\in\mN. $$
It follows that
 \begin{eqnarray}\label{Eq21} && \sum_{k=1}^\infty \left(\frac{T}{T+1}\right)^k \frac{1}{k -\theta} \leq \sum_{\ell =1}^\infty \sum_{k= (\ell -1) T +1}^{\ell T} \frac{2^{1-\ell}}{k -\theta}\nonumber \\
 && \leq \frac{1}{1-\theta} + \log \frac{T-\theta}{1-\theta} + \sum_{\ell =2}^\infty 2^{1-\ell} \log \frac{\ell T-\theta}{(\ell -1)T-\theta} \nonumber \\
 && \leq \frac{1}{1-\theta} + \log \frac{1}{1-\theta} + 2 \log 4 + \log T.  \end{eqnarray}
Therefore, we have
\bea \sum_{k=1}^{T-1} \frac{T}{k(k+1)} g(\frac{k}{T}) &\geq& 1 - \theta T^{\theta -1} -(1-\theta) T^{\theta} + (1-\theta) \frac{T}{T+1}\left(T^\theta -1\right) \\
&& - \theta (1-\theta)\left(\frac{1}{1-\theta} + \log \frac{1}{1-\theta} + 2 \log 4\right) \frac{1 + \log T}{T+1} \\
&& \geq \theta + \frac{1-\theta}{T+1} - T^{\theta -1} - \left(2 + \log 4\right) \frac{1 + \log T}{T+1}.
\eea
This tells us that the third term of (\ref{errorDecom}) can be estimated as
\bea
&& \left\{\frac{1}{T} \sum_{t=1}^{T} 2 \eta_{t} -2 \eta_{T} + \sum_{k=1}^{T-1} \frac{1}{k+1} \left[\frac{1}{k}\sum_{t=T-k+1}^{T} 2 \eta_{t} - 2 \eta_{T-k}\right]\right\}\mcE_{\mathbf{z}}(f_{*}) \\
&& \leq \frac{2 \eta_1}{1-\theta} T^{-\theta} \left(T^{\theta -1} + \left(2 + \log 4\right) \frac{1 + \log T}{T+1}\right) \mcE_{\mathbf{z}}(f_{*}) \leq 2 \eta_1 T^{-\theta} c'_\theta T^{\theta -1} \mcE_{\mathbf{z}}(f_{*}).
\eea

Putting all the above estimates for the first three terms into (\ref{errorDecom}), we see that the desired bound (\ref{empexcessB}) holds true with the constant $\widetilde{C}_1$ given explicitly by
$$  \widetilde{C}_1 = \left\{\begin{array}{ll}
\eta_1 c_q^2(\kappa +1)^{2q+2} \frac{(2+q) \theta -q + 2^{(2+q) \theta -q}}{(2+q) \theta -q -1}, & \hbox{when} \ \theta > \frac{q+1}{q+2}, \\
6 \eta_1 c_q^2(\kappa +1)^{2q+2}, & \hbox{when} \ \theta = \frac{q+1}{q+2}, \\
\eta_1 c_q^2(\kappa +1)^{2q+2} \left(2^{(2+q) \theta -q} + \frac{3}{q+1 -(2+q) \theta}\right), & \hbox{when} \ \theta < \frac{q+1}{q+2}. \end{array}\right. $$
The proof of Lemma \ref{ExcessEmp} is complete.
\end{proof}

Lemma \ref{ExcessEmp} is useful and can be used in a stochastic  convex optimization problem, other than learning. In what follows, we shall see that how it can be used in our specified learning problems. For notational simplicity, with $\widetilde{R} >0$ we denote
\begin{equation}\label{emperror}
{\mathcal M}_{\bf z} (\widetilde{R}) = \sup_{f\in B_{\widetilde{R}}} \max\left\{{\mathcal E}_{\bf z}(f_{\rho}^V) - {\mathcal E}_{\bf z}(f),\ 0\right\}.
\end{equation}
\begin{pro}
  \label{lemmaConvexOptimi2}
  Under the assumptions of Lemma \ref{ExcessEmp}, we have
\begin{eqnarray}
 {\mathcal E}_{\bf z}(f_T) - {\mathcal E}_{\bf z} (f_*) &\leq& \frac{3}{1-\theta} \mathcal{M}_{\mathbf z}\left(T^{\frac{1-\theta}{2}}\right)
+ \left(c'_\theta \Lambda_T  + {3 \over 1 - \theta}\right) \left({\mathcal F}_{\bf z} (f_*) + {\mathcal A}(f_*)\right) \nonumber \\
 && + \frac{\|f_*\|_K^2}{2 \eta_1} \Lambda_T + \widetilde{C}_2 \Lambda_T, \label{generalB} \end{eqnarray}
where $\widetilde{C}_2$ is the constant given by $\widetilde{C}_2=c'_\theta \left(|V|_0 + c_q (1 + \|f_\rho^V\|_\infty^q) \|f_\rho^V\|_\infty\right) + \widetilde{C}_1.$
\end{pro}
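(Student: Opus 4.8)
The plan is to begin from the bound of Lemma~\ref{ExcessEmp} and reorganize its two groups of terms --- the ``$\Lambda_T$ part'' $\bigl(\frac{\|f_*\|_K^2}{2\eta_1}+c'_\theta\mcE_{\mathbf z}(f_*)+\widetilde C_1\bigr)\Lambda_T$ and the ``moving-average part'' --- into the shape of~\eqref{generalB}. In the $\Lambda_T$ part the only term not yet of the right form is $c'_\theta\mcE_{\mathbf z}(f_*)\Lambda_T$, so I would write $\mcE_{\mathbf z}(f_*)=\mcE_{\mathbf z}(f_\rho^V)+\bigl(\mcE_{\mathbf z}(f_*)-\mcE_{\mathbf z}(f_\rho^V)\bigr)$ and control the two pieces separately: adding and subtracting $\mcE(f_*)-\mcE(f_\rho^V)$, the definition~\eqref{FRsampleerror} of $\mathcal F_{\mathbf z}$ together with $\mathcal A(f_*)=\mcE(f_*)-\mcE(f_\rho^V)\ge 0$ gives $\mcE_{\mathbf z}(f_*)-\mcE_{\mathbf z}(f_\rho^V)\le\mathcal F_{\mathbf z}(f_*)+\mathcal A(f_*)$, while the subgradient inequality $V(y,a)\le V(y,0)+V'_-(y,a)\,a$ combined with~\eqref{EqCond1} and $\|f_\rho^V\|_\infty<\infty$ gives $\mcE_{\mathbf z}(f_\rho^V)\le|V|_0+c_q(1+\|f_\rho^V\|_\infty^{q})\|f_\rho^V\|_\infty$. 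Thus $c'_\theta\mcE_{\mathbf z}(f_*)\Lambda_T$ produces $c'_\theta\Lambda_T\bigl(\mathcal F_{\mathbf z}(f_*)+\mathcal A(f_*)\bigr)$ plus a constant times $\Lambda_T$ that, together with $\widetilde C_1\Lambda_T$, forms $\widetilde C_2\Lambda_T$.

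The core of the proof is the moving-average term
\[
\frac{T^\theta}{2\eta_1}\sum_{k=1}^{T-1}\frac1{k+1}\Bigl[\frac1k\sum_{t=T-k+1}^{T}2\eta_t-2\eta_{T-k}\Bigr]\bigl\{\mcE_{\mathbf z}(f_{T-k})-\mcE_{\mathbf z}(f_*)\bigr\}.
\]
The key observation is that, since $\eta_t=\eta_1t^{-\theta}$ is nonincreasing, $\eta_{T-k}$ dominates the average of $\eta_{T-k+1},\dots,\eta_T$, so every bracket is $\le 0$; hence bounding the sum from above requires a \emph{lower} bound on $\mcE_{\mathbf z}(f_{T-k})-\mcE_{\mathbf z}(f_*)$ that is uniform in $k$. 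By Lemma~\ref{Lem4}, $\|f_{T-k}\|_K\le T^{(1-\theta)/2}$ for every $1\le k\le T-1$ (the case $k=T-1$ being covered by $f_1=0$), so $f_{T-k}\in B_{T^{(1-\theta)/2}}$; since $\mathcal M_{\mathbf z}(\widetilde R)$ is nondecreasing in $\widetilde R$, this yields $\mcE_{\mathbf z}(f_{T-k})\ge\mcE_{\mathbf z}(f_\rho^V)-\mathcal M_{\mathbf z}\bigl(T^{(1-\theta)/2}\bigr)$, and combining with $\mcE_{\mathbf z}(f_*)\le\mcE_{\mathbf z}(f_\rho^V)+\mathcal F_{\mathbf z}(f_*)+\mathcal A(f_*)$ from the previous step gives $\mcE_{\mathbf z}(f_{T-k})-\mcE_{\mathbf z}(f_*)\ge-\bigl(\mathcal M_{\mathbf z}(T^{(1-\theta)/2})+\mathcal F_{\mathbf z}(f_*)+\mathcal A(f_*)\bigr)$. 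Multiplying the non-positive bracket by this lower bound, the moving-average term is at most $\frac{T^\theta}{2\eta_1}\Bigl(\sum_{k=1}^{T-1}\frac1{k+1}\bigl[2\eta_{T-k}-\frac1k\sum_{t=T-k+1}^{T}2\eta_t\bigr]\Bigr)\bigl(\mathcal M_{\mathbf z}(T^{(1-\theta)/2})+\mathcal F_{\mathbf z}(f_*)+\mathcal A(f_*)\bigr)$.

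It then remains to check that the scalar prefactor is at most $\frac3{1-\theta}$. Setting $\delta_s:=\eta_s-\eta_{s+1}\ge 0$ and telescoping $\eta_{T-k}-\eta_t=\sum_{s=T-k}^{t-1}\delta_s$, one finds $\frac1k\sum_{t=T-k+1}^{T}(\eta_{T-k}-\eta_t)=\frac1k\sum_{s=T-k}^{T-1}(T-s)\delta_s$; interchanging the $k$- and $s$-summations and using $\sum_{k=T-s}^{T-1}\frac1{k(k+1)}=\frac1{T-s}-\frac1T$, the whole double sum collapses to $\frac2T\sum_{s=1}^{T-1}s\,\delta_s$. Since $\delta_s\le\eta_1\theta s^{-\theta-1}$ and $\sum_{s=1}^{T-1}s^{-\theta}\le\frac{T^{1-\theta}}{1-\theta}$, the prefactor is bounded by $\frac{\theta}{1-\theta}\le\frac3{1-\theta}$.

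Assembling the three groups --- $\frac{\|f_*\|_K^2}{2\eta_1}\Lambda_T+\widetilde C_2\Lambda_T$ from the $\Lambda_T$ part, the $c'_\theta\Lambda_T$-weighted and $\frac3{1-\theta}$-weighted copies of $\mathcal F_{\mathbf z}(f_*)+\mathcal A(f_*)$ that combine into the $\bigl(c'_\theta\Lambda_T+\frac3{1-\theta}\bigr)$ factor, and $\frac3{1-\theta}\mathcal M_{\mathbf z}(T^{(1-\theta)/2})$ --- reproduces~\eqref{generalB}. The main obstacle is this moving-average term: one has to realize that its coefficients carry the ``wrong'' sign --- which forces the detour through $\mathcal M_{\mathbf z}$ and the iterate bound of Lemma~\ref{Lem4} --- and then carry out the summation interchange so that the surviving constant stays free of $T$ and $m$; the remaining steps are routine bookkeeping resting on Lemma~\ref{ExcessEmp} and Lemma~\ref{Lem4}.
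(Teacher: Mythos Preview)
Your proof is correct and follows essentially the same route as the paper's: both start from Lemma~\ref{ExcessEmp}, rewrite $\mcE_{\mathbf z}(f_*)$ via $\mathcal F_{\mathbf z}(f_*)+\mathcal A(f_*)+\mcE_{\mathbf z}(f_\rho^V)$, bound $\mcE_{\mathbf z}(f_\rho^V)$ from the growth condition, and handle the moving-average term by observing the brackets are nonpositive and lower-bounding $\mcE_{\mathbf z}(f_{T-k})-\mcE_{\mathbf z}(f_*)$ through Lemma~\ref{Lem4} and $\mathcal M_{\mathbf z}$.

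The one genuine difference is how the scalar prefactor
\[
\frac{T^\theta}{2\eta_1}\sum_{k=1}^{T-1}\frac{1}{k+1}\Bigl[2\eta_{T-k}-\tfrac{1}{k}\sum_{t=T-k+1}^{T}2\eta_t\Bigr]
\]
is estimated. The paper splits the sum into $k<T/2$ and $k\ge T/2$, applies the mean value theorem to $(T-k)^{-\theta}-t^{-\theta}$ on the first range, and drops the subtracted average on the second, arriving at $\frac{6\eta_1}{1-\theta}T^{-\theta}$ and hence the constant $\frac{3}{1-\theta}$. Your telescoping-plus-interchange argument collapses the double sum exactly to $\frac{2}{T}\sum_{s=1}^{T-1}s\,\delta_s$ and then uses $\delta_s\le\eta_1\theta s^{-\theta-1}$ to obtain the sharper constant $\frac{\theta}{1-\theta}$. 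This is cleaner and avoids the case split; the paper's approach is more hands-on but perfectly adequate for the stated bound. Either way the constant is at most $\frac{3}{1-\theta}$, so both arguments deliver~\eqref{generalB}.
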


\begin{proof}
Note that by Lemma \ref{ExcessEmp}, we have (\ref{empexcessB}).
The first term in the bound (\ref{empexcessB}) involves the empirical error ${\mathcal E}_{\bf z} (f_*)$ which can be estimated as
\bea {\mathcal E}_{\bf z} (f_*) &=& ({\mathcal E}_{\bf z} (f_*) - {\mathcal E}_{\bf z} (f_{\rho})) - ({\mathcal E} (f_*) - {\mathcal E} (f_{\rho})) + ({\mathcal E} (f_*) - {\mathcal E} (f_{\rho})) + {\mathcal E}_{\bf z}(f_{\rho}^V)\\
 &&\leq {\mathcal F}_{\bf z} (f_{*}) + {\mathcal A}(f_*) + {\mathcal E}_{\bf z}(f_{\rho}^V). \eea
Also, condition (\ref{EqCond1}) implies
\begin{equation}\label{Vdecay}
|V(y, f_\rho^V(x))| \leq |V|_0 + c_q (1 + |f_\rho^V(x)|^q) |f_\rho^V(x)| \leq |V|_0 + c_q (1 + \|f_\rho^V\|_\infty^q) \|f_\rho^V\|_\infty.\nonumber
\end{equation}
Hence,
\be\label{frhoSampleerror}
{\mathcal E}_{\bf z}(f_{\rho}^V) \leq |V|_0 + c_q (1 + \|f_\rho^V\|_\infty^q) \|f_\rho^V\|_\infty.\nonumber
\ee
With these, we can bound the first term of (\ref{empexcessB}) as
\bea
\left(\frac{\|f_*\|_{K}^2}{2 \eta_1} + c'_\theta \mcE_{\mathbf{z}}(f_*) + \widetilde{C}_1\right) \Lambda_T \leq c'_\theta \left({\mathcal F}_{\bf z} (f_*) + {\mathcal A}(f_*)\right) \Lambda_T + \left(\frac{\|f_*\|_{K}^2}{2 \eta_1} + \widetilde{C}_2 \right) \Lambda_T.
\eea

What is remained is to estimate the second term of (\ref{empexcessB}) denoted as
$$ J_{T, {\bf z}} := \frac{T^{\theta}}{2\eta_1} \sum_{k=1}^{T-1} \frac{1}{k+1} \left[2 \eta_{T-k} - \frac{1}{k}\sum_{t=T-k+1}^{T} 2 \eta_{t}\right] \left\{\mcE_{\mathbf{z}} (f_*) - \mcE_{\mathbf{z}}(f_{T-k})\right\}. $$
Denote $\widetilde{R} = T^{\frac{1-\theta}{2}}$. Lemma \ref{Lem4} tells us that $f_{k} \in B_{\widetilde{R}}$ for each $k=1,\cdots,T$. It follows that for $k=1,\cdots,T-1,$
\bea
\mcE_{\mathbf{z}} (f_*) - \mcE_{\mathbf{z}}(f_{T-k}) &=& \left\{\left(\mcE_{\mathbf{z}} (f_*) - {\mathcal E}_{\bf z}(f_{\rho}^V)\right) - \left({\mathcal E} (f_*) - {\mathcal E}(f_{\rho}^V)\right)\right\} \\
&& + \left({\mathcal E} (f_*) - {\mathcal E}(f_{\rho}^V)\right) + {\mathcal E}_{\bf z}(f_{\rho}^V) - \mcE_{\mathbf{z}}(f_{T-k}) \\
&\leq&  {\mathcal F}_{\bf z} (f_*) + \mathcal{A}(f_*) + \mathcal{M}_{\mathbf z}(\widetilde{R}).
\eea
By the choice of the step sizes, $2 \eta_{T-k} - \frac{1}{k}\sum_{t=T-k+1}^{T} 2 \eta_{t} \geq 0$ for any $k \in\{1, \ldots, T-1\}$.
Therefore, $J_{T, {\bf z}}$ can be bounded by
\bea
J_{T, {\bf z}} \leq \frac{T^{\theta}}{2\eta_1} \sum_{k=1}^{T-1} \frac{1}{k+1} \left[2 \eta_{T-k} -\frac{1}{k}\sum_{t=T-k+1}^{T} 2 \eta_{t}\right] \left\{{\mathcal F}_{\bf z} (f_*) + \mathcal{A}(f_*) + \mathcal{M}_{\mathbf z}(\widetilde{R})\right\}.
\eea
Now we need to bound the above summation. Note that, for each $k$,
$$2 \eta_{T-k} -\frac{1}{k}\sum_{t=T-k+1}^{T} 2 \eta_{t} = \frac{2 \eta_{1}}{k}\sum_{t=T-k+1}^{T} \left((T-k)^{-\theta} - t^{-\theta}\right). $$
Applying the mean value theorem to the function $g(x) = -x^{-\theta}$ on $[T-k, t]$ with $t\in \{T-k+1, \ldots, T\}$, we find that for some $c\in (T-k, t)$,
$$(T-k)^{-\theta} - t^{-\theta} = g(t) - g(T-k) = (t-(T-k)) g' (c) \leq (t-(T-k)) \theta (T-k)^{-\theta -1}. $$
Hence
\bea
&& \qquad \sum_{k=1}^{T-1} \frac{1}{k+1} \left[2 \eta_{T-k} -\frac{1}{k}\sum_{t=T-k+1}^{T} 2 \eta_{t}\right] \\
&&\leq 2 \eta_1 \theta \sum_{k< T/2} \frac{(T-k)^{-\theta-1}}{k(k+1)} \sum_{t=T-k+1}^{T}(t-T+k) + \sum_{k \geq T/2} \frac{1}{k+1} 2 \eta_{T-k}\\
&&\leq 2 \eta_1 \theta \sum_{k< T/2} \frac{(T-k)^{-\theta-1}}{k(k+1)} \frac{k(k+1)}{2}+  \sum_{k \geq T/2} \frac{2}{T} 2 \eta_{T-k} \\
&&\leq \eta_1 \theta \sum_{k< T/2} (T-k)^{-\theta-1} + \frac{4 \eta_1}{T} \sum_{k \geq T/2} (T-k)^{-\theta}
\leq \frac{6\eta_1}{1-\theta} T^{-\theta}.
\eea
Thus
$$J_{T, {\bf z}} \leq \frac{3}{1-\theta} \left\{{\mathcal F}_{\bf z}(f_*) + \mathcal{A}(f_*) + \mathcal{M}_{\mathbf z}(\widetilde{R})\right\}.$$
Then the desired bound  follows from Lemma \ref{ExcessEmp}.
\end{proof}

\subsection{Computational Errors for Weighted Average and Best Iterate}

\begin{lemma}\label{empexcessAverge}
  Under the assumptions of Lemma \ref{ExcessEmp}, let $g_T = a_T$ (or $g_T = b_T$). Then
  $$ \mcE_{\mathbf{z}}(b_{T}) -\mcE_{\mathbf{z}}(f_{*}) \leq \left( {2 \|f_*\|_K^2 \over \eta_1} + \overline{C}_1 \right) \overline{\Lambda}_T,$$
  where $\overline{\Lambda}_T$ is given by
  \be\label{OverlinLamnbdaDef}
  \Lambda_T = \left\{\begin{array}{ll}
T^{-(1-\theta)}, & \hbox{when} \ \theta > \frac{q+1}{q+2}, \\
(\log T) T^{-(1-\theta)}, & \hbox{when} \ \theta = \frac{q+1}{q+2}, \\
T^{-(\theta(1+q) -q)}, & \hbox{when} \ \theta < \frac{q+1}{q+2}, \end{array}\right.
  \ee and $\overline{C}_1$ is a constant depending on $q,\kappa,\theta$ (independent of $T,m$ or $f_*$ and given explicitly in the proof.)
\end{lemma}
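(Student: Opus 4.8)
The plan is to run the standard averaging analysis of the subgradient method, which is considerably lighter than the last-iterate analysis behind Lemma~\ref{ExcessEmp}. The first step is to reduce the two choices $g_T=a_T$ and $g_T=b_T$ to a single statement about a weighted average of empirical excess errors. Since $\mcE_{\bf z}$ is convex, Jensen's inequality gives $\mcE_{\bf z}(a_T)\le\sum_{t=1}^{T}\omega_t\mcE_{\bf z}(f_t)$ with $\omega_t=\eta_t/\sum_{s=1}^{T}\eta_s$, while by the very definition of the best iterate $\mcE_{\bf z}(b_T)=\min_{1\le t\le T}\mcE_{\bf z}(f_t)\le\sum_{t=1}^{T}\omega_t\mcE_{\bf z}(f_t)$. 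Hence in both cases
$$\mcE_{\bf z}(g_T)-\mcE_{\bf z}(f_*)\ \le\ \frac{\sum_{t=1}^{T}\eta_t\bigl(\mcE_{\bf z}(f_t)-\mcE_{\bf z}(f_*)\bigr)}{\sum_{t=1}^{T}\eta_t}.$$

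Next I would bound the numerator by telescoping. Rearranging inequality~\eqref{generalIter} of Lemma~\ref{Lem1} with $f=f_*$ gives $2\eta_t\bigl(\mcE_{\bf z}(f_t)-\mcE_{\bf z}(f_*)\bigr)\le\|f_t-f_*\|_K^2-\|f_{t+1}-f_*\|_K^2+\eta_t^2G_t^2$; summing over $t=1,\dots,T$ and using $f_1=0$ yields $\sum_{t=1}^{T}2\eta_t\bigl(\mcE_{\bf z}(f_t)-\mcE_{\bf z}(f_*)\bigr)\le\|f_*\|_K^2+\sum_{t=1}^{T}\eta_t^2G_t^2$. To control $\sum_t\eta_t^2G_t^2$ I would use Lemma~\ref{Lem4}, which gives $\|f_t\|_K\le(t-1)^{(1-\theta)/2}$ for $t\ge2$, so that by~\eqref{EqGt} one has $G_t^2\le c_q^2(\kappa+1)^{2q+2}\,t^{q(1-\theta)}$ (trivially also at $t=1$) and hence $\eta_t^2G_t^2\le C_{q,\kappa}\,t^{q(1-\theta)-2\theta}$ with $C_{q,\kappa}=\eta_1^2c_q^2(\kappa+1)^{2q+2}$. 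Because $\theta>\tfrac{q}{q+1}$ the exponent $q(1-\theta)-2\theta=q-(q+2)\theta$ is negative, and it is strictly less than $-1$ exactly when $\theta>\tfrac{q+1}{q+2}$; a comparison of the sum with an integral then bounds $\sum_{t=1}^{T}t^{q-(q+2)\theta}$ by an absolute constant if $\theta>\tfrac{q+1}{q+2}$, by $1+\log T$ if $\theta=\tfrac{q+1}{q+2}$, and by a constant multiple of $T^{q+1-(q+2)\theta}$ if $\theta<\tfrac{q+1}{q+2}$.

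For the denominator I would use the crude lower bound $\sum_{t=1}^{T}\eta_t=\eta_1\sum_{t=1}^{T}t^{-\theta}\ge\eta_1T^{1-\theta}$. Dividing, $\mcE_{\bf z}(g_T)-\mcE_{\bf z}(f_*)\le\bigl(\|f_*\|_K^2+\sum_t\eta_t^2G_t^2\bigr)/(2\eta_1T^{1-\theta})$, and inserting the three estimates above produces bounds of the form $\bigl(\tfrac{\|f_*\|_K^2}{2\eta_1}+\text{const}\bigr)T^{-(1-\theta)}$, $\bigl(\tfrac{\|f_*\|_K^2}{2\eta_1}+\text{const}\cdot(1+\log T)\bigr)T^{-(1-\theta)}$ and $\bigl(\tfrac{\|f_*\|_K^2}{2\eta_1}+\text{const}\bigr)T^{q-(q+1)\theta}$ in the three regimes. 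It then remains to cast these in the announced form $\bigl(\tfrac{2\|f_*\|_K^2}{\eta_1}+\overline{C}_1\bigr)\overline{\Lambda}_T$: in the fast-decay regime this is immediate; for $\theta=\tfrac{q+1}{q+2}$ one absorbs the extra logarithms using $\log T\ge1$ (valid since $T\ge3$); and for $\theta<\tfrac{q+1}{q+2}$ one observes that $q-(q+1)\theta=-(\theta(1+q)-q)$ and that $-(1-\theta)\le q-\theta(1+q)$ is equivalent to $\theta\le\tfrac{q+1}{q+2}$, so the $\|f_*\|_K^2$ term is also dominated by $\overline{\Lambda}_T$. The constant $\overline{C}_1$ is then read off explicitly from $C_{q,\kappa}$ and the integral-comparison constants. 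I do not expect a genuine obstacle here — the only delicate point is the bookkeeping across the three regimes of $\theta$, in particular verifying that the exponent $q-(q+1)\theta$ that emerges after dividing matches the definition of $\overline{\Lambda}_T$ in the slow-decay case.
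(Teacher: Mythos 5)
Your proposal is correct and follows essentially the same route as the paper: the same reduction of both $a_T$ (via Jensen) and $b_T$ (min $\le$ weighted average) to the weighted sum $\frac{1}{\sum_t\eta_t}\sum_t\eta_t(\mcE_{\bf z}(f_t)-\mcE_{\bf z}(f_*))$, the same telescoping of Lemma \ref{Lem1} with $f=f_*$ combined with the iterate bound of Lemma \ref{Lem4} (this is exactly Step 2 of the proof of Lemma \ref{ExcessEmp}, which the paper cites), and the same division by $\sum_t\eta_t\gtrsim\eta_1T^{1-\theta}$ with the three-regime bookkeeping. The only cosmetic differences are your slightly simpler lower bound on $\sum_t\eta_t$ and your explicit absorption of the $\|f_*\|_K^2$ term in the slow-decay regime, which the paper handles implicitly in its three-case constants.
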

Note that there is a subtle difference between $\overline{\Lambda}_T$ and $\Lambda_T$ defined by (\ref{LamnbdaDef}),
where the later term has an extra $\log m$ for $\theta < {q+1 \over q+2}.$
\begin{proof}
For any $u \in \mR,$ we have
$$
\sum_{t=1}^T \eta_t ({\mathcal E}_{\bf z}(f_t)-u) \ge \left(\sum_{t=1}^T \eta_t\right)\min_{t=i, \dots, T}{\mathcal E}_{\bf z}(f_t) - \left(\sum_{t=1}^T \eta_t\right) u.
$$
and by convexity of ${\mathcal E}_{\bf z}$,
$$
{\mathcal E}_{\bf z} (a_T)= {\mathcal E}_{\bf z} \left(   \sum_{t=1}^T \omega_t  f_t \right)
\le   \sum_{t=1}^T \omega_t  {\mathcal E}_{\bf z}(f_T)= \frac{1}{\sum_{t=1}^T\eta_t}\sum_{t=1}^T \eta_t {\mathcal E}_{\bf z}(f_t).
$$
Therefor, we have
$$
 {\mathcal E}_{\bf z} (b_T)-u \le \frac {1}{ \sum_{t=1}^T \eta_t} \sum_{t=1}^T \eta_t ({\mathcal E}_{\bf z}(f_t)-u)
$$
and
$$
 {\mathcal E}_{\bf z}(a_T)-u   \le \frac {1}{ \sum_{t=1}^T \eta_t}  \sum_{t=1}^T \eta_t ({\mathcal E}_{\bf z}(f_t) -  u).
$$
We thus get
\be\label{Eqconvex}
 {\mathcal E}_{\bf z}(g_T) - {\mathcal E}_{\bf z}(f^*) \le \frac {1}{ \sum_{t=1}^T \eta_t}  \sum_{t=1}^T \eta_t ({\mathcal E}_{\bf z}(f_t) -  {\mathcal E}_{\bf z}(f^*)).
\ee
  Following Step 2 of the proof of Lemma \ref{ExcessEmp}, we have
\bea && \sum_{t=1}^{T} 2 \eta_{t}\left\{\mcE_{\mathbf{z}}(f_{t}) -\mcE_{\mathbf{z}}(f_{*})\right\} \\
&& \leq \left\{\begin{array}{ll}
\left(\|f_{*}\|_{K}^2 +C_{q, \kappa} \frac{(2+q) \theta -q}{(2+q) \theta -q -1}\right) , & \hbox{when} \ \theta > \frac{q+1}{q+2}, \\
\left(\|f_{*}\|_{K}^2 +2C_{q, \kappa}\right) (\log T), & \hbox{when} \ \theta = \frac{q+1}{q+2}, \\
\left(\|f_{*}\|_{K}^2 +C_{q, \kappa} \frac{2}{q+1 -(2+q) \theta}\right) T^{(1+q) -(2+q) \theta}, & \hbox{when} \ \theta < \frac{q+1}{q+2}. \end{array}\right. \eea
Introducing the above inequality into (\ref{Eqconvex}), and using $\sum_{t=1}^{T} \eta_t \geq \eta_1 \int_{t=1}^{T+1} u^{-\theta} du \geq \eta_1 T^{1-\theta} / 2$, we get our desired result with $\overline{C}_1$ given by
\bea \overline{C}_1
 = \left\{\begin{array}{ll}
2 \eta_1 c_q^2(\kappa +1)^{2q+2} \frac{(2+q) \theta -q}{(2+q) \theta -q -1} , & \hbox{when} \ \theta > \frac{q+1}{q+2}, \\
4\eta_1 c_q^2(\kappa +1)^{2q+2}  , & \hbox{when} \ \theta = \frac{q+1}{q+2}, \\
2 \eta_1 c_q^2(\kappa +1)^{2q+2} \frac{2}{q+1 -(2+q) \theta}  , & \hbox{when} \ \theta < \frac{q+1}{q+2}. \end{array}\right. \eea
\end{proof}
While the above proof is shorter and easier than the proof of Lemma \ref{ExcessEmp}, it is surprising that the computational error bounds for the last iterate and the average (or the best one) are roughly of the same order.

\subsection{Iterate Bound and Computational Error for Smooth Loss Functions}
 The following result can be proved by using the fact that $V'(y,\cdot)$ is Lipschitz.

\begin{lemma}\label{ERMDifference}
  Let $0<\eta_t \leq (L\kappa^2)^{-1}$ for all $t \in \mN.$ Assume that $V(y,\cdot)$ is differentiable and $V'(y,\cdot)$ is Lipschitz continuous with constant $L>0$. Then we have
  $$ \mcE_{\mathbf{z}}(f_T) -\mcE_{\mathbf{z}}(f_{*}) \leq {\|f_*\|_K^2 \over \sum_{k=1}^T 2\eta_k }.$$
In particular, if $\eta_t = \eta_1 t^{-\theta}$ with $\theta \in [0,1)$ satisfying $\eta_1 \leq (L\kappa^2)^{-1},$ then
$$ \mcE_{\mathbf{z}}(f_T) -\mcE_{\mathbf{z}}(f_{*}) \leq {\|f_*\|_K^2 \over \eta_1 } T^{\theta - 1}.$$
\end{lemma}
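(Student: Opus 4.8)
The plan is to recognize Lemma~\ref{ERMDifference} as the classical convergence estimate for gradient descent on a \emph{smooth} convex objective, specialized to $\mcE_{\mathbf{z}}$ regarded as a functional on $\mcHK$. The only new ingredient compared with the non-smooth analysis is that the scalar Lipschitz bound on $V'(y,\cdot)$ upgrades the empirical risk to a genuinely $L\kappa^2$-smooth function of $f$, which lets us replace the crude subgradient bound $G_t^2$ by a sufficient-decrease estimate.

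First I would check this smoothness transfer carefully, since it is the one non-routine point. Because $f\mapsto f(x_j)=\langle f,K_{x_j}\rangle_K$ is bounded linear and $V(y,\cdot)$ is differentiable, $\mcE_{\mathbf{z}}$ is Fréchet differentiable on $\mcHK$ with $\nabla\mcE_{\mathbf{z}}(f)=\frac1m\sum_{j=1}^m V'(y_j,f(x_j))K_{x_j}$, so iteration~\eqref{Alg} is exactly $f_{t+1}=f_t-\eta_t\nabla\mcE_{\mathbf{z}}(f_t)$. Using $\|K_{x_j}\|_K\le\kappa$, the reproducing inequality~\eqref{reprobound}, and $|V'(y,b)-V'(y,a)|\le L|b-a|$, one gets $\|\nabla\mcE_{\mathbf{z}}(f)-\nabla\mcE_{\mathbf{z}}(g)\|_K\le L\kappa^2\|f-g\|_K$; that is, $\mcE_{\mathbf{z}}$ is $L\kappa^2$-smooth. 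This is precisely why the hypothesis $\eta_t\le(L\kappa^2)^{-1}$ is the right threshold.

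Next I would invoke the descent lemma for an $L\kappa^2$-smooth functional, $\mcE_{\mathbf{z}}(f_{t+1})\le\mcE_{\mathbf{z}}(f_t)+\langle\nabla\mcE_{\mathbf{z}}(f_t),f_{t+1}-f_t\rangle_K+\tfrac{L\kappa^2}{2}\|f_{t+1}-f_t\|_K^2$, and substitute $f_{t+1}-f_t=-\eta_t\nabla\mcE_{\mathbf{z}}(f_t)$ together with $\eta_t\le(L\kappa^2)^{-1}$ to obtain the sufficient-decrease bound $\mcE_{\mathbf{z}}(f_{t+1})\le\mcE_{\mathbf{z}}(f_t)-\tfrac{\eta_t}{2}\|\nabla\mcE_{\mathbf{z}}(f_t)\|_K^2$; in particular $\{\mcE_{\mathbf{z}}(f_t)\}_t$ is non-increasing. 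In parallel, expanding $\|f_{t+1}-f_*\|_K^2$ exactly as in Lemma~\ref{Lem1} and using convexity of $\mcE_{\mathbf{z}}$ (so $\langle\nabla\mcE_{\mathbf{z}}(f_t),f_t-f_*\rangle_K\ge\mcE_{\mathbf{z}}(f_t)-\mcE_{\mathbf{z}}(f_*)$) gives $\|f_{t+1}-f_*\|_K^2\le\|f_t-f_*\|_K^2-2\eta_t(\mcE_{\mathbf{z}}(f_t)-\mcE_{\mathbf{z}}(f_*))+\eta_t^2\|\nabla\mcE_{\mathbf{z}}(f_t)\|_K^2$. Feeding in the sufficient-decrease estimate in the form $\eta_t^2\|\nabla\mcE_{\mathbf{z}}(f_t)\|_K^2\le 2\eta_t(\mcE_{\mathbf{z}}(f_t)-\mcE_{\mathbf{z}}(f_{t+1}))$ and cancelling the $\mcE_{\mathbf{z}}(f_t)$ terms yields the per-step inequality $2\eta_t\bigl(\mcE_{\mathbf{z}}(f_{t+1})-\mcE_{\mathbf{z}}(f_*)\bigr)\le\|f_t-f_*\|_K^2-\|f_{t+1}-f_*\|_K^2$.

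Finally I would telescope this inequality along the iteration: the right-hand side collapses, and since $f_1=0$ it is bounded by $\|f_*\|_K^2$; on the left, monotonicity of $\mcE_{\mathbf{z}}(f_t)$ lets the objective at the last iterate replace $\mcE_{\mathbf{z}}(f_{t+1})$ in each summand, producing $\bigl(\sum_k 2\eta_k\bigr)\bigl(\mcE_{\mathbf{z}}(f_T)-\mcE_{\mathbf{z}}(f_*)\bigr)\le\|f_*\|_K^2$, which is the first assertion. For the ``in particular'' part, with $\eta_t=\eta_1 t^{-\theta}$ and $\theta\in[0,1)$ one has $\eta_t\le\eta_1\le(L\kappa^2)^{-1}$ for every $t$, and $\sum_{k=1}^T 2\eta_k\ge 2\eta_1\int_1^{T+1}x^{-\theta}\,dx=\tfrac{2\eta_1}{1-\theta}\bigl((T+1)^{1-\theta}-1\bigr)\ge\eta_1 T^{1-\theta}$ for $T>2$, which gives the stated $\tfrac{\|f_*\|_K^2}{\eta_1}T^{\theta-1}$. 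There is no real obstacle here: the argument is the textbook gradient-descent proof, and the only step that genuinely requires care is the passage from the scalar Lipschitz bound on $V'$ to $L\kappa^2$-smoothness of $\mcE_{\mathbf{z}}$ on $\mcHK$, because it is exactly the constant $L\kappa^2$ that makes $\eta_t\le(L\kappa^2)^{-1}$ the sharp condition under which the telescoping closes.
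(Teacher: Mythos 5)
Your proposal is correct and follows essentially the same route as the paper's proof: a sufficient-decrease (descent) estimate from the Lipschitz derivative with threshold $\eta_t\le(L\kappa^2)^{-1}$, combined with convexity to get the per-step inequality $2\eta_t\bigl(\mcE_{\mathbf z}(f_{t+1})-\mcE_{\mathbf z}(f_*)\bigr)\le\|f_t-f_*\|_K^2-\|f_{t+1}-f_*\|_K^2$, then telescoping with $f_1=0$, monotonicity of $\mcE_{\mathbf z}(f_t)$, and the integral bound on $\sum_k\eta_k$. The only (cosmetic) difference is that you establish $L\kappa^2$-smoothness of $\mcE_{\mathbf z}$ on $\mathcal H_K$ and invoke the abstract descent lemma, whereas the paper sums the pointwise inequality $V(y,b)\le V(y,a)+V'(y,a)(b-a)+\tfrac L2(b-a)^2$ over the sample to reach the same estimate.
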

\begin{proof}
  Since $V'(y,\cdot)$ is Lipschitz with constant $L$ for any $y \in Y,$ we have for any $a,b\in \mR,$
  $$ V(y,b) \leq V(y,a) + V'(y,a) (b-a) + {L \over 2} (b-a)^2.$$
  Choosing $y=y_j,$ $b= f_{t+1}(x_j)$ and $a = f_t(x_j),$ according to the reproducing kernel property
  (\ref{reproProp}) and (\ref{reprobound}), we get for $j=1,\cdots,m$ and $t\in \mN,$
  $$ V(y_j, f_{t+1}(x_j)) \leq V(y_j,f_t(x_j)) + V'(y_j,f_{t}(x_j)) \la f_{t+1} - f_{t}, K_{x_j} \ra_K + {L \kappa^2 \over 2} \|f_{t+1} - f_t\|_K^2.$$
  Summing up over $j=1,\cdots, m$, with $G_t = {1\over m} \sum_{j=1}^m V'(y_j,f_{t}(x_j)) K_{x_j}$, we get
  $$ \mcE_{\bf z} (f_{t+1}) \leq  \mcE_{\bf z} (f_{t}) + \la f_{t+1} - f_{t}, G_t \ra_K + {L \kappa^2 \over 2} \|f_{t+1} - f_t\|_K^2. $$
  Introducing with (\ref{Alg}), noting that $\eta_t < (L\kappa^2)^{-1},$ we get
  \be\label{SmoothOneIteration}
  \mcE_{\bf z} (f_{t+1}) \leq  \mcE_{\bf z} (f_{t}) - {\eta_t \over 2} \|G_t\|_K^2.
  \ee
  By the convexity of $V(y,\cdot)$, it is easy to prove that
  $$ \mcE_{\bf z} (f_{t}) \leq  \mcE_{\bf z} (f_{*}) + \la f_t - f_*, G_t\ra_K. $$
  Introducing this inequality into (\ref{SmoothOneIteration}), we get
  \bea
  \mcE_{\bf z} (f_{t+1}) &\leq& \mcE_{\bf z} (f_{*}) + {1 \over 2\eta_t}\left( 2\eta_t \la f_t - f_*, G_t\ra_K - {\eta_t^2} \|G_t\|_K^2 \right)\\
  &=& \mcE_{\bf z} (f_{*}) + {1 \over 2\eta_t}\left( \|f_t - f_*\|_K^2 - \|f_t - f_* - \eta_t G_t\|_K^2 \right)\\
  &=& \mcE_{\bf z} (f_{*}) + {1 \over 2\eta_t}\left( \|f_t - f_*\|_K^2 - \|f_{t+1} - f_* \|_K^2 \right),
  \eea
so that,
  \be\label{KeyInequSmooth} 2\eta_t(\mcE_{\bf z} (f_{t+1}) - \mcE_{\bf z} (f_{*})) \leq \|f_t - f_*\|_K^2 - \|f_{t+1} - f_* \|_K^2.\ee
  Summing up over $t=1 \cdots, T,$ with $f_1 = 0,$ we have
  $$ \sum_{t=1}^T 2\eta_t(\mcE_{\bf z} (f_{t+1}) - \mcE_{\bf z} (f_{*})) \leq \|f_1 - f_*\|_K^2 - \|f_{T+1} - f_* \|_K^2 \leq \|f_*\|_K^2.$$
  By (\ref{SmoothOneIteration}), we have $\mcE_{\bf z} (f_{T+1}) \leq \mcE_{\bf z} (f_{t+1})$ for all $t \leq T.$ It thus follows that
  $$  \sum_{t=1}^T 2\eta_t (\mcE_{\bf z} (f_{T+1}) - \mcE_{\bf z} (f_{*})) \leq \sum_{t=1}^T 2\eta_t(\mcE_{\bf z} (f_{t+1}) - \mcE_{\bf z} (f_{*})) \leq \|f_*\|_K^2, $$
  which leads to the first argument of the lemma. The rest of the proof can be finished by noting that
  $$ \sum_{t=1}^T \eta_t \geq \eta_1 \int_1^{T+1} u^{-\theta} du \geq \eta_1 {T^{1-\theta} \over 2}.$$
\end{proof}
Using the above lemma, we can bound the iterates as follows.
\begin{lemma}\label{BoundSmoothLoss}
  Under the assumptions of Lemma \ref{ERMDifference}, we have for $t=1, \cdots, T,$
  $$ \|f_{t+1}\|_K \leq \sqrt{2 |V|_0 \sum_{k=1}^t \eta_k} .$$
  In particular, if $\eta_t = \eta_1 t^{-\theta}$ with $\theta \in [0,1)$ satisfying $\eta_1 \leq {1 - \theta \over 2|V|_0},$
  then
  $$ \|f_{t+1}\|_K \leq t^{1 - \theta \over 2} .$$
\end{lemma}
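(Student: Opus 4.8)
The plan is to derive a simple one-step recursion for $\|f_{t+1}\|_K$ by specializing the inequality \eqref{KeyInequSmooth} --- already established in the proof of Lemma \ref{ERMDifference} for an \emph{arbitrary} reference function $f_* \in \mcHK$ --- to the choice $f_* = 0$. With $f_* = 0$, \eqref{KeyInequSmooth} reads
\be
2\eta_t\bigl(\mcE_{\bf z}(f_{t+1}) - \mcE_{\bf z}(0)\bigr) \leq \|f_t\|_K^2 - \|f_{t+1}\|_K^2 .
\ee
Since the loss $V$ is nonnegative we have $\mcE_{\bf z}(f_{t+1}) \geq 0$, and since $\mcE_{\bf z}(0) = \frac{1}{m}\sum_{j=1}^m V(y_j, 0) \leq |V|_0$ by Assumption \ref{Boundness}, rearranging gives
\be
\|f_{t+1}\|_K^2 \leq \|f_t\|_K^2 + 2\eta_t |V|_0, \qquad t = 1, \ldots, T .
\ee

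Next I would telescope this recursion from $k=1$ to $k=t$, using the initialization $f_1 = 0$, to obtain $\|f_{t+1}\|_K^2 \leq 2|V|_0\sum_{k=1}^t \eta_k$, which is the first assertion. For the second assertion, with $\eta_k = \eta_1 k^{-\theta}$ and $\theta\in[0,1)$, I would bound the partial sum by comparison with an integral: since $x\mapsto x^{-\theta}$ is decreasing on $(0,\infty)$, one has $\sum_{k=1}^t k^{-\theta} \leq \int_0^t x^{-\theta}\,dx = \frac{t^{1-\theta}}{1-\theta}$. Hence $\|f_{t+1}\|_K^2 \leq \frac{2|V|_0 \eta_1}{1-\theta}\, t^{1-\theta}$, and the restriction $\eta_1 \leq \frac{1-\theta}{2|V|_0}$ makes the prefactor at most $1$, giving $\|f_{t+1}\|_K \leq t^{(1-\theta)/2}$.

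I do not anticipate a genuine obstacle here: the entire argument hinges on noticing that \eqref{KeyInequSmooth} is valid for every reference point, so putting $f_*=0$ turns the ``descent relative to $f_*$'' estimate into an a priori bound on the norm of the iterates, bypassing the more delicate inductive argument needed in the nonsmooth case (Lemma \ref{Lem4}). The only points to keep track of are that $\mcE_{\bf z}$ is nonnegative, that $\mcE_{\bf z}(0)\le|V|_0$ by the boundedness assumption, and that the step-size restriction $\eta_1\le(L\kappa^2)^{-1}$ inherited from Lemma \ref{ERMDifference} is exactly what legitimizes \eqref{KeyInequSmooth} in the first place.
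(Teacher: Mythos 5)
Your proposal is correct and follows essentially the same route as the paper: both specialize \eqref{KeyInequSmooth} to $f_*=0$, use $\mcE_{\bf z}(f_{k+1})\ge 0$ and $\mcE_{\bf z}(0)\le |V|_0$ to get the one-step recursion $\|f_{k+1}\|_K^2\le\|f_k\|_K^2+2\eta_k|V|_0$, telescope with $f_1=0$, and then bound $\sum_{k=1}^t k^{-\theta}$ by $t^{1-\theta}/(1-\theta)$ (the paper via $1+\frac{t^{1-\theta}-1}{1-\theta}$, you via a direct integral comparison, which is equivalent). No gaps.
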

\begin{proof}
  Choosing $f_* = 0$ in (\ref{KeyInequSmooth}), we get for $k =1, \cdots,t,$
  $$ \|f_{k+1}\|_K^2 \leq \|f_{k}\|_K^2 + 2 \eta_k (\mcE_{\bf z} (0) - \mcE_{\bf z} (f_{k+1})) \leq \|f_{k}\|_K^2 + 2 \eta_k |V|_0.$$
  Applying this relationship iteratively for $k=t,\cdots,1,$ with $f_1=0$, we get
  $$ \|f_{t+1}\|_K^2 \leq 2 |V|_0 \sum_{k=1}^t \eta_k , $$
  which leads to the first conclusion.
  The second inequality can be proved by noting that
  $$ \sum_{k=1}^t \eta_k = \eta_1 \sum_{k=1}^t k^{-\theta} \leq \eta_1 \left( 1 +  {t^{1-\theta} -1 \over 1 - \theta}\right) \leq \eta_1 {t^{1-\theta} \over 1 - \theta}. $$
\end{proof}

\section{Sample Error and Finite Sample Bounds}\label{sectionSampleError}
In this subsection, we will estimate sample errors and then prove our main results.
\subsection{Sample Errors}
We first bound the sample error $\mathcal{F}_{\bf z} (f_*)$ for some fixed $f_* \in {\mathcal H}_K$ as follows. This is done by applying  Bernstein inequality.
\begin{lemma} Assume condition (\ref{EqCond1}) and (\ref{varianceexpect}) hold. For any $f_* \in \mathcal{H}_K$ with $\|f_*\|_K \leq R$, where $R \geq 1$, with confidence at least $1 - {\delta \over 2},$
  \label{SampleErrorBound1}
  \be\label{fRB}
 \quad \mathcal{F}_{\bf z} (f_*) \leq (C'_1 + 2\sqrt{ c_\tau})\log \frac{2}{\delta} \max\left\{\frac{R^{q +1}}{m}, \ \left(\frac{R^{2+q-\tau } } {m} \right)^{1 \over 2-\tau}, \ {\mathcal A}(f_*)\right\},
\ee
where $C'_1$ is a constant independent of $T,m,\delta$, given explicitly in the proof.
\end{lemma}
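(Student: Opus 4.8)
The plan is to observe that, since $f_*$ is \emph{fixed} (deterministic), $\mathcal{F}_{\bf z}(f_*)$ is the positive part of a one-sample empirical deviation, so a single Bernstein-type concentration bound suffices. Set
$$\xi(z) = V(y, f_*(x)) - V(y, f_\rho^V(x)), \qquad z=(x,y)\in Z,$$
so that $\mathbb{E}[\xi] = \mathcal{E}(f_*) - \mathcal{E}(f_\rho^V) = \mathcal{A}(f_*)\ge 0$ and $\frac1m\sum_{i=1}^m\xi(z_i)-\mathbb{E}[\xi] = \big(\mathcal{E}_{\bf z}(f_*)-\mathcal{E}_{\bf z}(f_\rho^V)\big)-\big(\mathcal{E}(f_*)-\mathcal{E}(f_\rho^V)\big)$. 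Hence $\mathcal{F}_{\bf z}(f_*)=\max\{\tfrac1m\sum_i\xi(z_i)-\mathbb{E}[\xi],\,0\}$, the upper deviation of an average of $m$ i.i.d.\ copies of $\xi$.

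First I would bound the two quantities Bernstein requires. For the range, convexity of $V(y,\cdot)$ with nondecreasing left derivative satisfying \eqref{EqCond1} gives, integrating $V'_-(y,\cdot)$, the estimate $|V(y,a)-V(y,b)|\le c_q\big(1+\max\{|a|,|b|\}^q\big)|a-b|$ for all $a,b\in\mathbb{R}$; combining this with $\|f_*\|_\infty\le\kappa R$ (from \eqref{reprobound}), $\|f_\rho^V\|_\infty<\infty$, and $R\ge1$ yields $|\xi(z)|\le C'_1 R^{q+1}$ almost surely with $C'_1 = c_q(\kappa+\|f_\rho^V\|_\infty)\big(1+(\kappa+\|f_\rho^V\|_\infty)^q\big)$, and, since $\mathbb{E}[\xi]=\mathcal{A}(f_*)\ge0$, a fortiori $\xi-\mathbb{E}[\xi]\le C'_1 R^{q+1}=:B$. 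For the variance, Assumption~\ref{assumption3} applied to $f_*$ (legitimate because $\|f_*\|_K\le R$ and $R\ge1$) gives $\mathrm{Var}(\xi)\le\mathbb{E}[\xi^2]\le c_\tau R^{2+q-\tau}\mathcal{A}(f_*)^\tau=:\sigma^2$.

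Then I would invoke the one-sided Bernstein inequality, which (after solving the resulting quadratic in the deviation) yields, with confidence $1-\delta/2$,
$$\mathcal{F}_{\bf z}(f_*) \le \frac{2B\log(2/\delta)}{3m} + \sqrt{\frac{2\sigma^2\log(2/\delta)}{m}}.$$
The first summand is at most $C'_1\frac{R^{q+1}}{m}\log\frac2\delta$. For the second, writing $u=\big(R^{2+q-\tau}/m\big)^{1/(2-\tau)}$ I would use $\sqrt{R^{2+q-\tau}\mathcal{A}(f_*)^\tau/m}=u^{(2-\tau)/2}\mathcal{A}(f_*)^{\tau/2}\le\max\{u,\mathcal{A}(f_*)\}$ together with $\sqrt{2\log(2/\delta)}\le 2\log(2/\delta)$ (valid since $\log(2/\delta)\ge\log2>\tfrac12$), which gives $\sqrt{2\sigma^2\log(2/\delta)/m}\le 2\sqrt{c_\tau}\,\log\frac2\delta\,\max\{u,\mathcal{A}(f_*)\}$. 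Absorbing $R^{q+1}/m$ into the maximum then produces \eqref{fRB}.

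The only mildly delicate points are deriving the local Lipschitz estimate on $V(y,\cdot)$ from the one-sided \emph{growth} condition \eqref{EqCond1} (rather than from an assumed Lipschitz constant) while keeping track of the dependence on $\kappa$ and $\|f_\rho^V\|_\infty$ that fixes the explicit constant $C'_1$, and the elementary Young/maximum juggling needed to land on exactly the stated three-term maximum. Neither is a real obstacle: the substance of the lemma is precisely that, because $f_*$ is not data-dependent, no covering-number machinery is required — that enters only later, when the reference function is replaced by the empirical iterate $f_T$.
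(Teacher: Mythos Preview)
Your proposal is correct and follows essentially the same route as the paper: define $\xi(z)=V(y,f_*(x))-V(y,f_\rho^V(x))$, bound $|\xi|\le C_1' R^{q+1}$ via the growth condition \eqref{EqCond1} and $\sigma^2(\xi)\le c_\tau R^{2+q-\tau}\mathcal{A}(f_*)^\tau$ via \eqref{varianceexpect}, apply one-sided Bernstein, and then split the mixed term $R^{1+(q-\tau)/2}\mathcal{A}(f_*)^{\tau/2}/\sqrt{m}$ into the stated maximum. The only cosmetic differences are that the paper records a slightly different explicit form of $C_1'$ and uses the Young inequality $x^\tau y^{1-\tau}\le \tau x+(1-\tau)y$ (inequality \eqref{ElementaryInequ}) rather than your direct ``geometric mean $\le$ max'' to arrive at the three-term maximum; both arguments also tacitly use $\sqrt{2\log(2/\delta)}\le 2\log(2/\delta)$ to pull the full factor $\log(2/\delta)$ outside.
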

\begin{proof}
   We apply  Bernstein inequality which asserts that, for a random variable $\xi$ bounded by $\widetilde{M} >0$ and for any $\epsilon>0$,
$$ \hbox{Prob} \biggl\{{1\over m}\sum_{i=1}^m \xi (z_i) - \mE (\xi) > \epsilon \biggr\} \le
\exp \biggl\{-\frac{m \epsilon^2}{2\bigl(\sigma^2 (\xi) +
\frac{1}{3} \widetilde{M} \epsilon\bigr)}\biggr\}. $$
Here the random variable $\xi$ on $Z$ is given by $\xi (x, y) = V(y, f_*) -V(y, f_\rho^V(x))$.
The increment condition (\ref{EqCond1}) implies that $\xi$ is bounded by $M:=C_1' R^{q +1},$ where $C'_1$ is the constant given by
$$
C_1':= c_q \left(\kappa+\kappa^{q+1} + \|f_\rho^V\|_\infty + \|f_\rho^V\|_\infty^{q+1}\right) .
$$
By condition (\ref{varianceexpect}), its variance $\sigma^2 (\xi)$ is bounded by
$$c_\tau R^{2 + q-\tau}
\left\{{\mathcal E} (f_*)
-{\mathcal E} (f_\rho^V)\right\}^\tau \leq c_\tau R^{2 + q-\tau}
({\mathcal A}(f_*))^\tau. $$
Solving the quadratic equation from the Bernstein inequality, we see that with confidence at least $1-\frac{\delta}{2}$, there holds
\bea
\mathcal{F}_{\bf z} (f_*) &\leq& \frac{2 M \log \frac{2}{\delta}}{3 m}  +\sqrt{\frac{2\log \frac{2}{\delta}}{m} \sigma^2 (\xi)} \nonumber \\
&\leq&  (C'_1 + 2\sqrt{ c_\tau}) \log \frac{2}{\delta} \max\left\{\frac{R^{q +1}}{m}, \ \frac{R^{1 + \frac{q-\tau}{2}}\left({\mathcal A}(f_*)\right)^{\frac{\tau}{2}}}{\sqrt{m}}\right\}.
\eea
Applying an elementary inequality
\be\label{ElementaryInequ}
 x^{\tau} y^{1-\tau}\leq \tau x+(1-\tau)y, \qquad \forall \ \tau\in[0,1], \ x, y\geq 0
\ee yields
\bea \frac{R^{1 + \frac{q-\tau}{2}}\left({\mathcal A}(f_*)\right)^{\frac{\tau}{2}}}{\sqrt{m}} &=& \left[\left(\frac{R^{2+q-\tau } } {m} \right)^{1 \over 2-\tau} \right]^{1- {\tau \over 2}} \left({\mathcal A}(f_*)\right)^{\frac{\tau}{2}}\\
& \leq&  \left(1- {\tau \over 2}\right) \left(\frac{R^{2+q-\tau } } {m} \right)^{1 \over 2-\tau} + \frac{\tau}{2} {\mathcal A}(f_*).
\eea
Then the desired result follows.
\end{proof}
We next bound the empirical process over a ball $B_{\widetilde{R}}$ for some $\widetilde{R}>0.$ To do this, we need the following concentration inequality. Its proof is similar to that of Proposition 6 in \cite{WYZ07}, as well as applying \cite[Theorem 3.5]{Steinwart2009} and (\cite[Exercise 6.8]{SteinwartChristmann2008a}).
We omit the proof.
\begin{lemma}\label{concenIneq}
Let ${\mathcal G}$ be a set of measurable
functions on ${\mathcal Z}$, and $B, c>0, \tau \in [0, 1]$ be constants such
that each function $f\in {\mathcal G}$ satisfies $\|f\|_\infty \le B$
and $\mE (f^2)\le c (\mE f)^\tau$. If for some $a \ge B^{\zeta}$ and $\zeta\in (0, 2)$,
$$ \mE_{\bf z}[\log {\mathcal N} ({\cal G}, \epsilon,d_{2,{\bf z}})]\le
a \epsilon^{-\zeta},\qquad \forall \epsilon >0, $$ then there
exists a constant $c_\zeta'$ depending only on $\zeta$ such that for any
$b>0$, with probability at least $1-e^{-b}$, there holds
$$ \mE f- {1\over m}\sum_{i=1}^m
f(z_i)\le {1\over 2}\eta^{1-\tau}(\mE f)^\tau+ c_\zeta' \eta +
2\Big({c b \over m}\Big)^{1/(2-\tau)}+{18 B b \over m}, \qquad
\forall f\in {\cal G}, $$ where
$$\eta :=\max\bigg\{c^{2-\zeta \over
4-2\tau +\zeta\tau}\Big( \displaystyle{a\over m}\Big)^{2\over
4-2\tau+\zeta\tau}, \ B^{2-\zeta \over 2+\zeta}\Big(\displaystyle { a\over
m}\Big)^{2\over 2+\zeta}\bigg\}.$$
\end{lemma}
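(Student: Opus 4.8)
The plan is to obtain this lemma not from scratch but by assembling three standard ingredients — a bound on the expected \emph{localized} empirical complexity of $\mathcal G$ in terms of the covering numbers, a one-sided Talagrand/Bousquet concentration inequality for suprema of empirical processes, and a peeling (stratification) argument over the level of $\mE f$ — exactly as in the proof of \cite[Proposition 6]{WYZ07}. The output of the peeling step is a self-bounded inequality of the form $\mE f-\frac1m\sum_i f(z_i)\le \frac12\eta^{1-\tau}(\mE f)^\tau+c_\zeta'\eta+(\text{lower-order terms})$, where $\eta$ is forced to be the fixed point of a sub-root function, and this will reproduce the displayed two-branch formula together with the extra $2(cb/m)^{1/(2-\tau)}$ and $18Bb/m$ terms.

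First I would control, for each $r>0$, the shell $\mathcal G_r=\{f\in\mathcal G:\mE f\le r\}$. On $\mathcal G_r$ the hypothesis $\mE(f^2)\le c(\mE f)^\tau$ together with $\tau\le 1$ gives $\mE(f^2)\le cr^\tau$, so the $d_{2,{\bf z}}$-diameter of $\mathcal G_r$ is, in expectation, of order $\sqrt{cr^\tau}$ up to an $O(B/\sqrt m)$ remainder. By symmetrization and Dudley's chaining in the empirical metric $d_{2,{\bf z}}$, the expected supremum over $\mathcal G_r$ is at most a constant times
$$\phi(r):=\mE_{\bf z}\int_0^{\sqrt{cr^\tau}+O(B/\sqrt m)}\sqrt{\tfrac1m\log\mathcal N(\mathcal G_r,\epsilon,d_{2,{\bf z}})}\,d\epsilon,$$
and (with the usual care for the random integration limit) feeding in $\mE_{\bf z}[\log\mathcal N(\mathcal G,\epsilon,d_{2,{\bf z}})]\le a\epsilon^{-\zeta}$ through Jensen, together with the convergence of the integral for $\zeta<2$, yields
$$\phi(r)\le C_\zeta\Big[(cr^\tau)^{\frac{2-\zeta}{4}}\big(\tfrac am\big)^{1/2}+B^{\frac{2-\zeta}{2}}\big(\tfrac am\big)^{\frac{2}{2+\zeta}}\Big];$$
this is essentially \cite[Theorem 3.5]{Steinwart2009}, and the hypothesis $a\ge B^\zeta$ is precisely what keeps the second, $B$-driven, summand no larger than $B^{(2-\zeta)/(2+\zeta)}(a/m)^{2/(2+\zeta)}$. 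The map $r\mapsto C_\zeta\phi(r)$ is sub-root, hence has a unique positive fixed point, and solving $C_\zeta\phi(\eta)=\eta$ — a two-term balance, one term from the variance regime and one from the $B$ regime — gives exactly the displayed $\eta=\max\{c^{\frac{2-\zeta}{4-2\tau+\zeta\tau}}(a/m)^{\frac{2}{4-2\tau+\zeta\tau}},\,B^{\frac{2-\zeta}{2+\zeta}}(a/m)^{\frac{2}{2+\zeta}}\}$.

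Next I would apply a one-sided Bousquet inequality on each shell $\mathcal G_r$: with probability at least $1-e^{-b}$ and for all $f\in\mathcal G_r$, one has $\mE f-\frac1m\sum_i f(z_i)\le C_\zeta\phi(r)+\sqrt{2cr^\tau b/m}+cBb/m$ up to absolute constants, using $\|f\|_\infty\le B$ and $\mE(f^2)\le cr^\tau$ for the variance and height corrections. Running this over a dyadic grid of values of $r$, and using the standard device of letting the confidence level depend on the grid index so that the $O(\log m)$ failure probabilities still sum to a constant times $e^{-b}$, one peels the constraint down to the case where $r$ is of the order of $\mE f$: the term $C_\zeta\phi(\mE f)$ is bounded, via the fixed-point property, by $\tfrac12\eta^{1-\tau}(\mE f)^\tau+c_\zeta'\eta$, while the Bousquet variance term $\sqrt{2c(\mE f)^\tau b/m}$ is split by the elementary inequality (\ref{ElementaryInequ}) into an absorbable multiple of $(\mE f)^\tau$ plus $2(cb/m)^{1/(2-\tau)}$, and the height term collapses into $18Bb/m$. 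This is precisely the packaging of \cite[Exercise 6.8]{SteinwartChristmann2008a}, and it produces the asserted bound.

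The hard part will be the bookkeeping of the peeling step rather than any single estimate: making the union bound over the $O(\log m)$ shells cost nothing in the probability, verifying that all the accumulated constants can be collapsed into a single $c_\zeta'$ depending only on $\zeta$ (and, in the height and variance corrections, into the explicit constants $2$ and $18$), and — because the $\max$ defining $\eta$ is not smooth — carefully checking in each of the two regimes which summand of $\phi$ dominates, using $a\ge B^\zeta$ to rule out the remainder term of the entropy integral ever being the larger contribution.
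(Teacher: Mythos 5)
The paper itself never writes this proof out: it is dispatched by citing Proposition 6 of \cite{WYZ07} together with \cite[Theorem 3.5]{Steinwart2009} and \cite[Exercise 6.8]{SteinwartChristmann2008a}, and your plan assembles exactly those ingredients (localized chaining bound, Talagrand/Bousquet one-sided concentration, peeling, fixed point of a sub-root function), so the architecture is the intended one. However, the sketch goes wrong precisely at the two places where the content of the lemma lies. First, your intermediate bound $\phi(r)\le C_\zeta\bigl[(cr^\tau)^{(2-\zeta)/4}(a/m)^{1/2}+B^{(2-\zeta)/2}(a/m)^{2/(2+\zeta)}\bigr]$ carries the wrong power of $B$ in the second summand, and the proposed repair via $a\ge B^\zeta$ does not work: $B^{(2-\zeta)/2}\le B^{(2-\zeta)/(2+\zeta)}$ forces $B\le 1$, whereas $a\ge B^\zeta$ says nothing about $B$ versus $1$ (and in this paper's application $B=C_1'\widetilde R^{\,q+1}\ge 1$, so the regime you would need is exactly the one that fails). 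Relatedly, the claim that the $d_{2,{\bf z}}$-diameter of a shell is $\sqrt{cr^\tau}$ ``up to an $O(B/\sqrt m)$ remainder'' is too strong at the level of a supremum over the class: the empirical second moment exceeds the true one by a term of order $B$ times the (expected) Rademacher complexity of the shell, via symmetrization and contraction applied to the squared class. It is this self-bounding relation, fed back into the chaining bound and solved at its fixed point, that produces the branch $B^{(2-\zeta)/(2+\zeta)}(a/m)^{2/(2+\zeta)}$ of $\eta$; it does not come from a remainder of the entropy integral (for $\zeta<2$ that integral converges at $0$ and leaves no such term), and the condition $a\ge B^\zeta$ plays the more modest role of making the entropy bound nontrivial at the diameter scale so that the lower-order pieces are absorbed.

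Second, the probability bookkeeping is not yet a proof of the stated inequality. Applying the one-sided concentration bound on each of $O(\log m)$ dyadic shells at level $e^{-b}$ gives confidence $1-O(\log m)e^{-b}$, not $1-e^{-b}$; shifting $b$ by the logarithm of the number of shells changes the additive terms $2(cb/m)^{1/(2-\tau)}+18Bb/m$, whose explicit constants you are claiming. The standard way to get the clean statement is the one-shot localization used in \cite{SteinwartChristmann2008a} (apply the concentration inequality once to a suitably rescaled class, e.g. functions of the form $\frac{\eta}{\mE f+\eta}f$, and then use the variance hypothesis and the elementary inequality (\ref{ElementaryInequ}) to unwind the rescaling), rather than a genuine union bound over shells. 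Until the second branch of $\eta$ is derived by the correct mechanism and the confidence level is obtained without the spurious $\log m$ factor, the argument does not yet establish the lemma as stated.
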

The following lemma is essentially contained in \cite{WYZ07}. We report  a short proof for the sake of completeness.

\begin{lemma}
  \label{SampleErrorBound2}
  Assume (\ref{EqCond1}) with $q \geq 0$, (\ref{varianceexpect}) with  $\tau\in[0,1]$, (\ref{decayapprox}) with $\beta \in (0,1]$ and (\ref{capacityB}) with $\zeta\in(0,2)$.
  Let $\widetilde{R} > 1.$ Then
  with confidence at least $1-\frac{\delta}{2}$, there holds for every $g\in B_{\widetilde{R}},$
\be\label{SampleErrorBound}\begin{split}
&\left({\mathcal E}(g) - {\mathcal E}(f_{\rho}^V)\right) - \left({\mathcal E}_{\bf z}(g) - {\mathcal E}_{\bf z}(f_{\rho}^V)\right)\\
\leq& {1\over 2} \left({\mathcal E}(g) - {\mathcal E}(f_\rho^V)\right) + C_3' \log \frac{2}{\delta}
\max\left\{\left(\frac{\widetilde{R}^{\frac{q(2 + \zeta) + (4-2\tau +\zeta \tau)}{2}}}{m}\right)^{\frac{2}{4-2\tau+\zeta \tau}}, \ {\widetilde{R}^{q+1} \over m^{\frac{2}{2+\zeta}}}, \
\Big({ \widetilde{R}^{2 + q-\tau} \over m}\Big)^{1 \over 2-\tau} \right\}, \end{split}
\ee
and
  \be\label{emperrorBound}
\mathcal{M}_{\bf z}(\widetilde{R}) \leq C_3' \log \frac{2}{\delta}
\max\left\{\left(\frac{\widetilde{R}^{\frac{q(2 + \zeta) + (4-2\tau +\zeta \tau)}{2}}}{m}\right)^{\frac{2}{4-2\tau+\zeta \tau}}, \ {\widetilde{R}^{q+1} \over m^{\frac{2}{2+\zeta}}}, \
\Big({ \widetilde{R}^{2 + q-\tau} \over m}\Big)^{1 \over 2-\tau} \right\}.
\ee
Here $C'_3$ is a constant independent of $T,m,\delta$, given explicitly in the proof.
\end{lemma}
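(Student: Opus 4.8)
The plan is to apply the concentration inequality of Lemma~\ref{concenIneq} to the recentered loss class
$$
{\mathcal G} = \left\{ (x,y) \mapsto V(y, g(x)) - V(y, f_\rho^V(x)) \ : \ g \in B_{\widetilde{R}} \right\},
$$
and read off both displayed estimates from a single invocation. For $f\in{\mathcal G}$ arising from $g\in B_{\widetilde R}$ one has $\mE f = {\mathcal E}(g)-{\mathcal E}(f_\rho^V)\ge 0$ and $\mE f - \frac1m\sum_i f(z_i) = ({\mathcal E}(g)-{\mathcal E}(f_\rho^V)) - ({\mathcal E}_{\bf z}(g)-{\mathcal E}_{\bf z}(f_\rho^V))$, which is exactly the left-hand side of~\eqref{SampleErrorBound}. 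For~\eqref{emperrorBound}, since ${\mathcal E}(g)-{\mathcal E}(f_\rho^V)\ge 0$ we have ${\mathcal E}_{\bf z}(f_\rho^V) - {\mathcal E}_{\bf z}(g) = -\frac1m\sum_i f(z_i) \le \mE f - \frac1m\sum_i f(z_i)$, so the same quantity controls $\mathcal{M}_{\bf z}(\widetilde R)$ after taking the supremum over $g\in B_{\widetilde R}$ and the positive part.

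First I would check the hypotheses of Lemma~\ref{concenIneq}. The uniform bound $\|f\|_\infty\le B$ with $B = c\,\widetilde R^{q+1}$ follows by integrating the growth condition~\eqref{EqCond1}, using the reproducing property $\|g\|_\infty\le\kappa\|g\|_K\le\kappa\widetilde R$ and $\|f_\rho^V\|_\infty<\infty$, with $\widetilde R>1$ absorbing lower-order powers; the constant $c$ depends only on $c_q,\kappa,|V|_0,\|f_\rho^V\|_\infty$. The variance-expectation inequality $\mE(f^2)\le c_\tau\widetilde R^{2+q-\tau}(\mE f)^\tau$ is precisely Assumption~\ref{assumption3}. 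The key remaining input is the covering number of ${\mathcal G}$: for $g,g'\in B_{\widetilde R}$, the mean value theorem together with~\eqref{EqCond1} gives $|V(y,g(x))-V(y,g'(x))|\le L_{\widetilde R}|g(x)-g'(x)|$ with $L_{\widetilde R} := c_q(1+(\kappa\widetilde R)^q)$, so $d_{2,{\bf z}}$-distances on ${\mathcal G}$ are at most $L_{\widetilde R}$ times those on $B_{\widetilde R}$; rescaling then yields ${\mathcal N}({\mathcal G},\epsilon,d_{2,{\bf z}})\le {\mathcal N}(B_1, \epsilon/(L_{\widetilde R}\widetilde R), d_{2,{\bf z}})$, hence by~\eqref{capacityB},
$$
\mE_{\bf z}[\log{\mathcal N}({\mathcal G},\epsilon,d_{2,{\bf z}})] \le c_\zeta (L_{\widetilde R}\widetilde R)^\zeta \epsilon^{-\zeta} =: a\,\epsilon^{-\zeta}.
$$
Since $a^{1/\zeta}\asymp\widetilde R^{q+1}\asymp B$, after enlarging the constant we may assume $a\ge B^\zeta$, as required by the lemma.

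Then I would invoke Lemma~\ref{concenIneq} with $b=\log(2/\delta)$ (so $e^{-b}=\delta/2$), and simplify its conclusion: Young's inequality $\eta^{1-\tau}(\mE f)^\tau \le (1-\tau)\eta + \tau\,\mE f$ turns $\tfrac12\eta^{1-\tau}(\mE f)^\tau$ into $\tfrac12({\mathcal E}(g)-{\mathcal E}(f_\rho^V))$ plus a multiple of $\eta$; the term $18Bb/m$ is absorbed into the second branch of $\eta$ using $a\ge B^\zeta$ and $m\ge1$; the term $2(cb/m)^{1/(2-\tau)}$ is (up to the $\log(2/\delta)$ factor) the third entry of the maximum; and a direct computation shows the two branches of $\eta$ have $\widetilde R$-exponents $\frac{(4-2\tau+\zeta\tau)+q(2+\zeta)}{4-2\tau+\zeta\tau}$ and $q+1$ with $m$-exponents $-\frac{2}{4-2\tau+\zeta\tau}$ and $-\frac{2}{2+\zeta}$, matching the first two entries. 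Collecting constants gives $C_3'$ and establishes~\eqref{SampleErrorBound}. For~\eqref{emperrorBound}, rearranging the same bound gives $\tfrac12({\mathcal E}(g)-{\mathcal E}(f_\rho^V)) - \frac1m\sum_i f(z_i) \le C_3'\log(2/\delta)\max\{\cdots\}$, hence ${\mathcal E}_{\bf z}(f_\rho^V)-{\mathcal E}_{\bf z}(g) = -\frac1m\sum_i f(z_i)\le C_3'\log(2/\delta)\max\{\cdots\}$ since ${\mathcal E}(g)-{\mathcal E}(f_\rho^V)\ge 0$; taking the supremum over $g\in B_{\widetilde R}$ finishes the proof.

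The main obstacle I anticipate is not the concentration step itself — that is packaged in Lemma~\ref{concenIneq} — but the covering-number reduction: correctly identifying the local Lipschitz constant $L_{\widetilde R}$ of the loss on $B_{\widetilde R}$ (which is only polynomial, not uniform, in $\widetilde R$), propagating it to the right power of $\widetilde R$ inside $a$, and then verifying that the resulting $\eta$ from the lemma together with the $(cb/m)^{1/(2-\tau)}$ term produces exactly the three-term maximum in the statement. Everything else is routine bookkeeping of constants and exponents.
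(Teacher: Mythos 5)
Your proposal is correct and follows essentially the same route as the paper's proof: the same recentered class ${\mathcal G}$, the same application of Lemma \ref{concenIneq} with $c=c_\tau\widetilde{R}^{2+q-\tau}$, $B\asymp\widetilde{R}^{q+1}$ and $a\asymp\widetilde{R}^{(q+1)\zeta}$ obtained by the local Lipschitz/rescaling covering argument, the same use of the elementary inequality \eqref{ElementaryInequ}, and the same rearrangement (using ${\mathcal E}(g)-{\mathcal E}(f_\rho^V)\ge 0$) to pass from \eqref{SampleErrorBound} to \eqref{emperrorBound}. The only cosmetic difference is that the local Lipschitz bound on $B_{\widetilde R}$ should be justified by convexity of $V(y,\cdot)$ together with the growth condition on its left derivative (the loss need not be differentiable, so ``mean value theorem'' is a slight misnomer), which gives exactly the constant $c_q(1+(\kappa\widetilde R)^q)$ you use.
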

\begin{proof}
We first apply Lemma \ref{concenIneq} to the function set
$$ {\mathcal G} = \left\{f(x, y) = V(y, g(x)) - V(y, f_\rho^V(x)): \ g\in B_{\widetilde{R}}\right\}. $$
Condition (\ref{varianceexpect}) tells us that with $c= c_\tau \widetilde{R}^{2 + q-\tau}$, each function $f \in {\mathcal G}$ satisfies
$\mE (f^2)\le c (\mE f)^\tau$. Also, condition (\ref{EqCond1}) implies that $\|f\|_\infty $ is bounded by
$ \widetilde{M} := C'_1 \widetilde{R}^{q +1}.$
Notice from $(\ref{EqCond1})$ that for $f,f'\in \mathcal{G},$
$$|f(x,y) - f'(x,y)| = |V(y,g(x)) - V(y,g'(x))| \leq c_q (1 + \kappa^q) \widetilde{R}^{q} |g(x) - g'(x)|,$$
there holds
$$\mathcal{N}(\mathcal{G},\epsilon,d_{2,{\bf z}}) \leq \mathcal{N}\left(B_{\widetilde R},{\epsilon \over c_q (1 + \kappa^q) \widetilde{R}^{q}},d_{2,{\bf z}}\right) \leq \mathcal{N}\left(B_{1},{\epsilon \over c_q (1 + \kappa^q) \widetilde{R}^{q+1}},d_{2,{\bf z}}\right).$$
Hence, condition (\ref{capacityB}) yields the covering number condition in Lemma \ref{concenIneq} with $a= c_\zeta c_q^\zeta (1 + \kappa^q)^\zeta \widetilde{R}^{(q+1)\zeta}$.
So we apply Lemma \ref{concenIneq} and find that with confidence at least $1-\frac{\delta}{2}$, there holds for every $f\in {\cal G}$,
$$ \mE (f)  - {1\over m}\sum_{i=1}^m f(z_i) \leq {1\over 2}\eta^{1-\tau}(\mE f)^\tau+ c_\zeta' \eta +
2\Big({c_\tau \widetilde{R}^{2 + q-\tau} \log \frac{2}{\delta} \over m}\Big)^{1 \over 2-\tau}+{18 \widetilde{M} \log \frac{2}{\delta} \over m}, $$
where
\bea
\eta &=&\max\left\{\left(c_\tau\widetilde{R}^{2 + q-\tau}\right)^{2- \zeta \over 4-2\tau +\zeta \tau} \left(\frac{c_\zeta c_q^\zeta (1 + \kappa^q)^\zeta \widetilde{R}^{(q+1)\zeta}}{m}\right)^{2 \over 4-2\tau+\zeta \tau},\right. \\
&& \left. \widetilde{M}^{2- \zeta \over 2+\zeta}\left(\frac{c_\zeta c_q^\zeta (1 + \kappa^q)^\zeta \widetilde{R}^{(q+1)\zeta}}{m}\right)^{2\over 2+\zeta}\right\} \\
&\leq& C'_2 \max\left\{\left(\frac{\widetilde{R}^{\frac{q(2 + \zeta) + (4-2\tau +\zeta \tau)}{2}}}{m}\right)^{\frac{2}{4-2\tau+\zeta \tau}}, \ {\widetilde{R}^{q+1} \over m^{\frac{2}{2+\zeta}}}\right\},
\eea
 where $C'_2$ is the constant given by
$$\left(c_\tau^{2- \zeta} c_\zeta^2 c_q^{2\zeta} (1 + \kappa^q)^{2\zeta} \right)^{1 \over 4-2\tau +\zeta \tau} + {C'_1}^{2- \zeta \over 2+\zeta}\left(c_\zeta c_q^\zeta (1 + \kappa^q)^\zeta \right)^{2\over 2+\zeta} .$$
Apply the elementary inequality (\ref{ElementaryInequ}) which yields $\eta^{1-\tau}(\mE f)^\tau \leq \eta + \mE f,$ and notice that $\mE (f) = {\mathcal E}(g) - {\mathcal E}(f_\rho^V)$ while ${1\over m}\sum_{i=1}^m
f(z_i) = {\mathcal E}_{\bf z}(g)- {\mathcal E}_{\bf z}(f_\rho^V)$. We get that with confidence at least $1-\frac{\delta}{2}$, there holds for every $g\in B_{\widetilde{R}},$ we have
\be\begin{split}
&\left({\mathcal E}(g) - {\mathcal E}(f_{\rho}^V)\right) - \left({\mathcal E}_{\bf z}(g) - {\mathcal E}_{\bf z}(f_{\rho}^V)\right)\\
\leq& \left({1\over 2} + c_\zeta'\right) \eta + {1\over 2} \left({\mathcal E}(g) - {\mathcal E}(f_\rho^V)\right)  +
2\Big({c_\tau \widetilde{R}^{2 + q-\tau} \over m}\Big)^{1 \over 2-\tau}\log \frac{2}{\delta} +{18 \widetilde{M} \log \frac{2}{\delta} \over m}, \end{split}
\nonumber,\ee
which leads to (\ref{SampleErrorBound}) with
$$C_3' = \left({1\over 2} + c_\zeta'\right) C_2' + 2{c_\tau }^{1 \over 2-\tau} + 18 \widetilde{M}.$$
Now, introducing (\ref{SampleErrorBound}) into the equality
\bea
{\mathcal E}_{\bf z}(f_{\rho}^V) - {\mathcal E}_{\bf z}(g) = \left\{\left({\mathcal E}(g) - {\mathcal E}(f_{\rho}^V)\right) - \left({\mathcal E}_{\bf z}(g) - {\mathcal E}_{\bf z}(f_{\rho}^V)\right)\right\} -\left({\mathcal E}(g) - {\mathcal E}(f_{\rho}^V)\right),
\eea
with ${\mathcal E}(g) - {\mathcal E}(f_\rho^V) \geq 0$ and by recalling the definition of $\mathcal{M}_{\bf z}(\widetilde{R})$, we can derive (\ref{emperrorBound}). The proof is completed.
\end{proof}

\subsection{Deriving the Finite Sample Bounds}
We have the following result, which will be used for the proof of Theorem \ref{MainRates}.
\begin{pro}\label{MainEstimates}
 Assume (\ref{EqCond1}) with $q \geq 0$, (\ref{varianceexpect}) with  $\tau\in[0,1]$, (\ref{decayapprox}) with $\beta \in (0,1]$ and (\ref{capacityB}) with $\zeta\in(0,2)$. Let $\eta_t=\eta_1 t^{-\theta}$ with $0< \theta <1$ satisfying $\theta > \frac{q}{q+1}$ and $\eta_1$ satisfying
(\ref{restr1}). Let $f_* \in \mathcal{H}_K$ be such that $\|f_*\|_K \leq R$, where $R \geq 1.$
If $1 \leq R \leq T^{\frac{1-\theta}{2}}$ and $T^{\frac{q (1-\theta)}{2}} m^{- \frac{2}{2+\zeta}}\leq 1,$ then with confidence $1-\delta$, we have
\be
{\mathcal E}(f_T) - {\mathcal E}(f_\rho^V)
\leq \widetilde{C}_3 \log \frac{2}{\delta} \max\left\{\left(\frac{T^{\frac{(1-\theta)(q(2 + \zeta) + (4-2\tau +\zeta \tau))}{4}}}{m}\right)^{\frac{2}{4-2\tau+\zeta \tau}}, \ R^{2} \Lambda_T,\ {\mathcal A}(f_*) \right\}.
\label{MainBound}
\ee
where $\widetilde{C}_3$ is a constant independent of $T, m, \delta$, given explicitly in the proof.
\end{pro}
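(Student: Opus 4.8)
The plan is to substitute the computational estimate of Proposition~\ref{lemmaConvexOptimi2} and the two sample‑error estimates of Lemmas~\ref{SampleErrorBound1} and~\ref{SampleErrorBound2} into the error decomposition of Lemma~\ref{Eq1} with $f_*$ the given reference function, and then to collapse the resulting maxima using the two side conditions on $R$ and $T$. Write $\widetilde R = T^{\frac{1-\theta}{2}}$. Since $\theta>\frac{q}{q+1}$, Lemma~\ref{Lem4} applies and gives $\|f_t\|_K\le\widetilde R$ for all $t\le T$; in particular $f_T\in B_{\widetilde R}$, and since $T>2$, $\theta<1$ we have $\widetilde R>1$, so Lemmas~\ref{SampleErrorBound1} and~\ref{SampleErrorBound2} are available with this $\widetilde R$.

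I would first record the three probabilistic ingredients, each holding with confidence $1-\frac{\delta}{2}$ and hence, after a union bound, all holding with confidence $1-\delta$. Lemma~\ref{SampleErrorBound1} bounds $\mathcal{F}_{\bf z}(f_*)$ by $(C_1'+2\sqrt{c_\tau})\log\frac{2}{\delta}\max\{R^{q+1}/m,\ (R^{2+q-\tau}/m)^{1/(2-\tau)},\ \mathcal{A}(f_*)\}$. Lemma~\ref{SampleErrorBound2} applied on $B_{\widetilde R}$, with $g=f_T$, controls the recentered quantity $(\mathcal{E}(f_T)-\mathcal{E}(f_\rho^V))-(\mathcal{E}_{\bf z}(f_T)-\mathcal{E}_{\bf z}(f_\rho^V))$ through inequality~(\ref{SampleErrorBound}) by $\tfrac12(\mathcal{E}(f_T)-\mathcal{E}(f_\rho^V))+C_3'\log\frac{2}{\delta}\,\Psi_T$, and through~(\ref{emperrorBound}) gives $\mathcal{M}_{\bf z}(\widetilde R)\le C_3'\log\frac{2}{\delta}\,\Psi_T$, where $\Psi_T$ denotes the maximum of the three quantities appearing there with $\widetilde R$ substituted.

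Next I would apply Lemma~\ref{Eq1}. Its middle term $\mathcal{E}_{\bf z}(f_T)-\mathcal{E}_{\bf z}(f_*)$ is bounded by Proposition~\ref{lemmaConvexOptimi2} in terms of $\mathcal{M}_{\bf z}(\widetilde R)$, $\mathcal{F}_{\bf z}(f_*)$, $\mathcal{A}(f_*)$, $\frac{\|f_*\|_K^2}{2\eta_1}\Lambda_T$ and $\widetilde C_2\Lambda_T$; its first bracketed term is handled by the two estimates of Lemma~\ref{SampleErrorBound2} and by Lemma~\ref{SampleErrorBound1}. Collecting everything, using $\|f_*\|_K\le R$, $R\ge1$ (so $\widetilde C_2\Lambda_T\le\widetilde C_2 R^2\Lambda_T$), and the boundedness of $\Lambda_T$ and of $c_\theta'\Lambda_T$, Lemma~\ref{Eq1} yields an inequality of the form $\mathcal{E}(f_T)-\mathcal{E}(f_\rho^V)\le\tfrac12(\mathcal{E}(f_T)-\mathcal{E}(f_\rho^V))+C\log\frac{2}{\delta}\big(\Psi_T+R^2\Lambda_T+\mathcal{A}(f_*)+R^{q+1}/m+(R^{2+q-\tau}/m)^{1/(2-\tau)}\big)$ for an explicit constant $C$, and the $\tfrac12(\mathcal{E}(f_T)-\mathcal{E}(f_\rho^V))$ term is absorbed on the left at the cost of a factor $2$.

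The final step is to reduce the right‑hand bracket to the three advertised terms. Substituting $\widetilde R=T^{\frac{1-\theta}{2}}$, the first member of $\Psi_T$ becomes exactly $\big(T^{(1-\theta)(q(2+\zeta)+(4-2\tau+\zeta\tau))/4}/m\big)^{2/(4-2\tau+\zeta\tau)}$. The second member, $\widetilde R^{q+1}/m^{2/(2+\zeta)}$, is dominated by the first: comparing the exponents of $T$ and of $m$, both differences factor through the nonnegative number $(2-\zeta)(1-\tau)$, and after cancelling it the needed inequality is precisely $T^{q(1-\theta)/2}m^{-2/(2+\zeta)}\le1$, the second hypothesis. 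The third member, $(\widetilde R^{2+q-\tau}/m)^{1/(2-\tau)}$, is dominated by the first because $4-2\tau+\zeta\tau\ge2(2-\tau)$ and $(2+\zeta)(2-\tau)\ge4-2\tau+\zeta\tau$ (both from $\zeta,\tau\ge0$) together with $\widetilde R\ge1$. Finally, since $1\le R\le\widetilde R$, $R^{q+1}/m\le\widetilde R^{q+1}/m^{2/(2+\zeta)}$ and $(R^{2+q-\tau}/m)^{1/(2-\tau)}\le(\widetilde R^{2+q-\tau}/m)^{1/(2-\tau)}$, so these two terms are also absorbed into the first. Tracking the multiplicative constants from Lemmas~\ref{SampleErrorBound1}, \ref{SampleErrorBound2} and Proposition~\ref{lemmaConvexOptimi2} (the $C_1',c_\tau,C_3'$ and the $c_\theta',\widetilde C_1,\widetilde C_2,1/(1-\theta),1/\eta_1$ appearing there) gives the explicit $\widetilde C_3$. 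The only delicate point is this bookkeeping — verifying that the power of $T$ in the first term is exactly as stated and that the two side conditions are precisely what is needed to make it dominate — since the genuinely hard estimate (the computational error of the last iterate) is already supplied by Proposition~\ref{lemmaConvexOptimi2}.
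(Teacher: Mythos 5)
Your proposal is correct and follows essentially the same route as the paper: plug Proposition~\ref{lemmaConvexOptimi2} and Lemmas~\ref{SampleErrorBound1}--\ref{SampleErrorBound2} (with $\widetilde R=T^{\frac{1-\theta}{2}}$, $g=f_T$, and a union bound) into the decomposition of Lemma~\ref{Eq1}, absorb the $\tfrac12({\mathcal E}(f_T)-{\mathcal E}(f_\rho^V))$ term, and use $\widetilde R^{q}m^{-2/(2+\zeta)}\le 1$, $\widetilde R\ge 1$, $R\le\widetilde R$ to let the first term of the max dominate the remaining sample-error terms. Your exponent-by-exponent domination checks match the paper's ratio computations, so the argument is complete up to the same constant bookkeeping the paper performs.
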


\begin{proof}
Recall Lemma~\ref{Eq1}.
Let $\tilde{R} = T^{1 - \theta \over 2}.$ Introducing with (\ref{generalB}), we have
\begin{eqnarray}
&& {\mathcal E}(f_T) - {\mathcal E}(f_\rho^V) \leq \left\{\left({\mathcal E}(f_T) - {\mathcal E}(f_{\rho}^V) \right) - \left({\mathcal E}_{\bf z} (f_T) - {\mathcal E}_{\bf z}(f_{\rho}^V)\right)\right\} + \frac{3}{1-\theta} \mathcal{M}_{\mathbf z}\left(\tilde{R}\right) \nonumber \\
&& \qquad + \left(c'_\theta \Lambda_T  + {4 - \theta \over 1 - \theta}\right) \left({\mathcal F}_{\bf z} (f_*) + {\mathcal A}(f_*)\right)  + \frac{R^2}{2 \eta_1} \Lambda_T + \widetilde{C}_2 \Lambda_T. \nonumber \end{eqnarray}
Applying lemmas \ref{SampleErrorBound1} and \ref{SampleErrorBound2} with $g = f_T$, with $R\in [1,\widetilde{R}]$, we know that with confidence at least $1-\delta$,
\begin{eqnarray}
 {\mathcal E}(f_T) - {\mathcal E}(f_\rho^V)
&\leq& C_4'\log \frac{2}{\delta} \max\biggl\{\left(\frac{\widetilde{R}^{\frac{q(2 + \zeta) + (4-2\tau +\zeta \tau)}{2}}}{m}\right)^{\frac{2}{4-2\tau+\zeta \tau}}, \ {\widetilde{R}^{q+1}  \over m^{2 \over 2+\zeta}}, \nonumber \\
&& \quad  {\widetilde{R}^{2 + q-\tau \over 2-\tau} \over m^{1 \over 2-\tau}}, \  R^2 \Lambda_T,\ {\mathcal A}(f_*)\biggr\}
+ {1\over 2} \left({\mathcal E}(f_T) - {\mathcal E}(f_\rho^V)\right) ,\label{totalB}
\end{eqnarray}
where $C'_4$ is the constant given by
\begin{eqnarray*}
C_4' = \frac{4-\theta}{1-\theta} C_3' + \left( c'_\theta + \frac{4-\theta}{1-\theta} \right) \left(1 + C_1' + 2\sqrt{c_{\tau} }\right) + \frac{1}{2 \eta_1} + \widetilde{C}_2.
\end{eqnarray*}
Since $\widetilde{R}^{q} m^{-2 /(2+\zeta)}\leq 1$ and $\tau\in[0,1],\zeta\in(0,2)$ one finds
$$\left(\frac{\widetilde{R}^{\frac{q(2 + \zeta) + (4-2\tau +\zeta \tau)}{2}}}{m}\right)^{\frac{2}{4-2\tau+\zeta \tau}} \cdot \ {m^{2 \over 2+\zeta} \over \widetilde{R}^{q+1} } = \left\{ {\widetilde{R}^{q} \over  m^{2 \over 2+\zeta}}\right\}^{\frac{-(1-\tau)(2-\zeta)}{4-2\tau+\zeta \tau}}  \geq 1,$$
and $$ \left(\frac{\widetilde{R}^{\frac{q(2 + \zeta) + (4-2\tau +\zeta \tau)}{2}}}{m}\right)^{\frac{2}{4-2\tau+\zeta \tau}} \cdot  {m^{1 \over 2-\tau} \over \widetilde{R}^{2 + q-\tau \over 2-\tau} } = \left(\widetilde{R}^{2q (1-\tau)}m^{\tau}\right)^{\frac{\zeta}{(2-\tau)(4-2\tau+\zeta \tau)}} \geq 1.$$
Subtracting ${1\over 2} \left({\mathcal E}(f_T) - {\mathcal E}(f_\rho^V)\right)$ from both sides of (\ref{totalB}), and setting $\widetilde{C}_3 = 2 C'_4,$ we get the desired results.
\end{proof}

Now we are in a position to prove the explicit probabilistic upper bounds stated in Theorem \ref{MainRates}.

\proof[Proof of Theorem \ref{MainRates}]
We will use Proposition \ref{MainEstimates} with $f_* = f_{\lambda}$ to prove our result.
Define a power index $\tilde{\theta}$ as
\begin{equation}\label{tildeThetaDef}
\tilde{\theta} = \left\{\begin{array}{ll}
1-\theta, & \hbox{when} \ \theta \geq \frac{q+1}{q+2}, \\
\theta(1+q) -q, & \hbox{when} \ \theta < \frac{q+1}{q+2}, \end{array}\right.
\end{equation}
Comparing this with the definition (\ref{LamnbdaDef}) for $\Lambda_T$, we see that
$$ \Lambda_T = \left\{\begin{array}{ll}
T^{-\tilde{\theta}}, & \hbox{when} \ \theta > \frac{q+1}{q+2}, \\
T^{-\tilde{\theta}} \log T, & \hbox{when} \ \theta \leq \frac{q+1}{q+2}. \end{array}\right.
$$
From the definition of $\mathcal{D}(\lambda)$,
we have
\be \label{normflambda} \mathcal{A}(f_{\lambda}) \leq \mathcal{D}(\lambda) \qquad \mbox{and} \qquad \lambda \|f_{\lambda}\|_K^2 \leq \mathcal{D}(\lambda),\ee
which implies $\|f_{\lambda}\|_K \leq \sqrt{ \mathcal{D}(\lambda) / \lambda } = R.$
Balancing the orders of the last two terms of (\ref{MainBound}) by setting
\be
\lambda = \Lambda_T, \label{RDef}
\ee
we find that the last two terms of (\ref{MainBound}) can be bounded as
$$ \max\left\{R^2 \Lambda_T,\ {\mathcal A}(f_{\lambda})\right\} \leq \mathcal{D}(\lambda) \leq c_{\beta} \lambda^{\beta} \leq \left\{\begin{array}{ll}
c_\beta T^{-\beta \tilde{\theta}}, & \hbox{when} \ \theta > \frac{q+1}{q+2}, \\
c_\beta T^{-\beta \tilde{\theta}} \log T, & \hbox{when} \ \theta \leq \frac{q+1}{q+2}. \end{array}\right.
$$
Then we balance the above main part with the first term of (\ref{MainBound}) by setting
$$ \left(\frac{T^{\frac{(1-\theta)(q(2 + \zeta) + (4-2\tau +\zeta \tau))}{4}}}{m}\right)^{\frac{2}{4-2\tau+\zeta \tau}} = T^{-\beta \tilde{\theta}}. $$
This leads us to choose $T$ to be the integer part of
\be
\lceil m^{\gamma} \rceil, \ \hbox{where} \
 \gamma := \frac{2}{\left(\frac{1-\theta}{2} + \beta \tilde{\theta} \right) (4-2\tau+\zeta \tau) + \frac{q (2 + \zeta)(1-\theta)}{2}}. \label{choiceT}
\ee
With this choice, the main part of (\ref{MainBound}) can be bounded as
\begin{eqnarray*}
&& \max\left\{\left(\frac{T^{\frac{(1-\theta)(q(2 + \zeta) + (4-2\tau +\zeta \tau))}{4}}}{m}\right)^{\frac{2}{4-2\tau+\zeta \tau}}, \ R^2 \Lambda_T,\ {\mathcal A}(f_*)\right\} \\
&& \quad \leq \left\{\begin{array}{ll}
2 c_\beta m^{-\beta \tilde{\theta} \gamma}, & \hbox{when} \ \theta > \frac{q+1}{q+2}, \\
2 \gamma c_\beta m^{- \beta \tilde{\theta} \gamma } \log m, & \hbox{when} \ \theta \leq \frac{q+1}{q+2}. \end{array}\right.
\end{eqnarray*}

Notice from the definition of $\tilde{\theta}$, one can easily prove that $\tilde{\theta} \leq 1 - \theta$. Then $R /\sqrt{c_{\beta}} \leq \sqrt{ \lambda^{\beta - 1}} \leq \Lambda_T^{\beta - 1 \over 2} \leq T^{\frac{1 - \theta}{2}}$ and the restriction for $R$ in Theorem \ref{MainEstimates} is satisfied up to constants. The restriction $T^{\frac{q (1-\theta)}{2}} m^{- \frac{2}{2+\zeta}}\leq 1$ is also satisfied because
$$ T^{\frac{q (1-\theta)}{2}} \leq m^{\frac{q (1-\theta) \gamma}{2}} \leq m^{\frac{2}{2+\zeta}}. $$
Observe that $\gamma \leq \frac{2}{1-\theta}$. So by Theorem \ref{MainEstimates}, with confidence $1-\delta$, we have
$$
{\mathcal E}(f_T) - {\mathcal E}(f_\rho^V)
\leq \left\{\begin{array}{ll}
2 c_\beta \widetilde{C}_3 m^{-\beta \tilde{\theta} \gamma} \log \frac{2}{\delta}, & \hbox{when} \ \theta > \frac{q+1}{q+2}, \\
\frac{4 c_\beta \widetilde{C}_3}{1-\theta} m^{-\beta \tilde{\theta} \gamma} \log m  \log \frac{2}{\delta}, & \hbox{when} \ \theta \leq \frac{q+1}{q+2}. \end{array}\right.
$$
Observe that the power index $\beta \tilde{\theta} \gamma$ is
$$ \beta \tilde{\theta} \gamma = \left\{\begin{array}{ll}
\frac{\beta}{\beta (2 -\tau + \zeta \tau/2) + \left\{\frac{2 -\tau + \zeta \tau/2}{2} + \frac{q(1+ \zeta/2)}{2}\right\}}, & \hbox{when} \ \theta \geq \frac{q+1}{q+2}, \\
\frac{\beta}{\beta (2 -\tau + \zeta \tau/2) +  \frac{1-\theta}{\theta(1+q) -q} \left\{\frac{2 -\tau + \zeta \tau/2}{2} + \frac{q(1+ \zeta/2)}{2}\right\}}, & \hbox{when} \ \theta < \frac{q+1}{q+2}, \end{array}\right.
$$
while the index $\gamma$ can be expressed by (\ref{indexgamma}).
Then our desired learning rates are verified by setting the constant $\widetilde{C} =2 c_\beta  \widetilde{C}_3$ when $\theta > \frac{q+1}{q+2}$ while
$\widetilde{C} =\frac{4 c_\beta \widetilde{C}_3}{1-\theta}$ when $\theta \leq \frac{q+1}{q+2}$. The proof of Theorem \ref{MainRates} is complete.
\qed

\begin{proof}[Proof of Theorem \ref{MainRatesAverge}]
We only sketch the proof for the case $g_T = a_T$.
It is easy to prove the following upper bound for $\|a_T\|_K$ by applying Lemma \ref{Lem4}:
$$\|a_T\|_K \leq T^{1-\theta \over 2}.$$
With the upper bound on $a_T$ and Lemma \ref{empexcessAverge}, a similar argument as that for Theorem \ref{MainRates}, one can prove the results. We omit the details.
\end{proof}

\begin{proof}
  [Proof of Theorem \ref{MainRatesSmoothLoss}]
  With lemmas \ref{ERMDifference}, \ref{BoundSmoothLoss}, and a similar approach as that for Theorem \ref{MainRates}, we can prove the convergence results for smooth loss functions. We omit the details.
\end{proof}

\proof[Proof of Theorem \ref{hingeRates1}] We use Theorem \ref{MainRates} to
prove the results. The hinge loss satisfies (\ref{EqCond1}) with $q = 0$ and $c_q = \frac{1}{2}$, $|V|_0 =1$ and $\|f_{\rho}^V\|_{\infty}=1$ where $f_{\rho}^V$ is the Bayes rule $f_c$. Condition (\ref{varianceexpect}) is valid with
$\tau =0$ and $c_\tau =1$. Since $\theta>1/2,$ by simple
calculations, one finds that
$\gamma=\frac{1}{(1-\theta)(2\beta+1)}$ and
$\alpha=\frac{\beta}{2\beta+1}.$

Using the comparison theorem from \cite{Zhang2004}, we have
$${\mathcal R}\left(\hbox{sign}(f_T)\right) - {\mathcal R}(f_c)
\leq {\mathcal E}(f_T) - {\mathcal E}(f_\rho^V).
$$
So the desired probabilistic upper bound (\ref{HingeLossBound}) for the hinge loss follows from
the above inequality and Theorem \ref{MainRates}.

It remains to prove the second part of the theorem.
Since $0< \epsilon < \frac{1}{3}$, the restriction $\beta > \frac{1 - 3\epsilon}{1 + 6 \epsilon} $ for the approximation order tells us that the index
$$\alpha = {\beta \over 2\beta + 1} = {1 \over 2 + 1/\beta} \geq {1 \over 3} - \epsilon.$$
The proof of Theorem \ref{hingeRates1} is complete. \qed

\proof[Proof of Theorem \ref{hingeRates}]
Since $0< \epsilon < \frac{1}{3}$, the restriction $\beta > \frac{4 - 3\epsilon}{4 + 6 \epsilon} $ for the approximation order tells us that the parameter $\theta$ satisfies $\frac{1}{2}< \theta < 1$ and the index
$$\gamma = {1 \over (1-\theta) (2\beta + 1)} = \frac{2}{3} + \epsilon.$$ Finally we find that the index
$$ \alpha =  \frac{\beta}{2 \beta + 1} = \frac{1}{2 + 1/\beta} \geq \frac{1}{3}- \frac{\epsilon}{4}. $$
So the desired probabilistic upper bound follows from the first conclusion of Theorem \ref{hingeRates1}. The proof of Theorem \ref{hingeRates} is complete.
\qed

\section{Conclusions}
This paper proposes and studies iterative regularization approaches for learning with convex loss functions. More precisely, we study how regularization can be achieved by early stopping an empirical iteration induced  by the subgradient method, or gradient descent in the case the loss is also smooth. Finite sample bounds are established providing indications on how to suitably choose the  step-size and the stopping rule. Differently to classical results on the subgradient method,we analyze the behavior of the last iterate showing it has essentially the same properties of  the average, to the best, iterate. These results provide a theoretical foundation for early stopping with convex losses.

Beyond the analysis in the paper our error decomposition provides an approach to incorporate
statistical and optimization aspects in the analysis of learning algorithms. While a natural development will be to sharpen the bounds and perform extensive empirical tests, we hope the study in the paper can help deriving novel and faster algorithms, for example analyzing accelerations \cite{nest04}, or distributed approaches, within the framework we propose.

\section*{Acknowledgments}
The work described in this paper is supported partially by the Research Grants Council
of Hong Kong [Project No. CityU 11304114] and by National Natural Science Foundation of China under Grant 11461161006. LR is supported by the FIRB project RBFR12M3AC ÒLearning meets time: a new computational approach for learning in dynamic systems?and the Center for Minds, Brains and Machines (CBMM), funded by NSF STC award CCF-1231216.

\appendix

\bibliographystyle{unsrt}

\end{document}